\newtheorem{theorem}{Theorem}[section]
\newtheorem{lemma}[theorem]{Lemma}
\newtheorem{proposition}[theorem]{Proposition}
\newtheorem{corollary}[theorem]{Corollary}
\newtheorem{remark}[theorem]{Remark}
\newtheorem{definition}[theorem]{Definition}
\DeclareMathOperator*{\argmin}{arg\,min}
\newcommand{\E}{\mathbf{E}}
\tikzstyle{terminator} = [rectangle, draw, text centered, rounded corners, minimum height=2em]
\tikzstyle{process}    = [rectangle, draw, text centered, minimum height=2em]
\tikzstyle{decision}   = [diamond, draw, text centered, minimum height=2em]
\tikzstyle{data}       = [trapezium, draw, text centered, trapezium left angle=60, trapezium right angle=120, minimum height=2em]
\tikzstyle{connector}  = [draw, -latex']
\title{Stochastic Path Planning in Correlated Obstacle Fields}
\author{
	Li Zhou$^{*}$ ~\&~ Elvan Ceyhan\thanks{Department of Mathematics and Statistics, Auburn University, Auburn, AL 36849, USA \\
		\texttt{ emails: lzz0062@auburn.edu (corresponding author), ceyhan@auburn.edu}}
}
\date{}
\newcommand{\funding}{\footnotetext{This work was supported by ONR Grant N00014-22-1-2572 and NSF Award \#2319157. }}
\begin{document}
\maketitle
\funding

\begin{abstract}
\noindent 
We introduce the Stochastic Correlated Obstacle Scene (SCOS) problem,
a navigation setting with spatially correlated obstacles of uncertain blockage status,
realistically constrained sensors that provide noisy readings and costly disambiguation.
Modeling the spatial correlation with Gaussian Random Field (GRF),
we develop Bayesian belief updates that refine blockage probabilities,
and use the posteriors to reduce search space for efficiency.
To find the optimal traversal policy,
we propose a novel two-stage learning framework.
An offline phase learns a robust base policy via optimistic policy iteration,
augmented with information bonus to encourage exploration in informative regions,
followed by an online rollout policy with periodic base updates via
a Bayesian mechanism for information adaptation.
This framework supports both Monte Carlo point estimation and
distributional reinforcement learning (RL) to learn full cost distributions,
leading to stronger uncertainty quantification.
We establish theoretical benefits of correlation-aware updating and
convergence property under posterior sampling.
Comprehensive empirical evaluations across varying obstacle densities,
sensor capabilities demonstrate consistent performance gains over baselines. 
This framework addresses navigation challenges in environments with
adversarial interruptions or clustered natural hazards.
\end{abstract}

\noindent \textbf{Keywords:} Stochastic navigation, sequential decision making, Bayesian update, distributional reinforcement learning, 
Gaussian Random Fields, mutual information

\section{Introduction}\label{sec: introduction}
Navigation and path planning in uncertain, partially observable environments presents a critical challenge in operations research and robotics,
particularly when complicated by additional factors including adversarial interruptions and environmental obstacles.
Applications span network interdiction problems with incomplete information (\cite{azizi2024shortest}, \cite{smith2020survey}), 
autonomous driving \citep{wang2024survey}, disaster response and fire evacuation \citep{shiri2019online},
and adversarial route planning in military operations (\cite{berger2012new}, \cite{hickling2023robust}).
In such contexts, obstacles may have two statuses: false obstacles correspond to false alarms,
while true obstacles represent actual threats requiring costly detours or clearance.
Classic models, building upon foundational frameworks such as the Canadian Traveler's Problem (CTP) \citep{bar1991canadian}
and the Stochastic Obstacle Scene (SOS) problem \citep{papadimitriou1991shortest}
typically simplify the problem by making strong assumptions rarely met in practice:
(i) obstacle blockages are independent \citep{alkaya2021heuristics}, ignoring potential spatial patterns arising from adversarial behavior or terrain features \citep{alkaya2021heuristics, pitilakis2016systemic},
and (ii) sensing is often idealized, assuming noiseless reveals with no explicit sensing cost or
global sensing with high accuracy \citep{lamarre2025risk},
whereas real-world sensors have limited detection range and varying reliability levels.

To address these limitations, we introduce the Stochastic Correlated Obstacle Scene (SCOS) model,
which extends the SOS framework by explicitly incorporating correlation among obstacles and 
realistic sensors with limited range and varying accuracy level.
In this setting,
a navigating agent moves through an environment containing spatially correlated disk-shaped obstacles (e.g., danger zones)
whose blockage nature is revealed only through noisy sensor readings within a limited sensing range and costly disambiguation upon contact, 
introducing a strategic tradeoff between information collection costs and traversal risks. 
These enhancements make the SCOS model particularly suitable for real-world challenges, 
including navigation through natural disaster zones or defense operations in adversarial context.

The SCOS presents two major decision-making challenges: 
(i) the agent must optimally balance \emph{exploration}, 
gathering information from highly uncertain paths to potentially reduce future traversal cost, 
and \emph{exploitation}, taking a low-risk path with lower immediate expected cost based on current knowledge.
(ii) The problem requires effective handling of correlation information. 
Since sensor readings and disambiguation outcomes not only reveal local blockage information 
but also provide insights about correlated, unobserved regions. 
These challenges are difficult to address using existing approaches \citep{eyerich2010high,lim2017shortest,blumenthal2023rollout, lamarre2025risk}, 
which exhibit various limitations including limited robustness under high uncertainty, 
lack of theoretical performance guarantees
or frameworks for continuous belief refinement under spatial correlations.

To tackle these challenges,
we develop a statistical inference and computation pipeline that combines Bayesian inference with reinforcement learning (RL) and scalable search:
(i) We derive GRF-based belief updates that refine posterior blockage probabilities from noisy, local signals and show how the resulting posteriors support principled \emph{search-space reduction} (pruning) for planning.
(ii)
We propose a two-stage strategy: an \emph{offline} optimistic policy-iteration (OPI) stage with an \emph{information-gain bonus} to direct exploration toward informative regions, followed by an \emph{online} rollout policy with periodic Bayesian updates to adapt as data accrue.
(iii)
Beyond Monte Carlo point estimates, we incorporate distributional RL to learn full cost distributions, improving uncertainty quantification and robustness under high noise/risk.
(iv)
We establish guarantees that correlation-aware beliefs \emph{weakly dominate} their independence-coarsened counterparts in expected total cost,  
information gain is \emph{submodular} under the linear–Gaussian sensing model, justifying greedy sensing policies, 
and the OPI/rollout scheme with \emph{posterior sampling} converges under standard discounted or stochastic-shortest-path (SSP) conditions. 
Extensive simulations across sensing ranges, noise levels, and correlation regimes demonstrate consistent improvements over strong baselines.


The article proceeds as follows: 
Section \ref{sec: related work} reviews related work and Section \ref{sec: c-sos define} formally defines the SCOS problem 
and introduce its formulation as a sequential decision model. 
Section \ref{sec: updating} describes the Bayesian updating process for obstacle information based on Gaussian Random Field (GRF). 
Section \ref{sec: components_policy} introduces three crucial building blocks of our proposed two-stage policy learning framework,
followed by detailed descriptions of the offline learning strategy and online rollout policy with base updates in Sections \ref{sec:offline} and \ref{sec:online}.
Section \ref{sec: simulations} presents empirical evaluations of our approach.

\section{Related Work}\label{sec: related work}
Stochastic path planning problems have been extensively studied,
with the Canadian Traveler Problem (CTP) \citep{bar1991canadian} representing a key graph-theoretical foundation
and providing insights into navigating graphs with incrementally revealed edge statuses.
The Stochastic Obstacle Scene (SOS) problem \citep{papadimitriou1991shortest} extends these insights to
continuous obstacles settings, emphasizing dynamic learning and information gathering cost.
However, simplified assumptions like obstacle independence and unrestricted sensor capability 
have limited the practical applicability of SOS and existing variants (\cite{aksakalli2011}, \cite{aksakalliari2014}, \cite{alkaya2017optimal}, \cite{alkaya2021heuristics}).

\emph{Network Interdiction Problem} (\cite{smith2020survey}, \cite{azizi2024shortest}) shares conceptual similarities with SOS and CTP,
but usually focuses on edges interdiction from an adversary's (i.e., interdictor) perspective with assumed perfect information.
By contrast, our problem provides a complementary aspect.
We prioritize path planning from the traveler's perspective,
managing uncertainty, dynamic exploration costs and spatially correlated obstacles causing realistic area blockage effects. 

Spatial correlation in navigation has been explored under the CTP framework.
The Gaussian Traveler Problem (GTP) \citep{dey2014gauss} models correlations between edge travel time or costs using a Gaussian process (GP),
but assumes all edges are traversable, limiting its applicability to the setting with blockages where costly disambiguation is required.
Similarly, 
the risk-averse CVaR-CTP \citep{lamarre2025risk} allows correlated edge costs and updates from noiseless observations.
The Bayesian Canadian Traveler Problem (BCTP) \citep{lim2017shortest,hou2022dynamic} incorporates correlated edge blockages,
but relies on the strong assumption of a restrictive prior hypothesis space.
In contrast, 
we use Gaussian Random Field (GRF) model to represent spatial correlations, 
as a generalization of Gaussian processes for structured spatial domains. 
This model allows us to model continuous spatial correlations among obstacles without restricting the hypothesis space 
and supports a complete posterior updating under noisy sensing.

To manage exploration-exploitation tradeoffs in uncertain environments, 
deterministic threshold-based policies were proposed (\cite{koenig2002d}, \cite{bnaya2009canadian}, \cite{lim2017shortest}), 
which lack a clear method to determine an appropriate threshold value and overlook the probabilistic information.
Policies based on rollout (\cite{eyerich2010high}, \cite{hou2022dynamic}, \cite{blumenthal2023rollout}) 
and UCT (Upper Confidence Bounds Applied to Trees) related policies (\cite{kocsis2006bandit}, \cite{tolpin2012mcts}) 
improve decision-making by using probabilistic sampling and trajectory simulation. 
However, they are sensitive to the quality of simulation policies and the accuracy level of probabilistic information, 
with no guarantee of convergence to optimality. 
Penalty-based policies modify traversal costs by adding penalties to discourage high-risk paths (\cite{alkaya2015penalty}, \cite{sahin2015comparison}, \cite{alkaya2021heuristics}), 
offering computationally efficient but requiring manual hyperparameter tuning with performance heavily dependent on problem settings.

Reinforcement learning (RL) approaches \citep{sutton2018reinforcement} 
have been successfully applied to balance immediate and long-term gains in navigation problems \citep{yu2013q, polydoros2017survey, wang2020mobile}. 
RL methods can broadly be categorized into model-based and model-free approaches.
Model-based approaches construct a model of environment's dynamics for efficient exploration and fast convergence, 
but suffer from high computational requirements and sensitivity to model accuracy.
Model-free approaches bypass the need for explicit modeling, 
but their reliance on extensive exploration often leads to slower convergence. 
Hence, hybrid approaches combining the strength of both have emerged,
shown to achieve promising performance (\cite{silver2008sample}, \cite{feinberg2018model}, \cite{pinosky2023hybrid}).
 
We build upon these foundations, unifying their strength while addressing limitations, through a framework that 
integrates correlation modeling, efficient exploration, and real-time decision-making, as detailed in the following sections.

\section{The Stochastic Correlated Obstacle Scene Problem (SCOS)}\label{sec: c-sos define}
The Stochastic Correlated Obstacle Scene (SCOS) extends the original SOS framework by modeling correlated obstacles and realistic sensor limitations.
The SCOS models an agent traversing from a starting point to a goal point within a traversal region containing disk-shaped obstacles
whose blockage statuses are initially unknown.

\begin{figure}[H]
	\centering
	\begin{tabular}{cc}
		\begin{subfigure}[b]{0.45\textwidth}
			\centering
			\includegraphics[width=\textwidth]{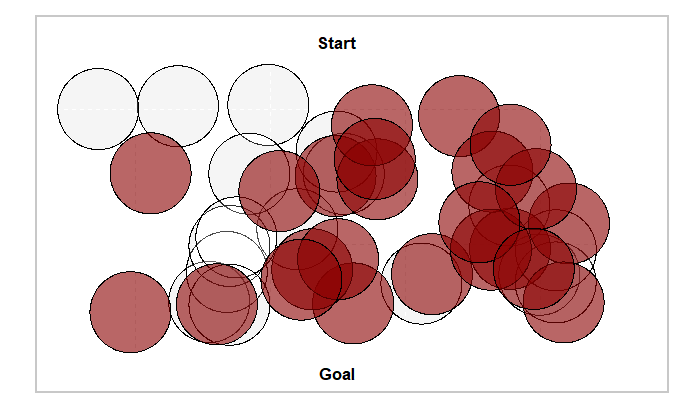}
		\end{subfigure} &
		\begin{subfigure}[b]{0.45\textwidth}
			\centering
			\includegraphics[width=\textwidth]{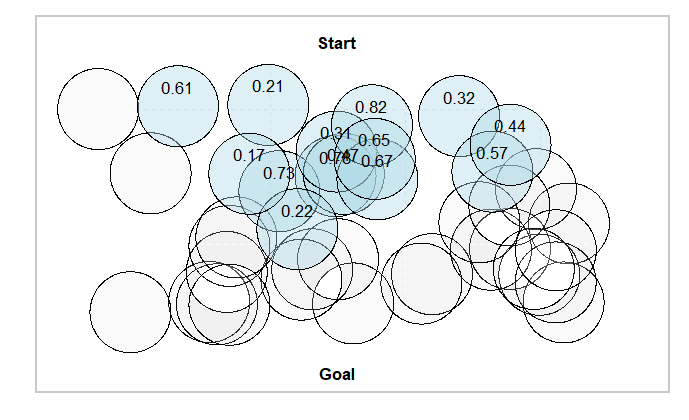}
		\end{subfigure} 
	\end{tabular}
	\caption{SCOS environment with obstacle statuses (left, red and gray disks representing true and false obstacles, respectively) 
		versus with probability estimates in sensing range (right).}
	\label{fig:Example_SCOS}
\end{figure}

Formally, let the two-dimensional traversal region be $\Omega\subset\mathbb{R}^2$, 
with a set of disk-shaped obstacles located at $X=X^F \cup X^O$, where $X^{F}$ and $X^{O}$ represent locations of false and true obstacles, respectively.
False obstacles can be traversed through, while true obstacles block traversal.
Each obstacle $x\in X$ has a fixed $\text{radius}(x)>0$.
An undirected graph $\mathcal{G}$ is imposed over $\Omega$, 
consisting of a set of vertices $\mathcal{V}(\mathcal{G})$ representing discrete navigation locations,
and $\mathcal{E}(\mathcal{G})$ represents a set of feasible undirected edges along which the agent navigates between vertices.
For each edge $e\in \mathcal{E}(\mathcal{G})$, undirected graph implies that $e=(v_i,v_j)$ and $e=(v_j, v_i)$ define the same edge, where $v_i, v_j\in \mathcal{V}(\mathcal{G})$.

The agent is equipped with a noisy sensor providing local blockage probability estimates, $\tilde{\rho}_x\in[0,1]$, of each obstacle $x$
lying within certain range $R$.
These probability estimates can be updated when obstacles are re-encountered, enabling dynamic belief revision.
Figure \ref{fig:Example_SCOS} illustrates this scenario, 
showing the contrast between actual obstacle states and the agent's probabilistic beliefs within its sensing range.
Nearby obstacles exhibit spatial correlations,
motivated by strategic placements or natural clustering.
These dependencies are captured via Gaussian Random Field (GRF), 
where blockage probabilities are inferred from a spatial correlated latent process.
Additionally,
the agent can disambiguate each obstacle's status with certainty once arriving at a vertex with an edge adjacent to the obstacle,  
incurring a disambiguation cost $c(x)$.

The agent can only traverse edges that do not intersect true or ambiguous obstacles
that have not been disambiguated.
When encountering an ambiguous obstacle, the agent performs costly disambiguation,
then either traverses through the obstacle if it is false, 
or takes a detour around the blockage if it is true.
Consequently, the total traversal cost from starting to goal point is a random variable
that depends on the blockage status of encountered obstacles.

Given a starting point designated as a specific vertex, denoted as $s$, and a goal point denoted as $g$, 
the objective is formulated as:
$$\min_{p\in\mathcal{P}_{sg}}\E\left[L_p+C_p\right],$$
where $\mathcal{P}_{sg}$ denotes the set of all paths from $s$ to $g$ in $\mathcal{G}$, $L_p$ represents the random variable denoting the traversal length of a path $p$, and $C_p$ is the random variable indicating the disambiguation costs associated with traversing path $p$.

\subsection{Formulating SCOS as a Sequential Decision Problem} \label{sec: sequential decision formulation}
Because disambiguations and sensor estimates continuously provide new information about the traversal region, 
SCOS is naturally posed as a sequential decision problem.
At each decision step, the agent collects new information and updates the belief about the traversal region,
then chooses the subsequent traversal path and the disambiguation location. 
Previous works typically formulate such problems as deterministic Partially Observable Markov Decision Process (DET-POMDP) 
\citep{bonet2012deterministic, dey2014gauss, aksakalli2016based}.
However, solving DET-POMDPs is PSPACE-complete \citep{bonet2012deterministic}, 
and the model transitions between successive steps are often complicated. 
To better facilitate the decision process, 
we instead model the problem using a universal framework with belief state proposed for sequential decision problems \citep{powell2019unified}.

Adapted to our problem setting,
this framework includes five components at each discrete time step $t\in\{0,1,...,T\}$, where a decision is required:
\begin{itemize}
	\item \emph{State Variables}, $S_t=\{\mathcal{V}_t, B_t\}$, contains information needed for decision making. 
	$\mathcal{V}_t$ represents the agent's physical location (i.e., the vertex in the discretized graph). 
	$B_t$ is the belief state capturing the agent's knowledge about obstacle uncertainty. 
	$B_t=\{X_t^{O}, X_t^{F}, X_t^{U}\}$, where $X_t^{O}$, $X_t^{F}$, and $X_t^{U}$ 
	denote the sets of true obstacles, false obstacles and ambiguous obstacles, respectively. 
	For ambiguous obstacles, their blockage probabilities are included, denoted as $\rho(X_t^{U})=\{\rho_x: x\in X_t^{U}\}$. 
	We use $s_t$, $v_t$ and $b_t$ for their specific realizations at $t$. 
	
	\item \emph{Decision}, $d_t$, represents the agent's decision of the next obstacle-free path segment to follow.
	Rather than selecting an action at every graph edge, we let the agent choose a \emph{macro–path}: 
	an obstacle-free path segment that terminates either 
	(i) at the first vertex adjacent to an ambiguous obstacle, or 
	(ii) at the goal $g$. 
	This approach reduces the decision complexity from potentially hundreds of edge level choices to only a few macro-decisions.
	Then, the decision $d_t$ can be viewed as selecting the stopping vertex, 
	since the path segment is uniquely determined by the current location and the chosen stopping point.
	We can also denote it as $d(s_t)$ to emphasize its dependency on state variables.
	
	\item \emph{Exogenous Information}, $W_t$, includes new information available to the agent. 
	Specifically, $W_{t+1}$ contains the actual status of obstacles observed after performing $d_{t}$, 
	as well as the new sensor estimates for obstacles within the sensing range. 
	It can be empty if the agent reaches $g$ directly without any sensing or disambiguation.
	
	\item \emph{Transition Function}, $S_{t+1}=\mathcal{T}(S_t, d_t, W_{t+1})$,
	describes the state information evolution based on the current state variables $S_t$, decision $d_t$ and new observation $W_{t+1}$.
	The new physical location $V_{t+1}$ becomes the vertex chosen by $d_t$, 
	while the belief state $B_{t+1} = \{X^{O}_{t+1}, X^{F}_{t+1}, X^{U}_{t+1}\}$ is updated based on the new obstacle knowledge provided in $W_{t+1}$.
	The transition uncertainty arises from the stochastic nature of obstacle blockage. 
	For $x\in X^{U}_{t+1}$, blockage probabilities are updated using a Bayesian mechanism incorporating spatial correlations between obstacles. 
	This transition process is depicted in Figure \ref{diagram}.

	\item \emph{Objective Function} seeks an optimal policy $\pi^*$ that minimizes the expected total traversal cost. 
	The immediate cost $\mathcal{C}(S_{t}, d_{t})$ includes
	the Euclidean length $\ell_t$ of traversing to the stopping vertex, 
	and the disambiguation cost $c_t$ if required. 
	Therefore, the objective function is formulated as:
	$$\pi^*=\argmin_{\pi}\E\left(\sum_{t=0}^{T}\mathcal{C}(S_{t}, d^\pi(S_{t}))\right),$$
	where $T$ denotes the (random) arrival time, $\mathcal{C}(S_t, d^\pi(S_t))=\ell_t+c_t$,
	and $d^\pi(S_{t})$ is the decision taken at $S_t$ under policy $\pi$.
\end{itemize}

\begin{figure}[H]
	\centering
	\begin{tikzpicture}[scale=0.7,transform shape]
		\node [terminator, fill=white!20] at (-10, 0) (start) {\textbf{$S_t$}};
		\node [data, fill=white!20] at (-6, 0) (data1) {\textbf{$d_t$}};
		\node [process, fill=white!20] at (-2, 0) (process1) {\textbf{$W_{t+1}$}};
		\node [terminator, fill=white!20] at (2, 0) (start2) {\textbf{$S_{t+1}$}};

		\node[draw=none] at (-8.5, -0.5) (yes) {decision};
		\node[draw=none] at (-4, -0.5) (no) {observation};
		\node[draw=none] at (-0, -0.5) (no) {next state};
		
		\path [connector] (start) -- (data1);
		\path [connector] (data1) -- (process1);
		\path [connector] (process1) -- (start2);
	\end{tikzpicture}
	\caption{A schematic illustration of transitions in the decision-making process}
	\label{diagram}
\end{figure}
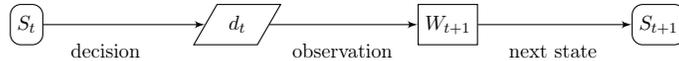

Based on the problem definition and sequential decision formulation,
we work on a finite, undirected graph with bounded, nonnegative edge costs,
and allow sensing, disambiguation at additional bounded costs.
The goal state $g$ is absorbing since the traversal stops and no further cost incurs once $g$ is reached.
Assuming there is at least one feasible path, possibly unnecessarily long, from $s$ to $g$, 
there exists a proper policy (i.e., one that reaches g almost surely from every state after finitely many steps).
Consequently, our objective is undiscounted with a finite horizon.
Throughout,
we restrict the analysis to proper policies.

Table \ref{tab:notations} in Appendix summarizes the key notations used for problem formulation and belief representation.

\section{Updating Framework} \label{sec: updating}
Due to the inter-dependency (i.e., correlation) between obstacles' blockage probabilities, 
sensor readings and disambiguation outcomes at one location provide useful information about the status of all ambiguous obstacles.
This motivates an online update mechanism that iteratively refines beliefs, 
allowing decisions to be made based on increasingly accurate information.
We establish the theoretical benefit to justify the proposed correlation-aware updating approach in Section \ref{sec: benefits_corr}.

Previous work on Gaussian classification for binary object status updates model correlations through a latent variable, 
followed by a squash step to transform values into probability scale \citep{kapoor2010gaussian}. 
However, the resulting likelihood function complicates the derivation of posterior and predictive distributions.
Approximation techniques \citep{nickisch2008approximations} can mitigate this issue, 
but introduce additional computational cost and concerns on approximation accuracy, especially when used in sequential updates. 

To address these challenges,
we map each obstacle's blockage probability $\rho_x\in(0,1)$ to log-odds values $\log\left(\frac{\rho_x}{1-\rho_x}\right)$. 
This transformation is a common practice in occupancy mapping problems (\cite{o2012gaussian}, \cite{li2018gaussian}).
We then develop a Bayesian updating framework using the Gaussian Random field (GRF)
to model the spatial structure of these log-odds values.
This strategy enables exact posterior derivation for sequential updates.

\subsection{Prior Information and Assumptions}
Following the problem definition in Section \ref{sec: c-sos define}, 
the traversal region contains $n$ obstacles located at positions $X=X^O\cup X^F=\{x_1, x_2, \ldots, x_n\}$. 
Each obstacle has an unknown binary status denoted as $Z=\{z_1, z_2, \ldots, z_n\}$, where $z_i=1$ indicates a true (blocked) obstacle
with underlying blockage probability $\rho^*_i$ such that $P(Z_i=1)=\rho^*_i$.

We denote the transformed log-odds vector as $Y=(y_1, y_2, \ldots, y_n)$,
where each $y_i=\log\left(\frac{\rho^*_i}{1-\rho^*_i}\right)$, for all $i=1, 2, ..., n$.
and assume a Gaussian prior:
$$Y|X\sim\mathcal{N}(\mu, K),$$
where $\mu$ is the prior mean vector and $K$ is the covariance matrix. 
We set $\mu = 0$ which centers the prior log-odds for each obstacle at zero, 
corresponding to a prior median (and mean, in the absence of uncertainty) of $\rho_i = 0.5$ for all $i$.
This setup indicates that each obstacle is equally likely to be true or false a priori. 
The matrix $K$ contains elements $\{K_{ij}=k(x_i,x_j)\}$ determined by a kernel function $k$ capturing the inter-dependency between obstacles at two locations $x_i$ and $x_j$. 
We use a squared exponential kernel in spatial distance to model correlation such that nearby obstacles tend to have similar log-odds values,
thereby capturing the intuition that spatially close obstacles tend to have similar blockage status:
$$k(x_i, x_j)=\sigma_f^2\exp\left\{-\frac{||x_i-x_j||^2}{2l^2}\right\},$$
where $\sigma_f$ and $l$ are kernel parameters controlling the strength and decay rate of spatial correlations.

Given a sequence of sensor readings and disambiguation outcomes for each obstacle,
denoted as $\tilde{\rho_i}^1, \ldots, \tilde{\rho_i}^t$, 
we aim to refine the probability estimates $P(Z_i=1|\tilde{\rho_i}^1, ..., \tilde{\rho_i}^t), i=1, 2, ..., n$. 
This framework allows us to update belief over obstacles that have not been directly sensed due to limited sensing range,
by using the spatial correlation captured in the Gaussian prior. 

\subsection{Observations}
In most works addressing traversal problems in regions with potential blockages, 
the sensor is assumed to be noisy and conditionally dependent on the true nature of the blockage.
The generated probabilistic readings about the blockage status are typically modeled using a $Beta(\alpha, \beta)$ distribution (\cite{fishkind2005}, \cite{ye2011sensor}, \cite{aksakalliari2014}) which has support matching the probability range. 

Let $z_i \in \{0,1\}$ denote the true (but unobserved) status of obstacle $i$. Given $z_i$, the sensor reading $\tilde{\rho}_i$ is drawn from:
\[
\tilde{\rho}_i \mid z_i \sim 
\begin{cases}
	\text{Beta}(\alpha_O, \beta_O), & \text{if } z_i = 1 \\
	\text{Beta}(\alpha_F, \beta_F), & \text{if } z_i = 0
\end{cases}
\]
The parameters are chosen to satisfy $\E(\tilde{\rho_i}|z_i=1)=\frac{\alpha_O}{\alpha_O+\beta_O}>\E(\tilde{\rho_i}|z_i=0)=\frac{\alpha_F}{\alpha_F+\beta_F}$,
reflecting that the sensor is more likely to return higher probability marks for true obstacles and lower values for false ones,
with uncertainty captured by the distribution shape parameters.

Given a new sensor reading $\tilde{\rho}_i^t$, we update its log-odds via Bayes' rule.
Assuming conditional independence of sensor readings given the obstacle statuses, the odds ratio becomes:
\[
\frac{P(z_i=1\mid\tilde{\rho}_i^{1}, ..., \tilde{\rho}_i^{t})}{P(z_i=0\mid\tilde{\rho}_i^{1}, ..., \tilde{\rho}_i^{t})}
=\frac{f(\tilde{\rho}_i^{t} \mid z_i=1)P(z_i=1\mid\tilde{\rho}_i^{1}, ..., \tilde{\rho}_i^{t-1})}{f(\tilde{\rho}_i^{t} \mid z_i=0)P(z_i=0\mid\tilde{\rho}_i^{1}, ..., \tilde{\rho}_i^{t-1})},
\]
we then obtain log-odds value by taking log-transformation:
\[
\log\left(\frac{P(z_i=1\mid\tilde{\rho}_i^{1}, ..., \tilde{\rho}_i^{t})}{P(z_i=0\mid\tilde{\rho}_i^{1}, ..., \tilde{\rho}_i^{t})}\right)
=\log\left(\frac{f(\tilde{\rho}_i^{t} \mid z_i=1)}{f(\tilde{\rho}_i^{t} \mid z_i=0)}\right)+\log\left(\frac{P(z_i=1\mid\tilde{\rho}_i^{1}, ..., \tilde{\rho}_i^{t-1})}{P(z_i=0\mid\tilde{\rho}_i^{1}, ..., \tilde{\rho}_i^{t-1})}\right).
\]
The resulting $\tilde{y}_i^{t}=\log\left(\frac{P(z_i=1\mid\tilde{\rho}_i^{1}, ..., \tilde{\rho}_i^{t})}{P(z_i=0\mid\tilde{\rho}_i^{1}, ..., \tilde{\rho}_i^{t})}\right)$  is treated as a noisy observation of the latent true log-odds $y_i$, approximated by:
\[
\tilde{y}_i^t \approx y_i + \epsilon_i, \quad \epsilon_i \sim \mathcal{N}(0, \sigma_i^2),
\]
where $\sigma_i^2$ reflects the uncertainty due to the sensor noise and the informativeness of the reading $\tilde{\rho}_i^t$.
Then $\tilde{y}_i$ serves as the input observation for the Bayesian updating process using correlation information.

To accommodate the fact that only a subset of obstacles are observed at each time step (due to limited sensing range), 
we assign an observation noise variance of $\sigma_i^2 = \infty$ for unobserved obstacles. 
This effectively removes their influence from the likelihood function, 
ensuring that their posterior estimates are driven solely by spatial correlation with observed obstacles.

\subsection{Posterior Distribution}
The prior distribution on log-odds value $Y=\left(y_1, y_2, ... ,y_n\right)^T$ is assumed as:
$$f(\mathbf{y})\propto\exp\left\{-\frac{1}{2}\mathbf{y}^TK^{-1}\mathbf{y}\right\}.$$
Let $\tilde{\mathbf{y}} = (\tilde{y}_1, \dots, \tilde{y}_n)^T$ be the observed log-odds vector with 
independent Gaussian noise $\epsilon_i \sim \mathcal{N}(0, \sigma_i^2)$. 
Then the likelihood is:
$$L(\tilde{\mathbf{y}}|\mathbf{y})\propto\prod_{i=1}^{n}\exp\left\{-\frac{1}{2}\frac{\left(\tilde{y_i}-y_i\right)^2}{\sigma_i^2}\right\}
=\exp\left\{-\frac{1}{2}\left(\tilde{\mathbf{y}}-\mathbf{y}\right)^T\Sigma^{-1}\left(\tilde{\mathbf{y}}-\mathbf{y}\right)\right\}, $$
where $\Sigma=\text{diag}(\sigma_1^2, ..., \sigma_n^2)$.  
The posterior of $Y$ given observations $\tilde{Y}$ becomes a standard Bayesian posterior of multivariate normal with diagonal Gaussian noise: 
\[
f(\textbf{y}|\tilde{\textbf{y}})\propto f(\mathbf{y})L(\tilde{\mathbf{y}}|\mathbf{y}),
\]
with the posterior mean and covariance:
$$\mu_{pos}=\E\left(Y|\tilde{Y}\right)=\left[\Sigma^{-1}+K^{-1}\right]^{-1}\Sigma^{-1}\tilde{\mathbf{y}},$$
$$K_{pos}=\left[K^{-1}+\Sigma^{-1}\right]^{-1}.$$
The posterior mean $\mu_{pos}$ can be transformed elementwise into posterior probabilities via the logistic function:
\[
\rho_i = \frac{\exp(\mu_{pos,i})}{1 + \exp(\mu_{pos,i})}, \quad i = 1, \dots, n.
\]

\subsection{Benefits of Posterior Updating based on the Correlation Structure} \label{sec: benefits_corr}
Having established the correlation model and updating framework, 
we analyze the theoretical benefits of 
(i) incorporating spatial correlation into belief updating process and 
(ii) exploration through collecting new observations.

\subsubsection{Result on Correlation-Aware Dominance}
\label{sec:belief-dominance}
We consider either a finite-horizon problem with $t\in\{0,\dots,T\}, \; T<\infty$  on an underlying partially observed environment
or an SSP (stochastic shortest path) setting with absorbing goal $g$.
Single-stage costs are bounded: $\sup_{s,d} \mathcal{C}(s,d)<\infty$.
At time $t$, the state is $S_t=\{V_t,B_t\}$ with location $V_t$ and belief $B_t$.
Let $b^{\mathrm{cor}}_t\in\mathcal B_{\mathrm{cor}}$ denote the correlation-aware belief (full joint posterior under the GRF model)
and $U(b^{\mathrm{cor}}_t,d_t,Y_{t+1})$ the Bayesian update after action $d_t$ and observation $Y_{t+1}$.
Policies are sequences $(\pi_t)_{t}$ with measurable selectors $\pi_t:\,(V_t,b)\mapsto d_t\in\mathcal D(S_t)$ adapted to the canonical filtration.

We now provide the standing assumptions more explicitly:
\begin{enumerate}[label=(A\arabic*), leftmargin=2em]
	\item \label{ass:finite} (\emph{Finite horizon and bounded costs})
	$T<\infty$ and $\sup_{s,d}\mathcal C(s,d)<\infty$.
	\item \label{ass:beliefs} (\emph{Correlation-aware beliefs})
	$b^{\mathrm{cor}}_{t+1}=U(b^{\mathrm{cor}}_t,d_t,Y_{t+1})$ for all $t$.
	\item \label{ass:gamma} (\emph{Per-stage belief coarsening})
	There exists a measurable ``coarsening'' map $\Gamma:\mathcal B_{\mathrm{cor}}\to\mathcal B_{\mathrm{ind}}$ such that, pathwise,
	\[
	b^{\mathrm{ind}}_t=\Gamma\!\big(b^{\mathrm{cor}}_t\big),\qquad
	b^{\mathrm{ind}}_{t+1}=\Gamma\!\Big(U\big(b^{\mathrm{cor}}_t,d_t,Y_{t+1}\big)\Big).
	\]
	(For example, $\Gamma$ may replace the joint posterior by the product of its marginals.)
\end{enumerate}


\begin{theorem}[Correlation-Aware Dominance]\label{thm:coarsening-dominance}
	Under \ref{ass:finite}--\ref{ass:gamma}, let
	\[
	J^{\star}_{\mathrm{cor}}
	:=\inf_{\pi}\,
	\mathbb E\!\Big[\textstyle\sum_{t} \mathcal{C}\big(S_t,\pi_t(V_t,b^{\mathrm{cor}}_t)\big)\Big],
	\qquad
	J^{\star}_{\mathrm{ind}}
	:=\inf_{\tilde\pi}\,
	\mathbb E\!\Big[\textstyle\sum_{t} \mathcal{C}\big(S_t,\tilde\pi_t(V_t,b^{\mathrm{ind}}_t)\big)\Big],
	\]
	be the optimal expected total costs when decisions are based on correlation-aware vs.\ coarsened (independent) beliefs, respectively.
	Then $J^{\star}_{\mathrm{cor}}\le J^{\star}_{\mathrm{ind}}$.
\end{theorem}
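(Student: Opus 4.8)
The plan is to prove the inequality by a \emph{policy-lifting} (embedding) argument combined with a pathwise coupling, rather than by comparing value functions directly. The key structural fact is assumption \ref{ass:gamma}: the coarsened belief is, at every stage and along every realized action--observation sequence, a \emph{deterministic measurable image} $b^{\mathrm{ind}}_t=\Gamma(b^{\mathrm{cor}}_t)$ of the correlation-aware belief. Consequently, any policy $\tilde\pi$ acting on coarsened beliefs can be reproduced by a policy acting on the finer correlation-aware belief: define the lifted policy $\pi_t(V_t,b):=\tilde\pi_t\bigl(V_t,\Gamma(b)\bigr)$. Since $\Gamma$ and $\tilde\pi_t$ are measurable, $\pi_t$ is a measurable selector adapted to the canonical filtration, hence an admissible correlation-aware policy; and because $\Gamma$ preserves the true/false/ambiguous partition $(X^{O}_t,X^{F}_t,X^{U}_t)$ and the marginal blockage probabilities (discarding only the joint covariance among ambiguous obstacles), the feasible decision set $\mathcal D(S_t)$ is unchanged, so the lifted action is always feasible.

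I would then set up the coupling. Place both processes on a common probability space carrying the true statuses $Z$ and all sensor/observation noise, so that once an action sequence is fixed, the induced physical transitions, the exogenous information $W_{t+1}=Y_{t+1}$, and the immediate costs $\mathcal C(S_t,d_t)=\ell_t+c_t$ are all determined by the \emph{true environment} and the chosen actions alone. This encodes the central modeling point: the belief representation influences cost only through action selection, so the physical world evolves identically under the same action sequence regardless of whether the agent reasons with the joint posterior or with its coarsening.

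With the lift and coupling in place, the core step is an induction on $t$ showing that the two trajectories coincide pathwise. Suppose at stage $t$ both processes share the physical location $V_t$ and the correlation-aware belief $b^{\mathrm{cor}}_t$ (hence the same $b^{\mathrm{ind}}_t=\Gamma(b^{\mathrm{cor}}_t)$). Then the coarsened policy selects $\tilde\pi_t(V_t,\Gamma(b^{\mathrm{cor}}_t))$ while the lifted policy selects $\pi_t(V_t,b^{\mathrm{cor}}_t)=\tilde\pi_t(V_t,\Gamma(b^{\mathrm{cor}}_t))$, i.e.\ the same action $d_t$. The coupled environment then produces identical $V_{t+1}$, $W_{t+1}$, and immediate cost; the correlation-aware update \ref{ass:beliefs} gives the same $b^{\mathrm{cor}}_{t+1}=U(b^{\mathrm{cor}}_t,d_t,Y_{t+1})$, and \ref{ass:gamma} yields $b^{\mathrm{ind}}_{t+1}=\Gamma(b^{\mathrm{cor}}_{t+1})$, closing the induction. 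Boundedness and the finite horizon \ref{ass:finite} (or properness in the SSP case) make the total cost integrable, so taking expectations gives equality of expected total cost between $\tilde\pi$ and its lift $\pi$. Since every coarsened policy thus embeds, cost-preservingly, into the correlation-aware class,
\[
J^{\star}_{\mathrm{cor}}=\inf_{\pi}\E[\cdots]\le \inf_{\tilde\pi}\E[\cdots]=J^{\star}_{\mathrm{ind}}.
\]

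The main obstacle I anticipate is not the induction itself, which is routine, but justifying rigorously that costs and dynamics are \emph{belief-representation independent} given identical actions, i.e.\ that $\Gamma$ touches only the internal joint-posterior structure and never the physical state, the realized observations, or the feasible set $\mathcal D(S_t)$. This requires verifying that the decision set and the disambiguation/traversal costs depend on $S_t$ only through the physical location and the coarsening-invariant partition and marginals, and that the lifted selector remains adapted. Once this separation is pinned down, the statement reduces to the principle that optimizing over a strictly richer (refined) information representation can only lower the optimal cost.
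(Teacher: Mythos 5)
Your proposal is correct and follows essentially the same route as the paper's own proof: define the lifted policy $\pi_t(V_t,b):=\tilde\pi_t(V_t,\Gamma(b))$, use Assumption \ref{ass:gamma} to conclude the two policies induce identical action sequences and hence identical cost trajectories on every sample path, and take the infimum over $\tilde\pi$. Your added detail (the explicit coupling construction and the stagewise induction) merely spells out what the paper compresses into the sentence that the policies ``induce identical action sequences and thus the same state/cost trajectories on every sample path.''
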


\begin{proof}
	Fix any admissible $\tilde\pi=(\tilde\pi_t)$ on $(V_t,b^{\mathrm{ind}}_t)$ and define the \emph{replicated} policy on correlated beliefs by composition,
	\[
	\pi_t(V_t,b)\ :=\ \tilde\pi_t\!\big(V_t,\Gamma(b)\big).
	\]
	By the stagewise coarsening property (i.e. Assumption \ref{ass:gamma}), for every $t$ we have
	$\pi_t(V_t,b^{\mathrm{cor}}_t)=\tilde\pi_t(V_t,b^{\mathrm{ind}}_t)$, so the two policies induce identical action sequences and thus the same state/cost trajectories on every sample path. Therefore
	\[
	\mathbb E\!\Big[\textstyle\sum_{t} \mathcal{C}\big(S_t,\pi_t(V_t,b^{\mathrm{cor}}_t)\big)\Big]
	=\mathbb E\!\Big[\textstyle\sum_{t} \mathcal{C}\big(S_t,\tilde\pi_t(V_t,b^{\mathrm{ind}}_t)\big)\Big].
	\]
	Taking $\inf_{\tilde\pi}$ on the right yields \(J^{\star}_{\mathrm{cor}}\le J^{\star}_{\mathrm{ind}}\).
	That is, taking the infimum over $\tilde\pi$ on the right shows that the correlation-aware decision maker 
	can mimic any independent-belief policy by first coarsening and then acting; 
	therefore $J^{\star}_{\mathrm{cor}}\le J^{\star}_{\mathrm{ind}}$.
\end{proof}

\noindent
\emph{Equivalently}, the theorem implies an inequality (or dominance) in expectations as well.
Because the finite-horizon problem with finite state and action sets admits optimal policies, let
\[
\pi^{\star}_{\mathrm{cor}}\in\arg\min_{\pi}\,
\mathbb E\!\Big[\textstyle\sum_{t=0}^{T}\mathcal C\big(S_t,\pi_t(V_t,b^{\mathrm{cor}}_t)\big)\Big],
\quad
\pi^{\star}_{\mathrm{ind}}\in\arg\min_{\tilde\pi}\,
\mathbb E\!\Big[\textstyle\sum_{t=0}^{T}\mathcal C\big(S_t,\tilde\pi_t(V_t,b^{\mathrm{ind}}_t)\big)\Big].
\]
Then
\[
\mathbb E\!\Big[\textstyle\sum_{t=0}^{T}\mathcal C\big(S_t,\pi^{\star}_{\mathrm{cor},t}(V_t,b^{\mathrm{cor}}_t)\big)\Big]
\ \le\
\mathbb E\!\Big[\textstyle\sum_{t=0}^{T}\mathcal C\big(S_t,\pi^{\star}_{\mathrm{ind},t}(V_t,b^{\mathrm{ind}}_t)\big)\Big].
\]

\begin{remark}\label{rem:blackwell-bellman}
	(i) \textbf{On Blackwell Dominance.}
	Suppose that for each stage $t$ there exists a Markov kernel $K_t$ such that, conditional on the latent state and the past history, the \emph{independent} signal $Z^{\mathrm{ind}}_t$ is obtained by \emph{garbling} the \emph{correlated} signal $Z^{\mathrm{cor}}_t$:
	\[
	\mathcal L\!\big(Z^{\mathrm{ind}}_t \mid \text{state, history}\big)
	\;=\;
	\int K_t(\cdot \mid z)\,\mathcal L\!\big(Z^{\mathrm{cor}}_t \in dz \mid \text{state, history}\big),
	\]
	with $K_t$ independent of the latent state given $Z^{\mathrm{cor}}_t$.
	Then the correlated signal is stagewise more informative in Blackwell sense, and a standard dynamic extension implies the same value inequality as Theorem~\ref{thm:coarsening-dominance}.
	In practice, verifying \ref{ass:gamma} via a concrete belief coarsening map $\Gamma$ (e.g., product-of-marginals) is \emph{sufficient} for Theorem~\ref{thm:coarsening-dominance} and avoids constructing signal-level garblings; note that such a $\Gamma$ need not correspond to a Blackwell garbling of raw signals.
	
	(ii) \textbf{Bellman Perspective.}
	Let $V^{\mathrm{cor}}_t:\mathcal B_{\mathrm{cor}}\to\mathbb R$ and $V^{\mathrm{ind}}_t:\mathcal B_{\mathrm{ind}}\to\mathbb R$ denote the optimal cost-to-go at time $t$ in the correlation-aware and independent-belief problems, respectively.
	Define $\widetilde V_t:\mathcal B_{\mathrm{cor}}\to\mathbb R$ by $\widetilde V_t(b):=V^{\mathrm{ind}}_t\big(\Gamma(b)\big)$.
	Under \ref{ass:gamma}, a backward-induction argument shows
	\[
	V^{\mathrm{cor}}_t(b)\ \le\ \widetilde V_t(b)\ =\ V^{\mathrm{ind}}_t\!\big(\Gamma(b)\big)
	\qquad\text{for all }b\in\mathcal B_{\mathrm{cor}},\ t=T,\dots,0,
	\]
	i.e., the dynamic programming operator is monotone under information coarsening after composing with $\Gamma$.
	Evaluating at $t=0$ yields Theorem~\ref{thm:coarsening-dominance}.\\
	\emph{Sketch of Proof.} At $t=T$, the inequality is trivial.
	Assume it holds at $t+1$ and compare the Bellman minima for $V^{\mathrm{cor}}_t$ and $\widetilde V_t$;
	use that $b^{\mathrm{ind}}_{t+1}=\Gamma\!\big(U(b^{\mathrm{cor}}_t,d,Y_{t+1})\big)$ (Assumption~\ref{ass:gamma}) to align the conditional expectations, and minimize over the same feasible action set $\mathcal D(S_t)$.
\end{remark}

This result implies that correlation-aware updating provides theoretical guarantees for improved performance.
We also examine how additional information monotonically improves decision quality through the following result, 
further supporting the goal of uncertainty reduction in our posterior beliefs by sequentially gathering new observations through sensing and disambiguation.

\subsubsection{Result on Monotonicity with Additional Observations}
\label{sec:monotonicity}
Let $X^U$ denote the index set of ambiguous obstacles that could be observed (e.g., via sensing).
That is, the set $X^U=\{x_1,\ldots,x_m\}$ indexes all \emph{ambiguous} obstacles whose statuses are uncertain and, in principle, observable (e.g., via sensing) at time $0$.
For any $X\subseteq X^U$, let $\widetilde Y_X:=\{\widetilde Y_x:\ x\in X\}$ be the the corresponding pre-decision observation vector collected from $X$ at time $0$.
Write $\mathcal F(X):=\sigma(\widetilde Y_X)$ for the $\sigma$-field generated by those observations (together with the common prior).
Admissible policies for $X$ are those that are measurable w.r.t.\ the canonical filtration generated by $\mathcal F(X)$ and the state/action history.
Define the optimal expected total cost
\[
\E[\mathcal C(X)]\;:=\;\inf_{\pi\in\Pi_X}\;
\E\!\Big[\textstyle\sum_{t=0}^T \mathcal C\big(S_t,\ d_t\big)\Big],
\quad\text{where }d_t=\pi_t(\text{history up to }t;\ \mathcal F(X)).
\]
Thus $X^U_1\subseteq X^U_2\subseteq X^U$ represents increasing observational richness:
\[
\mathcal F(X^U_1)\ \subseteq\ \mathcal F(X^U_2),
\]
so a policy admissible under $\mathcal F(X^U_1)$ is also admissible under $\mathcal F(X^U_2)$.
Intuitively, the decision maker with access to $X^U_2$ can condition on all information available under $X^U_1$ plus the additional signals for $X^U_2\setminus X^U_1$ (and can always ignore extra coordinates if desired).
\begin{corollary}[Monotonicity with additional observations]\label{cor:monotone}
	Let $X^U_1\subseteq X^U_2\subseteq X^U$.
	Then the optimal expected cost is monotone in the observation set:
	\[
	\E\big[\mathcal C(X^U_2)\big]\ \le\ \E\big[\mathcal C(X^U_1)\big].
	\]
	Equivalently, for the cost-reduction function $f(X):=\E[\mathcal C(\emptyset)]-\E[\mathcal C(X)]$,
	we have $f(X^U_2)\ge f(X^U_1)$ (monotone nondecreasing).
\end{corollary}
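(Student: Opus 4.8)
The plan is to reduce the claim to the elementary fact that an infimum over a larger feasible set is no larger, exactly paralleling the replication argument used for Theorem~\ref{thm:coarsening-dominance}. The central observation is that enlarging the observation set only enlarges the generated $\sigma$-field, hence only \emph{relaxes} the measurability constraint on admissible policies; the objective functional, the latent environment, the dynamics, and the per-stage costs are all left unchanged. So the richer problem is a relaxation of the poorer one, and its optimal value can only decrease. No submodularity, concavity, or Blackwell-type garbling is needed here, which is why the statement is packaged as a corollary rather than a standalone theorem.

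First I would verify the filtration inclusion. Since $X^U_1\subseteq X^U_2$, the observation vector $\widetilde Y_{X^U_1}$ is a subvector of $\widetilde Y_{X^U_2}$, and the $\sigma$-field generated by a subvector is contained in that generated by the full vector, so $\mathcal F(X^U_1)=\sigma(\widetilde Y_{X^U_1})\subseteq\sigma(\widetilde Y_{X^U_2})=\mathcal F(X^U_2)$. Consequently the canonical filtration built from $\mathcal F(X^U_1)$ and the state/action history is contained, stage by stage, in the one built from $\mathcal F(X^U_2)$.

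Next I would deduce the policy-set inclusion $\Pi_{X^U_1}\subseteq\Pi_{X^U_2}$. Any $\pi\in\Pi_{X^U_1}$ is, by definition, adapted to the coarser filtration; by the inclusion above it is \emph{a fortiori} adapted to the finer one, hence also admissible under $X^U_2$. Intuitively, the decision maker endowed with $X^U_2$ can always discard the extra coordinates $\widetilde Y_{X^U_2\setminus X^U_1}$ and replay any $X^U_1$-policy verbatim, inducing the identical action sequence and therefore the identical cost trajectory on every sample path. Because the induced expected cost of a fixed policy does not depend on which information budget we nominally attach to it, the objective functional agrees on $\Pi_{X^U_1}$ across the two problems.

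Finally, taking the infimum of the same functional over the nested policy classes yields
\[
\E\big[\mathcal C(X^U_2)\big]=\inf_{\pi\in\Pi_{X^U_2}}\E\!\Big[\textstyle\sum_{t=0}^T\mathcal C(S_t,d_t)\Big]\ \le\ \inf_{\pi\in\Pi_{X^U_1}}\E\!\Big[\textstyle\sum_{t=0}^T\mathcal C(S_t,d_t)\Big]=\E\big[\mathcal C(X^U_1)\big],
\]
and subtracting both sides from the constant $\E[\mathcal C(\emptyset)]$ gives $f(X^U_2)\ge f(X^U_1)$. The only step requiring genuine care is the admissibility claim: I must confirm that a fixed policy's expected cost is truly invariant under the change of information budget, which holds precisely because the latent dynamics and cost structure are common to both problems and only the adaptedness constraint differs. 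Everything else is a one-line monotonicity-of-infimum argument.
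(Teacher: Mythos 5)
Your proposal is correct and follows essentially the same route as the paper's own proof: the filtration inclusion $\mathcal F(X^U_1)\subseteq\mathcal F(X^U_2)$ yields the policy-class inclusion $\Pi_{X^U_1}\subseteq\Pi_{X^U_2}$, and monotonicity of the infimum over nested feasible sets finishes the argument. Your replication remark (discarding extra coordinates and replaying the coarser policy verbatim) is also precisely the paper's alternative constructive proof, so nothing is missing.
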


\begin{proof}
	Since $X^U_1\subseteq X^U_2$, we have $\mathcal F(X^U_1)\subseteq \mathcal F(X^U_2)$.
	By the definition of admissible policies, any $\mathcal F(X^U_1)$–adapted policy is also $\mathcal F(X^U_2)$–adapted; hence
	$\Pi_{X^U_1}\subseteq \Pi_{X^U_2}$.
	Therefore
	\[
	\E[\mathcal C(X^U_2)]
	=\inf_{\pi\in\Pi_{X^U_2}}\E\!\Big[\textstyle\sum_{t=0}^T \mathcal C(S_t,d_t)\Big]
	\ \le\
	\inf_{\pi\in\Pi_{X^U_1}}\E\!\Big[\textstyle\sum_{t=0}^T \mathcal C(S_t,d_t)\Big]
	=\E[\mathcal C(X^U_1)],
	\]
	as claimed.
\end{proof}

\begin{proof}[Alternative Constructive Proof:]
	Because $X^U_1\subseteq X^U_2$, we have $\mathcal F(X^U_1)\subseteq \mathcal F(X^U_2)$ and there is a measurable projection
	$\mathrm{proj}_{1}: \Omega\to$ realizations of $\mathcal F(X^U_1)$ that discards the extra observations in $X^U_2\setminus X^U_1$.
	Take any admissible policy $\pi^{(1)}\in\Pi_{X^U_1}$.
	
	Define the restriction map $r:\widetilde Y_{X^U_2}\to \widetilde Y_{X^U_1}$ that drops coordinates in $X^U_2\setminus X^U_1$.
	Given $\pi^{(1)}\in\Pi_{X^U_1}$, define $\pi^{(2)}\in\Pi_{X^U_2}$ by
	$\pi^{(2)}_t(\text{history},\widetilde Y_{X^U_2}) := \pi^{(1)}_t(\text{history}, r(\widetilde Y_{X^U_2}))$.
	Then the two policies take identical actions pathwise, yielding the same cost; taking infima gives the result.
	
\end{proof}

%

\begin{remark}[When a Blackwell view applies]\label{rem:blackwell-monotone}
	Let the raw signals be $Z_2=(\widetilde Y_{X^U_1},\,\widetilde Y_{X^U_2\setminus X^U_1})$ and $Z_1=\widetilde Y_{X^U_1}$.
	Define the (deterministic) Markov kernel $K$ by
	$K(A\mid z_2):=\mathbf 1\{\mathrm{proj}(z_2)\in A\}$,
	where $\mathrm{proj}$ drops the coordinates in $X^U_2\setminus X^U_1$.
	Then, \emph{conditional on the latent state and past history},
	\[
	\mathcal L\!\big(Z_1 \mid \text{state, history}\big)
	=\int K(\cdot\mid z_2)\,\mathcal L\!\big(Z_2\in dz_2 \mid \text{state, history}\big),
	\]
	so $Z_2$ Blackwell–dominates $Z_1$ in the one-shot sense.
	Consequently, any Bayes decision problem based on these signals has value (risk) under $Z_2$ no worse than under $Z_1$,
	and the monotonicity conclusion follows.
	Our proof in the main text circumvents signal-level conditions by working directly with $\sigma$-fields and sequential policy classes.
\end{remark}

\begin{remark}[Sequential variants]\label{rem:sequential-monotone}
	If observations are acquired across time and the richer filtration $\{\mathcal F_t(X^U_2)\}_{t=0}^T$ satisfies
	$\mathcal F_t(X^U_1)\subseteq \mathcal F_t(X^U_2)$ for every $t$,
	then the admissible policy classes obey $\Pi_{X^U_1}\subseteq \Pi_{X^U_2}$.
	Hence the same replication / policy-class inclusion argument yields
	$\E[\mathcal C(X^U_2)]\le \E[\mathcal C(X^U_1)]$ for the finite-horizon problem
	(and analogously for SSP with the sum taken to the hitting time).
\end{remark}

\section{Two-Stage Policy Learning Framework} \label{sec: components_policy}
Based on the sequential model proposed in Section \ref{sec: sequential decision formulation}, 
our goal is to find an optimal policy $\pi^*$ that guides the agent in making decisions based on the current state information. 
Each decision step determines not only the immediate path before encountering a new obstacle, but also impacts all future decision steps. 
Expanding the objective function, 
the optimal policy $\pi^*$ selects the decision $d_t$ from the decision set $\mathcal{D}_t(S_t)$ at each step according to the following equation:
\begin{equation} \label{equation:optimal}
	\resizebox{\textwidth}{!}{$
		d_t^* = \argmin_{d_t \in \mathcal{D}_t(S_t)} \left\{ \mathcal{C}(S_t, d_t) 
		+ \E_{W_{t+1} | S_t, d_t} \left[ \min_{\{d_{t'}, t' = t+1,...,T\}} \E\left( \sum_{t' = t+1}^{T} 
		\mathcal{C}(S_{t'}, d_{t'}) \Big| S_{t+1} \right) \right] \right\},
		$}
\end{equation}
This is a \emph{full lookahead policy} \citep{powell2022designing} that optimizes over the entire time horizon until reaching the goal $g$. 
However, we need to enumerate all possible future state-decision sequences, $\{S_{t'},d_{t'}, W_{t'+1}\}$,
which is infeasible in practice.  

Rollout policies offer a popular and computationally efficient approximation for real-time decision making \citep{bertsekas2021rollout}. 
However, they are highly sensitive to underlying approximation policy quality and environmental information accuracy, 
making their performance less robust in highly uncertain environments \citep{eyerich2010high,hou2022dynamic,blumenthal2023rollout}.

To address this challenge, 
we propose a two-stage policy learning framework (Figure \ref{fig:offline/online_loop}) for efficient decision making:
(i) the offline phase learns a good estimation of the minimum expected value,
$ \min\E\left(\sum_{t'=t+1}^{T}\mathcal{C}(S_{t'},d_{t'})|S_{t+1}\right)$,
which induces a base policy $\pi^{\text{base}}$, and
(ii) the online phase generates rollout policy $\pi^{\text{roll}}$, 
which uses the following rule for real-time decision making,
$d_t^{\text{roll}}=
\argmin_{d_t \in \mathcal{D}_t(S_t)}\left(\mathcal{C}(S_t, d_t)+
\E_{W_{t+1} | S_t, d_t}\E\left(\sum_{t'=t+1}^{T}\mathcal{C}(S_{t'},d_{t'}^{\text{base}})|S_{t+1}\right)\right)$.
Throughout the traversal process,
we periodically update the base policy using observations from rollout phase.
\begin{figure}[ht]
	\centering
	\begin{tikzpicture}[
		scale=0.8,
		transform shape,
		node distance=0.8cm and 2.0cm,
		box/.style={
			draw,
			rounded corners=3pt,
			minimum width=2.8cm,
			minimum height=0.8cm,
			align=center,
			fill=blue!8,
			font=\small
		},
		decision/.style={
			diamond,
			draw,
			minimum width=2cm,
			minimum height=1.2cm,
			inner sep=2pt,
			align=center,
			fill=orange!10,
			font=\small
		},
		process/.style={
			draw,
			rounded corners=3pt,
			minimum width=2.8cm,
			minimum height=0.8cm,
			align=center,
			fill=green!8,
			font=\small
		},
		arr/.style={->, thick, >=stealth},
		label/.style={font=\footnotesize, fill=white, inner sep=1pt}
		]
		
		\node[box] (offline) {Offline Base Policy Learning \\ (Information-guided OPI)};
		
		\node[box, below=2.5cm of offline] (base) {Base Policy\\$\pi^{\mathrm{base}}, V^{\mathrm{base}}$};
		\node[process, right=of base] (rollout) {Online Rollout\\$\pi^{\mathrm{roll}}$};
		\node[process, below=of rollout] (exec) {Execute $d_t^{\mathrm{roll}}$\\Observe cost \& $S_{t+1}$};
		\node[decision, below=of exec] (test) {Reached\\$g$?};
		\node[process, left=of test] (update) {Belief Update \&\\Simulated Rollouts};
		\node[box, right=of test] (end) {Traversal\\Complete};
		
		\node[above=0.2cm of offline, font=\normalsize\bfseries] {Stage 1: Offline Learning};
		\node[above=0.2cm of base, font=\normalsize\bfseries] {Stage 2: Online Execution};
		
		\draw[arr] (offline) -- (base);
		\draw[arr] (base) -- (rollout);
		\draw[arr] (rollout) -- (exec);
		\draw[arr] (exec) -- (test);
		
		\draw[arr] (test.west) -- node[label, above] {No} (update.east);
		\draw[arr] (test.east) -- node[label, above] {Yes} (end.west);
		
		\draw[arr] (update.north) -- node[label, right] {Base Update} (base.south);
		
		\draw[dashed, gray, thick] 
		([xshift=-1.5cm, yshift=-1.2cm]offline.south) -- 
		([xshift=1.5cm, yshift=-1.2cm]offline.south);
		
	\end{tikzpicture}
	\caption{Two-stage policy learning for SCOS}
	\label{fig:offline/online_loop}
\end{figure}
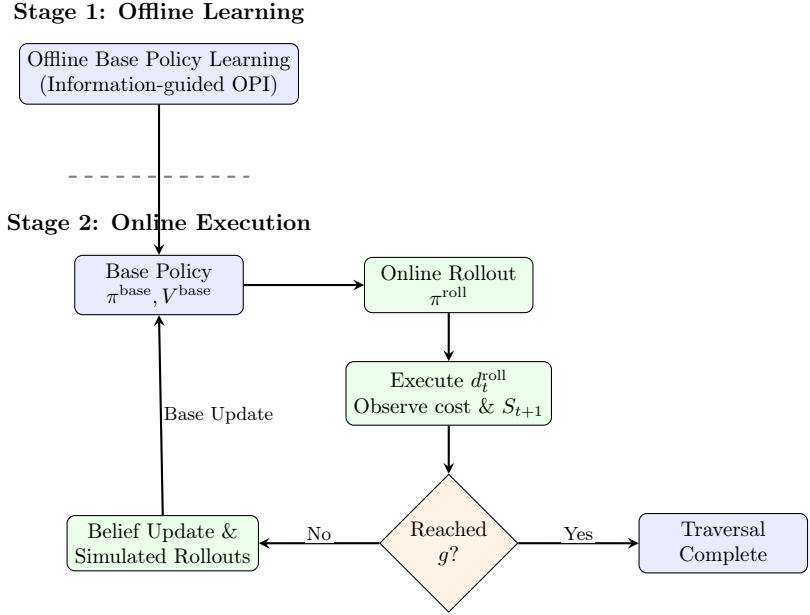

In this section, we first introduce three key components that enable our framework for effective and efficient policy learning.
we then detail the offline and online policy learning in Sections \ref{sec:offline} and \ref{sec:online}.

\subsection{Identifying the Decision Set with Search Space Reduction}\label{sec: identify Dt}
At each decision step, 
we first need to identify the decision set $\mathcal{D}_t(S_t)$,
which contains the stopping vertices of all feasible macro-path segments.
These stopping vertices are either vertices adjacent to ambiguous obstacles or the goal $g$.
Constructing $\mathcal{D}_t$ at each decision step using an exhaustive search by examining all obstacles and adjacent vertices
quickly becomes computationally prohibitive as the number of obstacles increases. 
To overcome this challenge, we introduce a heuristic approach that: 
(i) identifies a set of decisions associated with prioritized obstacles using an optimistic greedy search, 
and (ii) reduces the search space by discarding paths unlikely to be optimal.

\subsubsection{Decision Structure}
At decision step $t$,
the decision set including all possible decisions (i.e., stopping vertices can be reached via obstacle-free path from $v_t$) is
\[
\mathcal{D}_t(S_t)
=\left\{\, v\in\mathcal{V}(\mathcal{G}): v=g\ \text{or}\ v\text{ is adjacent to }x\in X_t^{U}, \text{ with an obstacle-free path from } v_t \right\}.
\]
Implementing $d_t\in\mathcal{D}_t(S_t)$ moves the agent from $v_t$ to the chosen stopping vertex,
incurring the segment length. 
If the stopping vertex is adjacent to ambiguous obstacles,
the agent may disambiguate at additional costs, and collect readings of obstacles within the sensing range.

\subsubsection{Optimistic Candidate Generation and Soundness}
To construct $\mathcal{D}_t$, two key questions should be addressed:
which ambiguous obstacles should be prioritized for disambiguation and sensing,
and at which adjacent vertices (referred to as the disambiguation vertex) should this be implemented?
We use an optimistic greedy approach that iteratively identifies candidates.

The process of determining the decision set is outlined in Algorithm \ref{alg:decision_set_determine}, 
starting by assuming all ambiguous obstacles are traversable, $\tilde{X}^F=X^F\cup X^U$ (i.e., the optimism assumption). 
For $i^{th}$ iteration, we find the minimum cost path under current assumptions.
If the path reaches $g$ directly, we add $g$ into $\mathcal{D}_t$ and terminate.
Otherwise, we identify the first ambiguous obstacle $x_i$ intersected and add its disambiguation vertex $v(x_i)$ to $\mathcal{D}_t$,
where $v(x_i)$ is the vertex adjacent to the edge first intersecting the obstacle boundary.
Then, we update the assumptions by treating $x_i$ as blocked
(i.e., $\tilde{X}^{F}=\tilde{X}^{F}\backslash\{x_i\}$, $\tilde{X}^{O}=\tilde{X}^{O}\cup\{x_i\}$). 
This process ensures $g$ is always included and prioritizes obstacles with small traversal cost.



\begin{algorithm}[ht]
	\SetAlgoLined
	\KwIn{Current state $S_t = (\mathcal{V}_t, B_t)$, goal vertex $g$}
	\KwOut{Decision set $\mathcal{D}_t$}
	\textbf{Initialize:} $\mathcal{D}_t \leftarrow \emptyset$, $\tilde{X}^F \leftarrow X^F \cup X^U$, $\tilde{X}^O \leftarrow X^O$\;
	
	\Repeat{$g \in \mathcal{D}_t$}{
		\tcc{\textbf{Path Selection}}
		Find minimum-cost path $p^*$ from $v_t$ to $g$ under $\{\tilde{X}^F, \tilde{X}^O\}$\;
		\eIf{\text{no ambiguous obstacle intersects} $p^*$}{
			$\mathcal{D}_t \leftarrow \mathcal{D}_t \cup \{g\}$\;
			\textbf{break}\;
		}{
			Identify the first ambiguous obstacle intersected $x_i\in X^{U}$\;
			Let $v(x_i)$ be the disambiguation vertex just before $x_i$\;
			
			\tcc{\textbf{Decision Set Update}}
			$\mathcal{D}_t \leftarrow \mathcal{D}_t \cup \{v(x_i)\}$\;
			$\tilde{X}^F \leftarrow \tilde{X}^F \setminus \{x_i\}$, $\tilde{X}^O \leftarrow \tilde{X}^O \cup \{x_i\}$\;
		}
	}
	\caption{Identification of Decision Set}
	\label{alg:decision_set_determine}
\end{algorithm}


This iterative optimistic-greedy procedure adds only candidates with potential improvement over a pure exploitation decision, 
therefore avoiding over-exploration and wasted computation \citep{macdonald2020reactive}. 
The formal soundness guarantee is established in the  Proposition~\ref{prop:decision-set-soundness} (Section \ref{sec:decision_sound}).

\subsubsection{Search Space Reduction}
To further improve efficiency, 
we reduce the search space by discarding obstacles and their associated paths whose best performance cannot exceed the current best solution.
Combining the decision set construction with search space reduction strategy,
we minimize computation cost while preserving only the most promising traversal options.
This enables the optimal policy to scale effectively to larger and more complex environments.

To establish the condition for discarding each obstacle, we compare two values: 
(i) the upper bound of the optimal path and 
(ii) the lower bound of the best path reaching the goal via the obstacle. 
If the lower bound exceeds the upper bound, 
then no path via this obstacle can beat the current best path, 
indicating the obstacle and its associated paths can be excluded from consideration.

We first retain all obstacles identified by Algorithm \ref{alg:decision_set_determine},
and the final shortest obstacle-free path identified can provide an upper bound on the optimal path cost. 
The reduction algorithm (Algorithm \ref{alg:search_space_reduce}) then starts with evaluating the remaining obstacles. 
For each remaining obstacle, we calculate a lower bound on the path cost passing through it. 
Since each obstacle contains multiple interior vertices, 
finding the best path through this obstacle theoretically leads to the exhaustive search. 
Instead, we introduce a pseudo vertex to facilitate the calculation. 
Let $\mathcal{V}_{\text{in}} = \{ v \in \mathcal{V} : v \in \text{interior}(x) \}$ for obstacle $x$, 
where the interior is approximated using a radius $\text{radius}(x)$ (e.g., derived from geometric or sensor range assumptions). 
A pseudo-vertex $\tilde{v}$ is then introduced, connected to each $v \in \mathcal{V}_{\text{in}}$ via zero-cost edges. 
The lower bound becomes the concatenation of the minimum cost obstacle-free path from current vertex to this pseudo vertex and the minimum cost path under optimism assumption from this pseudo vertex to $g$. 
Then obstacles whose lower bound exceeds the upper bound and the associated paths are discarded.

\begin{algorithm}[H]
	\SetAlgoLined
	\KwIn{State $S_t = (\mathcal{V}_t, B_t)$, goal vertex $g$, decision set $\mathcal{D}_t$, associated obstacles $X(\mathcal{D}_t)$, upper bound $\mathcal{C}_U$}
	\KwOut{Discarded obstacle set $X_{\text{discard}}$}
	\textbf{Initialize:} $X_{\text{discard}} \leftarrow \emptyset$, $\tilde{X}^F \leftarrow X^F \cup X^U$, $\tilde{X}^O \leftarrow X^O$\;
	
	\ForEach{$x \in X \setminus (X^O \cup X(\mathcal{D}_t))$}{
		Identify interior vertices: $\mathcal{V}_{\text{in}} \leftarrow \{v \in \mathcal{V} : \|v - x\| \leq \text{radius}(x)\}$\;
		Introduce pseudo-vertex $\tilde{v}$ and connect to all $v \in \mathcal{V}_{\text{in}}$ via zero-cost edges\;
		
		\tcc{Compute lower bound cost via obstacle $x$}
		$\mathcal{C}_1 \leftarrow \min_{p \in \mathcal{P}_{v_t, \tilde{v}}: p \cap X^U = \emptyset} \ell_p$\;
		$\mathcal{C}_2 \leftarrow \min_{p \in \mathcal{P}_{\tilde{v}, g}} (\ell_p + \mathcal{C}_p)$\;
		
		\If{$\mathcal{C}_1 + \mathcal{C}_2 \geq \mathcal{C}_U$}{
			$X_{\text{discard}} \leftarrow X_{\text{discard}} \cup \{x\}$\;
		}
	}
	\caption{Search Space Reduction}
	\label{alg:search_space_reduce}
\end{algorithm}
\noindent
\textbf{How Algorithm \ref{alg:search_space_reduce} Prunes the Search Space.}
Algorithm \ref{alg:search_space_reduce} further reduces the planning complexity 
by eliminating obstacles that provably cannot contribute to lower-cost traversal paths. 
These obstacles are excluded from future disambiguation, simulation, 
and policy considerations, thereby reducing the graph size, candidate action set, 
and overall computational burden. 
This pruning step is especially valuable in the SCOS framework, 
where the number of ambiguous obstacles can be large, each disambiguation is costly, 
and evaluating traversal paths under correlated uncertainty is computationally intensive. 
While Algorithm \ref{alg:decision_set_determine} identifies the most promising obstacles for immediate disambiguation, 
Algorithm~\ref{alg:search_space_reduce} complements this by removing the remaining obstacles 
that offer no potential benefit, resulting in a streamlined and more tractable decision process.

\subsubsection{Soundness of the Decision Set}\label{sec:decision_sound}
We work on a finite, undirected graph $G=(V,E)$ with strictly positive edge lengths.
Let $v_t\in V$ be the current vertex and $g\in V$ the goal.
Let $X^O$ denote the set of (known) true obstacles and $X^U$ the set of ambiguous obstacles.
For each ambiguous obstacle $x\in X^U$, fix a deterministic tie-breaking rule that selects a \emph{disambiguation vertex} $v(x)$ whenever multiple choices are possible (e.g., the vertex encountered along the chosen optimistic shortest path).

\begin{definition}[Pure exploitation baseline]\label{def:exploitation}
	Let $\mathcal{C}_{\mathrm{exploit}}(v_t)$ be the length of a shortest path from $v_t$ to $g$ that avoids all true and ambiguous obstacles (i.e., a path feasible with respect to $X^O\cup X^U$); set $\mathcal{C}_{\mathrm{exploit}}(v_t)=+\infty$ if no such path exists.
\end{definition}

\begin{definition}[Per-obstacle optimistic lower bound]\label{def:lowerbound}
	For $x\in X^U$, define
	\[
	\underline{\mathcal{C}}(x)\ :=\ \ell_{\mathrm{free}}(v_t,x)\ +\ c(x)\ +\ \mathcal C_{\mathrm{optimism}}(x,g),
	\]
	where $\ell_{\mathrm{free}}(v_t,x)$ is the length of a shortest obstacle-free path segment from $v_t$ to the disambiguation vertex $v(x)$ (i.e., a segment that intersects neither true nor ambiguous obstacles), $c(x)$ is the disambiguation cost at $x$, and $\mathcal C_{\mathrm{optimism}}(x,g)$ is the shortest-path cost from $v(x)$ to $g$ under the \emph{optimism assumption} that all ambiguous obstacles are traversable (true obstacles remain forbidden).
\end{definition}

\begin{definition}[Optimistic evaluation and minimizer]\label{def:opteval}
	For any $v\in V$, define the \emph{optimistic evaluation} of starting at $v$ by
	\[
	\mathrm{OptEval}(v)\ :=\ \min\!\Big\{\, \mathcal C_{\mathrm{optimism}}(v,g)\,,\ \min_{x\in X^U}\ \underline{\mathcal{C}}(x)\,\Big\}.
	\]
	Equivalently: either reach $g$ without touching any ambiguous obstacle (first term), or first disambiguate some $x$ and then proceed optimistically (second term). An \emph{optimistic-evaluated minimizer} is any path that attains $\mathrm{OptEval}(v_t)$ and, if it touches an ambiguous obstacle, let $x$ denote the first such obstacle on that path.
\end{definition}

\begin{lemma}[Optimistic evaluation decomposition]\label{lem:decomposition}
	Let $P^\star$ be an optimistic-evaluated minimizer from $v_t$.
	If $P^\star$ reaches $g$ without meeting any ambiguous obstacle, then $\mathrm{OptEval}(v_t)=\mathcal C_{\mathrm{optimism}}(v_t,g)$.
	Otherwise, if $x$ is the first ambiguous obstacle met along $P^\star$, then
	\[
	\mathrm{OptEval}(v_t)\ =\ \ell_{\mathrm{free}}(v_t,x)\ +\ c(x)\ +\ \mathcal C_{\mathrm{optimism}}(x,g)\ =\ \underline{\mathcal{C}}(x).
	\]
\end{lemma}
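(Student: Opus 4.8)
The plan is to prove both cases by sandwiching the cost of the minimizer $P^\star$ between matching lower and upper bounds: the upper bound is read directly off the definition of $\mathrm{OptEval}(v_t)$ as a minimum (Definition~\ref{def:opteval}), while the lower bound comes from segmentwise shortest-path optimality. Throughout I write $\mathrm{cost}(P^\star)$ for the length-plus-disambiguation cost of $P^\star$, and I use that $\mathrm{cost}(P^\star)=\mathrm{OptEval}(v_t)$ by the very definition of an optimistic-evaluated minimizer.

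First I would dispose of the direct case. Suppose $P^\star$ reaches $g$ without meeting any ambiguous obstacle. Then $P^\star$ avoids both true and ambiguous obstacles, so it is in particular feasible under the optimism assumption (where ambiguous obstacles are treated as traversable); hence its length is at least $\mathcal C_{\mathrm{optimism}}(v_t,g)$, since the optimistic shortest path minimizes over a larger feasible set. On the other hand, $\mathrm{cost}(P^\star)=\mathrm{OptEval}(v_t)\le\mathcal C_{\mathrm{optimism}}(v_t,g)$ because $\mathcal C_{\mathrm{optimism}}(v_t,g)$ is one of the two arguments of the minimum in Definition~\ref{def:opteval}. Combining the two inequalities forces equality, giving $\mathrm{OptEval}(v_t)=\mathcal C_{\mathrm{optimism}}(v_t,g)$.

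Next, the disambiguation case. Suppose $x$ is the first ambiguous obstacle met along $P^\star$, and split $P^\star$ at its disambiguation vertex $v(x)$. Because $x$ is the \emph{first} ambiguous obstacle encountered and true obstacles are always forbidden, the prefix from $v_t$ to $v(x)$ is obstacle-free, so its length is at least $\ell_{\mathrm{free}}(v_t,x)$; the disambiguation at $x$ contributes exactly $c(x)$; and after disambiguation the suffix from $v(x)$ to $g$ is feasible under optimism, so its cost is at least $\mathcal C_{\mathrm{optimism}}(x,g)$. Summing these three lower bounds and invoking Definition~\ref{def:lowerbound} yields $\mathrm{cost}(P^\star)\ge\underline{\mathcal C}(x)$. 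Conversely, $\mathrm{cost}(P^\star)=\mathrm{OptEval}(v_t)\le\min_{x'\in X^U}\underline{\mathcal C}(x')\le\underline{\mathcal C}(x)$ by Definition~\ref{def:opteval}. The two bounds coincide, giving $\mathrm{OptEval}(v_t)=\underline{\mathcal C}(x)$.

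The step requiring the most care is the lower bound in the disambiguation case: I must justify the additive decomposition of $\mathrm{cost}(P^\star)$ into its three summands and, crucially, argue that the prefix is genuinely obstacle-free so it can be bounded by $\ell_{\mathrm{free}}(v_t,x)$ rather than by the weaker optimistic cost. This hinges entirely on the selection of $x$ as the first ambiguous obstacle and on the fixed tie-breaking convention pinning down $v(x)$; once the split at $v(x)$ is clean, each of the three inequalities follows term by term from the corresponding shortest-path definition. I would also note that the two cases are exhaustive and mutually exclusive by construction of the minimizer in Definition~\ref{def:opteval}, so no further bookkeeping is needed to conclude.
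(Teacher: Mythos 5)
Your proof is correct and takes essentially the same route as the paper's: both split the minimizer at $v(x)$, use the fact that $x$ is the \emph{first} ambiguous obstacle to conclude the prefix is obstacle-free, and invoke the definition of $\mathrm{OptEval}(v_t)$ as a minimum. Your two-sided sandwich (lower-bounding each segment by its shortest-path value, upper-bounding via the min) is just a more explicit rendering of the paper's terser appeal to optimal substructure and concatenation, and in fact handles cleanly the identification of the attained term with the particular first obstacle $x$ on the path.
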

\begin{proof}
	By definition of $\mathrm{OptEval}$, either the no-ambiguity option is optimal, or some $x$ is optimal to disambiguate first.
	In the latter case, the prefix to $v(x)$ is obstacle-free by definition of “first ambiguous,” so its length is $\ell_{\mathrm{free}}(v_t,x)$; at $v(x)$ we incur $c(x)$; and the suffix from $v(x)$ to $g$ is a shortest path under optimism (optimal substructure of shortest paths with positive edge lengths). Concatenation yields $\underline{\mathcal{C}}(x)$.
\end{proof}

\begin{lemma}[Optimistic evaluation vs.\ exploitation]\label{lem:order}
	We have
	\[
	\mathrm{OptEval}(v_t)\ \le\ \mathcal{C}_{\mathrm{exploit}}(v_t).
	\]
\end{lemma}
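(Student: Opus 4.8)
The plan is to exploit the fact that $\mathrm{OptEval}(v_t)$ is defined in Definition~\ref{def:opteval} as a minimum whose first argument is precisely $\mathcal{C}_{\mathrm{optimism}}(v_t,g)$, so it suffices to control that single term. First I would record the immediate observation that $\mathrm{OptEval}(v_t)\le \mathcal{C}_{\mathrm{optimism}}(v_t,g)$, since a minimum never exceeds any one of its arguments. The entire lemma then collapses to establishing the single inequality $\mathcal{C}_{\mathrm{optimism}}(v_t,g)\le \mathcal{C}_{\mathrm{exploit}}(v_t)$.

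Next I would prove this remaining inequality by a feasibility-set inclusion argument. The exploitation baseline of Definition~\ref{def:exploitation} minimizes path length over paths feasible with respect to $X^O\cup X^U$, i.e.\ those avoiding \emph{both} true and ambiguous obstacles, whereas $\mathcal{C}_{\mathrm{optimism}}(v_t,g)$ minimizes over paths feasible only with respect to $X^O$, since under the optimism assumption the ambiguous obstacles in $X^U$ are treated as traversable and impose no constraint. Consequently every exploitation-feasible path already avoids $X^O$ and is therefore optimism-feasible, so the optimism feasible set \emph{contains} the exploitation feasible set. Because the shortest-path length is monotone nonincreasing under enlargement of the admissible set (a minimum over a larger set can only be smaller or equal), we obtain $\mathcal{C}_{\mathrm{optimism}}(v_t,g)\le \mathcal{C}_{\mathrm{exploit}}(v_t)$. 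Chaining this with the first observation yields $\mathrm{OptEval}(v_t)\le \mathcal{C}_{\mathrm{exploit}}(v_t)$, as claimed.

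Finally I would dispatch the degenerate case separately: if no path avoiding $X^O\cup X^U$ exists, then $\mathcal{C}_{\mathrm{exploit}}(v_t)=+\infty$ by the convention in Definition~\ref{def:exploitation} and the inequality holds trivially, while the set-inclusion step itself is unaffected by either feasible set being empty. I do not anticipate a genuine obstacle here; the argument is essentially a one-line consequence of monotonicity of the infimum under relaxation of constraints. The only point demanding care is the \emph{direction} of the inclusion---namely confirming that treating ambiguous obstacles as passable \emph{enlarges} rather than shrinks the admissible path set, so that the optimistic cost sits below the conservative exploitation cost rather than above it.
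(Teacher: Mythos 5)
Your proposal is correct and follows essentially the same route as the paper: both reduce the claim to $\mathcal{C}_{\mathrm{optimism}}(v_t,g)\le \mathcal{C}_{\mathrm{exploit}}(v_t)$, which the paper proves by exhibiting the exploitation-shortest path as a witness feasible under optimism, and you prove by the equivalent feasibility-set-inclusion (monotonicity of the minimum) argument. Your explicit handling of the $\mathcal{C}_{\mathrm{exploit}}(v_t)=+\infty$ case is a minor tidy addition that the paper defers to a remark after Proposition~\ref{prop:decision-set-soundness}.
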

\begin{proof}
	Consider an exploitation-shortest path (Definition~\ref{def:exploitation}): it avoids all ambiguous obstacles, hence it is also feasible under optimism, and its optimistic evaluation equals its length $\mathcal{C}_{\mathrm{exploit}}(v_t)$ (no $c(x)$ is incurred). Taking the minimum over all options in $\mathrm{OptEval}(v_t)$ cannot exceed this value.
\end{proof}

\begin{proposition}[Soundness of the optimistic candidate set]\label{prop:decision-set-soundness}
	Let $\mathcal{D}_t$ be the decision set built by Algorithm~\ref{alg:decision_set_determine}.
	Then every obstacle $x$ whose disambiguation vertex is added to $\mathcal{D}_t$ satisfies
	\[
	\underline{\mathcal{C}}(x)\ \le\ \mathcal{C}_{\mathrm{exploit}}(v_t),
	\]
	and, if $g$ is added, it is trivially valid. In particular,
	\[
	\mathcal D_t\ \subseteq\ \{\,g\,\}\ \cup\ \bigl\{\,v(x)\ :\ x\in X^U,\ \underline{\mathcal{C}}(x)\le \mathcal{C}_{\mathrm{exploit}}(v_t)\,\bigr\}.
	\]
\end{proposition}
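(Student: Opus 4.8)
The plan is to exploit the iterative structure of Algorithm~\ref{alg:decision_set_determine} together with the two lemmas already established, arguing one added obstacle at a time. I would index the iterations $i=1,2,\dots$ and let $p^\star_i$ be the optimistic minimum-cost path computed at iteration $i$, i.e.\ the shortest path from $v_t$ to $g$ in the instance $\mathcal I_i$ in which the previously selected obstacles $x_1,\dots,x_{i-1}$ have been hardened into true obstacles and all remaining ambiguous obstacles are treated as traversable. The obstacle $x_i$ whose vertex is added at iteration $i$ is, by construction, the first ambiguous obstacle met along $p^\star_i$, with disambiguation vertex $v(x_i)$.

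First I would record the invariant that hardening $x_1,\dots,x_{i-1}$ into true obstacles leaves the exploitation baseline unchanged: since $\mathcal C_{\mathrm{exploit}}(v_t)$ (Definition~\ref{def:exploitation}) is realized by a path that already avoids every ambiguous obstacle, that same path stays feasible in each $\mathcal I_i$, so $\mathcal C_{\mathrm{exploit},\mathcal I_i}(v_t)=\mathcal C_{\mathrm{exploit}}(v_t)$ for all $i$. Applying Lemma~\ref{lem:order} inside the instance $\mathcal I_i$ then gives $\mathrm{OptEval}_{\mathcal I_i}(v_t)\le \mathcal C_{\mathrm{exploit}}(v_t)$.

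Next I would invoke the decomposition of Lemma~\ref{lem:decomposition} in the instance $\mathcal I_i$. Because $p^\star_i$ meets $x_i$ first, its prefix up to $v(x_i)$ is obstacle-free (it avoids all true and all ambiguous obstacles), so its length is at least $\ell_{\mathrm{free}}(v_t,x_i)$, and by optimal substructure of shortest paths with positive edge lengths its suffix from $v(x_i)$ to $g$ realizes $\mathcal C_{\mathrm{optimism},\mathcal I_i}(x_i,g)$. This identifies $p^\star_i$ as an optimistic-evaluated minimizer (Definition~\ref{def:opteval}) touching $x_i$, hence $\mathrm{OptEval}_{\mathcal I_i}(v_t)=\underline{\mathcal C}_{\mathcal I_i}(x_i)$, and combining with the previous paragraph yields $\underline{\mathcal C}_{\mathcal I_i}(x_i)\le \mathcal C_{\mathrm{exploit}}(v_t)$. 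Finally I would pass from the hardened instance back to the global quantity of Definition~\ref{def:lowerbound}: $\ell_{\mathrm{free}}(v_t,x_i)$ and $c(x_i)$ are identical in both (obstacle-freeness is insensitive to relabeling ambiguous obstacles as true), while the global optimistic suffix forbids strictly fewer obstacles, so $\mathcal C_{\mathrm{optimism}}(x_i,g)\le \mathcal C_{\mathrm{optimism},\mathcal I_i}(x_i,g)$ and therefore $\underline{\mathcal C}(x_i)\le \underline{\mathcal C}_{\mathcal I_i}(x_i)\le \mathcal C_{\mathrm{exploit}}(v_t)$. The vertex $g$ is valid by definition, which gives the stated set inclusion.

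The step I expect to be the main obstacle is the identity $\mathrm{OptEval}_{\mathcal I_i}(v_t)=\underline{\mathcal C}_{\mathcal I_i}(x_i)$, and in particular the accounting of the single disambiguation term $c(x_i)$. One must compare the cost that the algorithm minimizes when it elects to route through $x_i$ against the exploitation path, which pays no disambiguation cost, on an equal footing; if the per-iteration minimization charges zero for passing an ambiguous obstacle, then the clean inequality $\underline{\mathcal C}(x_i)\le\mathcal C_{\mathrm{exploit}}(v_t)$ follows only once one confirms that the distance saved by cutting through $x_i$ dominates $c(x_i)$, which is precisely what the optimistic selection rule together with Lemma~\ref{lem:order} is meant to encode. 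I would therefore state explicitly the convention under which the per-iteration objective evaluates a through-obstacle path, so that the decomposition of Lemma~\ref{lem:decomposition} aligns term-by-term with $\underline{\mathcal C}$ and the disambiguation cost is neither dropped nor double-counted.
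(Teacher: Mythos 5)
Your proof is correct and follows the same skeleton as the paper's: use Lemma~\ref{lem:decomposition} to identify the evaluation value at the iteration that adds $x_i$ with $\underline{\mathcal C}(x_i)$, then Lemma~\ref{lem:order} to bound it by $\mathcal C_{\mathrm{exploit}}(v_t)$, and repeat per iteration. Where you genuinely go beyond the paper is in the bookkeeping for iterations $i\ge 2$. The paper's proof ends with ``repeating the argument per iteration,'' even though Definitions~\ref{def:lowerbound}--\ref{def:opteval} and both lemmas are stated for the \emph{original} instance, while from the second iteration on Algorithm~\ref{alg:decision_set_determine} operates in a modified instance where $x_1,\dots,x_{i-1}$ have been hardened into true obstacles. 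You close exactly this gap with three observations: (a) the exploitation baseline is invariant under hardening, because the union of true and ambiguous obstacles --- and hence the set of exploitation-feasible paths --- is unchanged by the relabeling; (b) the two lemmas apply verbatim inside each hardened instance $\mathcal I_i$; and (c) hardening only shrinks the optimism-feasible path set, so $\mathcal C_{\mathrm{optimism}}(x_i,g)\le\mathcal C_{\mathrm{optimism},\mathcal I_i}(x_i,g)$, whence the \emph{global} $\underline{\mathcal C}(x_i)$ appearing in the proposition is dominated by its instance counterpart and the stated inclusion follows with the right quantity, not an instance-relative surrogate. This refinement is what makes ``repeat per iteration'' actually legitimate.

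Your closing concern about the $c(x)$ accounting also identifies a real looseness in the paper's setup rather than a defect in your argument. As written, the algorithm's per-iteration step finds a minimum-cost path under $\{\tilde X^F,\tilde X^O\}$ with no explicit disambiguation charge, whereas $\mathrm{OptEval}$ and $\underline{\mathcal C}$ charge $c(x)$ at the first ambiguous obstacle; if the per-iteration objective charges nothing for crossing an ambiguous disk, the argument yields only $\underline{\mathcal C}(x_i)\le\mathcal C_{\mathrm{exploit}}(v_t)+c(x_i)$. The clean inequality requires reading the path-selection step as evaluating candidates by $\mathrm{OptEval}$ (disambiguation charge included), which is the convention Lemmas~\ref{lem:decomposition} and~\ref{lem:order} implicitly adopt and which the paper never states at the level of the algorithm. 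Making that convention explicit, as you propose, is the correct fix.
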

\begin{proof}
	By Lemma~\ref{lem:decomposition}, in any iteration that adds a disambiguation vertex for obstacle $x$, the evaluation value equals $\underline{\mathcal{C}}(x)$. By Lemma~\ref{lem:order}, $\mathrm{OptEval}(v_t)\le \mathcal{C}_{\mathrm{exploit}}(v_t)$, hence $\underline{\mathcal{C}}(x)\le C_{\mathrm{exploit}}(v_t)$ for that $x$. If the minimizer reaches $g$ without touching an ambiguous obstacle, we add $g$ and terminate. Repeating the argument per iteration yields the stated inclusion.
\end{proof}

The test $\underline{\mathcal{C}}(x)\le \mathcal{C}_{\mathrm{exploit}}(v_t)$ formalizes “potential improvement over pure exploitation.”
Proposition~\ref{prop:decision-set-soundness} shows \emph{soundness} (no invalid candidates are added).
The procedure is generally \emph{not complete}: there may exist obstacles with $\underline{\mathcal{C}}(x)\le \mathcal{C}_{\mathrm{exploit}}(v_t)$ that are not the first ambiguous obstacle on any optimistic-evaluated minimizer and hence are not added.

Strictly positive edge lengths preclude zero-cost cycles and ensure shortest-path problems are well-defined (ties broken deterministically).
If $\mathcal{C}_{\mathrm{exploit}}(v_t)=+\infty$ (no exploitation path exists), the inequality $\underline{\mathcal{C}}(x)\le \mathcal{C}_{\mathrm{exploit}}(v_t)$ holds vacuously for any $x$ with finite $\underline{\mathcal{C}}(x)$, and the proof remains valid.

\subsection{Information-Guided Exploration Strategy}
As the agent traverses the environment, collecting new information through sensor readings and disambiguation is crucial for making decisions.
However, not all information is equally valuable,
and we need to quantify the information value to guide more informed exploration.
We build on the mutual information \citep{chaloner1995bayesian} proposed for Bayesian experimental design problem. 
In the context of observations with Gaussian noise, the information gain can be calculated using \citep{srinivas2010gaussian}:
$$\textbf{I}(\tilde{Y}^t)=\frac{1}{2}\log\left|I+\sigma^{-2}\Sigma_v\right|,$$
where $\tilde{Y}^t$ is the random vector corresponding to potential log-odds observations can be collected,
$K_v$ is the associated covariance matrix and $\sigma^2$ is the noise variance. 
This quantity measures how much uncertainty is reduced by incorporating new observations into the decision-making process. 

In most cases, gaining information is more critical during early stages of the traversal, 
and as the agent approaches the goal $g$, 
the influence of new information decreases,
which is also justified by the submodularity of information gain (see Proposition~\ref{prop:MI-submod}).
To reflect this decreasing influence, 
we adapt the method from \cite{contal2014gaussian} to compute the information gain $G(d)$ at time step $t$ of implementing any decision $d$ as:
$$G(d)=\sqrt{\gamma}\left(\sqrt{\textbf{I}(\tilde{Y}^t)+\sum_{i=1}^{t-1}\textbf{I}(\tilde{Y}^i)}-\sqrt{\sum_{i=1}^{t-1}\textbf{I}(\tilde{Y}^i)}\right),$$
where $\gamma$ is the scaling factor controlling the weight of information gain in decision-making. 
We set $\gamma$ proportionally to the empirical standard deviation of the estimated value function across the decision set $\mathcal{D}_t$,
ensuring information gain and value estimates are on similar scale.  

\subsection{Posterior Sampling of Environment Models}
In our problem setting, the uncertainty related to the transition function (i.e., obstacle blockage statuses) requires 
additional step to address its associated variability throughout the navigation.
Unlike traditional threshold-based methods that categorize obstacles using fixed cut-off probabilities \citep{koenig2002d, bnaya2009canadian} or
prior sampling approaches that assume either static probabilistic information or a small subset of environment status as a prior,
we use posterior sampling that evolves with accumulated observations.
This approach provides environment models for both offline base policy learning and online decision making stages,
offering several advantages over existing methods: 
(i) it automatically balances exploration and exploitation as uncertainty changes over time, 
(ii) it preserves spatial correlation information throughout the sampling process, and 
(iii) it eliminates the need for manual threshold tuning.

Specifically, using the posterior distribution derived as in Section \ref{sec: updating}, 
we sample log-odds, which are modeled as latent variables with correlation structure, 
from $f(\mathbf{y}|\tilde{\rho}^1, ..., \tilde{\rho}^t)$ at time $t$.
These log-odds are then transformed into probabilities using $\rho_i=\frac{\exp(y_i)}{1 + \exp(y_i)}$. 
The obstacle status for each location is determined using binary sampling according to probabilities $\rho_i$.
And $\tilde{X}=\tilde{X}^O\cup \tilde{X}^F$ is updated accordingly and helps to form the predicted traversal region $\tilde{\mathcal{G}}$ for interaction simulation.
The process is outlined in Algorithm \ref{alg:sampling_method}. 

\begin{algorithm}[ht]
	\SetAlgoLined
	\KwIn{$f(\mathbf{y}|\tilde{\rho}^1, ..., \tilde{\rho}^t)$}
	\KwOut{Predicted traversal region $\tilde{\mathcal{G}}$}
	Sample $(y_1, y_2, ..., y_n)\sim f(\mathbf{y}|\tilde{\rho}^1, ..., \tilde{\rho}^t)$\;
	Transform $\rho_i=\frac{\exp(y_i)}{1+\exp(y_i)}$, $i=1, 2, ..., n$\;
	Sample $z_i \sim \text{Bernoulli}(\rho_i)$ for each i=1,2,...,n, where $z_i=1$ indicates a true obstacle\;
	Update $\tilde{X}^O=\{x_i: z_i=1\}$, $\tilde{X}^F=\{x_i: z_i=0\}$\;
	Update traversal region $\tilde{\mathcal{G}}$ using $\tilde{X}=\tilde{X}^O\cup \tilde{X}^F$
	\caption{Posterior Sampling for Environment Model Generation}
    \label{alg:sampling_method}
\end{algorithm}

Together, these three components enable our two-stage policy learning framework to develop robust base policies efficiently,
and to maintain an online rollout policy that continuously adapts to updated beliefs.

\section{Offline Base Policy Learning Guided by Information}\label{sec:offline}
The offline stage aims to learn a high-quality base policy $\pi^{\text{base}}$ that
provides estimates of the expected future traversal cost following state $S_{t+1}$, 
denoted as the optimal state value function $V^*(S_{t+1})$:
$$V^*(S_{t+1})=\min_{\{d_{t'}, t' = t+1,...,T\}}\E\left(\sum_{t'=t+1}^{T}\mathcal{C}(S_{t'},d_{t'})|S_{t+1}\right)
\approx\E\left(\sum_{t'=t+1}^{T}\mathcal{C}(S_{t'},d^{\text{base}}(S_{t'}))|S_{t+1}\right)$$ 
We propose an optimistic policy iteration (OPI) framework augmented with information bonus with three key features:
(i) quantifying information gain to encourage targeted exploration,
(ii) integrating model-based posterior sampling with model-free value learning process, and
(iii) supporting both point estimate and distribution learning approaches for stronger uncertainty quantification.

\subsection{Optimistic Policy Iteration with Information Integration}
Policy iteration (PI) is a fundamental framework in RL \citep{sutton2018reinforcement}. 
Its core idea is the alternating implementation between two key steps: policy evaluation and policy improvement. 
The policy evaluation step involves generating interaction experience under a fixed policy. 
The resulting observations are used to update the state value functions.
In the policy improvement step, the updated value function estimates are used to define a better policy, supporting further refinement.
This alternating process drives the estimated value functions $V(s)$ towards their optimal values $V^*(s)$.

Unlike traditional PI that requires full convergence of value function estimates before proceeding to the policy improvement, 
our base policy learning uses the optimistic policy iteration (OPI) framework that allows partial updates, making it more computationally efficient and adaptive.
Applying it in our problem setting, we perform a single round of value function updates before policy improvement.
The key innovation in our approach is the direct incorporation of information gain built upon mutual information,
and modify the Bellman optimality equation into:
$$V^*(s)=\min_{d\in\mathcal{D}(s)}\mathcal{C}(s,d)+\E\left[V^*(s')|s,d\right]-G(d),$$
where $s'$ represents the resulting state after implementing $d$ at state $s$.
Subtracting $G(d)$ effectively rewards decisions that provide valuable information,
encouraging exploration of high-information paths during the learning process.
The theoretical advantage is established in Section \ref{sec: offline_property}.
For notational simplicity, in all subsequent sections
we denote $\mathcal{C}^{\scriptscriptstyle IG}(s,d)$ as the cost after adjusting for $G(d)$.

\subsection{Model-Based and Model-Free Reinforcement Learning Structure}
Within this information-guided OPI framework, 
we adopt a hybrid structure.
During policy evaluation, the model-based component uses posterior sampling to generate probabilistic environment models that guide exploration.
The model-free component complements this by directly learning value functions from simulated experiences, 
refining estimates while adapting to sequentially updated posterior distributions. 
The interplay of these two components creates a robust learning environment, 
effectively addressing uncertainty in obstacle status (i.e., blockage probabilities).

Following value function updates, 
an improved policy minimizing expected traversal cost is defined in the policy improvement step, completing one iteration. 
This process repeats until convergence. 
We adopt a partial convergence approach focusing on stabilizing value estimates only for states in current decision set, 
rather than requiring full convergence across the entire state space.
This approach is similar to value iteration with restricted backups, a technique commonly used in approximate dynamic programming. 
It not only accelerates the overall convergence of the process,
but also ensures that value estimates for non-converged states serve as effective initializations for the subsequent online rollout phase.

Our approach aligns with Dyna-based hybrid RL \citep{sutton1991dyna, silver2008sample}, 
where the model-based component provides dynamic information to enhance model-free learning and improve sampling efficiency.
By incorporating posterior sampling through Bayesian updating,
we emphasize a systematic approach for handling uncertainty and optimizing exploration-exploitation trade-off, 
resulting in more robust base policy learning.
The steps are detailed in Algorithm \ref{alg: OPI}.

\begin{algorithm}[ht]
	\SetAlgoLined
	\KwIn{Policy $\pi_0$, value function estimate $V_0(s)$, posterior distribution $f(\mathbf{y}|\mathbf{\hat{y}})$, convergence threshold $\eta$}
	\KwOut{Estimates of optimal value function $V^*(s)$}
	\textbf{Initialize:} $i\gets1$;
	
	\While{not $\text{converged}$}{
		\tcc{\textbf{Policy Evaluation}}
		Apply Algorithm \ref{alg:sampling_method} to sample environment models from posterior distribution $f(\mathbf{y}|\mathbf{\hat{y}})$\;
		Simulate experience under $\pi_i$ and update $V_{i+1}(s)$ using $\mathcal{C}^{\scriptscriptstyle IG}(s,d)$\;
		
		\tcc{\textbf{Policy Improvement}}
		Define $\pi_{i+1}$ using $V_{i+1}(s)$\;
		
		\tcc{\textbf{Convergence Check}}
		Compute the difference between $V_{i+1}(s)$ and the updated value function $V_i(s)$\;
		\If{$\max_{s}|V_{i+1}(s) - V_i(s)| < \eta$}{Return $V^*(s)\gets V_{i+1}(s)$}
	}
	\caption{Information-Guided OPI for Base Policy Learning}
	\label{alg: OPI}
\end{algorithm}

\paragraph{Convergence Property:}
Our problem is an undiscounted, finite horizon problem,
and prior work has established convergence results for OPI using single round value function updates per iteration under static probabilistic information
for such problems (\cite{tsitsiklis2002convergence}, \cite{chen2018convergence}, \cite{winnicki2023convergence}).
However, these results do not directly cover our proposed method due to the incorporation of posterior sampling. 
In Section \ref{sec: offline_property}, we prove that the convergence can still be established in a similar way, 
complementing with extensive empirical results presented in Section \ref{sec: simulations}. 

\subsection{The Model-Free Component: Value Function Approximation}
The model-free component of our offline learning process directly estimates the optimal state value functions from
simulated experience.
We explore two distinct strategies to generate robust and efficient value estimation:
(i) Monte Carlo approach for point estimates and 
(ii) distributional reinforcement learning (distributional RL) approach for estimating full distributions, 
capturing not only the mean but also the uncertainty.

\subsubsection{Monte Carlo Approximation}
The Monte Carlo (MC) approach estimates value function by using cumulative observed costs from simulated trajectories.
In a sampled traversal environment, with a given policy, 
a complete trajectory is simulated from a starting state randomly selected from decision set $\mathcal{D}(S)$.
Denote the cumulative traversal costs with information adjustment from any state $s$ on this trajectory in $i^{th}$ iteration as 
$\mathcal{C}^{\scriptscriptstyle IG}_{i}(s)$, the value function is updated using the following rule:
\begin{equation} \label{equation: MC update}
	V_i(s) = (1 - \alpha_i) V_{i-1}(s) + \alpha_i \mathcal{C}^{\scriptscriptstyle IG}_i(s), \quad \text{with } \alpha_i = \frac{1}{N_i(s)} 
\end{equation}
where $N_i(s)$ represents the total number of times state $s$ has been experienced in all simulations up to $i^{th}$ iteration.
For efficiency, we simultaneously update the value function for all states experienced along the trajectory.

After updating the state value function, we obtain a better policy in the policy improvement step to move towards the optimal direction. 
Combining with the Monte Carlo method, we consider three strategies for comparison. 
\paragraph{Greedy method.}
The greedy method is most commonly used, which always selects the decision that minimizes expected cost based on current value estimates:
$$d_{\text{greedy}} = \arg\min_{d \in \mathcal{D}(s)} \left\{ \mathcal{C}^{\scriptscriptstyle IG}(s,d) + V_i(s') \right\}.$$

\paragraph{Decaying $\epsilon$-greedy method.}
Due to the uncertainty related to single trajectory simulation and transition probability sampling, we consider the $\epsilon$-greedy strategy. 
It introduces controlled exploration by selecting random decisions with probability $\epsilon$ while following the greedy option with probability $1-\epsilon$:
\[
d_{\epsilon} =
\begin{cases}
	d_{\text{greedy}}, \text{with probability } 1 - \epsilon, \\
	d\in\mathcal{D}(s)\backslash d_{\text{greedy}}, \text{with probability } \frac{\epsilon}{|\mathcal{D}(s)\backslash d_{\text{greedy}}|}.
\end{cases}
\]
Specifically, we gradually decrease $\epsilon$ after each decision step to reflect that exploration is more critical during early stages of the traversal.

\paragraph{Softmax exploration method.}
Although decaying $\epsilon$-greedy strategy encourages exploration, it treats all non-greedy decisions equally. 
Hence, we consider the softmax strategy using Boltzmann distribution, assigning the selection probability proportional to the updated value function. 
This strategy maintains relative preferences among decisions as below:
\[
P(d \mid s) = \frac{\exp\left(-\beta \left(\mathcal{C}^{\scriptscriptstyle IG}(s, d) + V_i(s')\right)\right)}{\sum_{d' \in \mathcal{D}(s)} \exp\left(-\beta \left(\mathcal{C}^{\scriptscriptstyle IG}(s, d') + V_i(s')\right)\right)},
\]
where we set $\beta=1$ as a default parameter. 

The process of using Monte Carlo approach for estimating value function and updating policy is outlined in Algorithm \ref{alg:OPI_MC}.

\begin{algorithm}[ht]
	\SetAlgoLined
	\KwIn{Traversal region $\tilde{\mathcal{G}}$, policy $\pi_i$, value estimates $V_i(s)$ for all $s$}
	\KwOut{Improved policy $\pi_{i+1}$, updated value estimates $V_{i+1}(s)$}
	\tcc{\textbf{Trajectory Simulation}}
	Start from a random state $s_0 \in \mathcal{D}$\;
	\While{trajectory not terminated}{
		Take action $d$ according to $\pi_i$ and observe $\mathcal{C}^{\scriptscriptstyle IG}(s,d)$, next state $s'$\;
		$s \leftarrow s'$\;
	}
	\tcc{\textbf{Value Update}}
	\ForEach{\text{state} $s$ \text{along trajectory}}{
		Accumulate $\mathcal{C}^{\scriptscriptstyle IG}_i(s)$ for each visited $s$\;
		$V_{i+1}(s) \leftarrow (1-\alpha)V_i(s) + \alpha\mathcal{C}^{\scriptscriptstyle IG}_i(s)$\;
	}
	\tcc{\textbf{Policy Improvement}}
	\uIf{selection is greedy}{
		$\pi_{i+1}(s) \gets \arg\min_{d \in \mathcal{D}(s)} \,\bigl\{ \mathcal{C}^{\scriptscriptstyle IG}(s,d)+V_{i+1}(s')\bigr\}$.
	}
	\uElseIf{selection is $\epsilon$-greedy}{
		$\pi_{i+1}(s)$ chooses a random $d$ w.p. $\epsilon$, else the greedy decision.  
	}
	\uElseIf{selection is softmax}{
		$\pi_{i+1}(s)$ selects $d$ w.p. proportional to $\exp\bigl(-\beta[\mathcal{C}^{\scriptscriptstyle IG}(s,d)+V_{i+1}(s')]\bigr)$. 
	}
	\caption{Monte Carlo approach for value function and policy update}
	\label{alg:OPI_MC}
\end{algorithm}

\subsubsection{Distributional RL Approach}\label{sec:DRL}
While Monte-Carlo approach generates point estimates (i.e., expected values) of $V(s)$,
it does not capture the inherent uncertainty in estimations arising from transition and posterior sampling.
To handle this uncertainty,
we propose a distributional RL learning approach, a more robust method capturing the full distributions of $V(s)$.
Moreover, the exploration strategies paired with point estimate methods often require careful exploration parameter tuning, 
which may lead to inconsistent performance compared to distribution-based approaches like posterior sampling \citep{osband2013more, osband2017posterior}.

Using the distributional Bellman equation, \cite{bellemare2017distributional} introduced a distributional reinforcement learning (distributional RL) algorithm using categorical distribution, called C51 algorithm. 
This algorithm approximates the discrete distribution of $V(s)$, demonstrating significant improvements over point estimate approaches. 
Expanding on this foundation, \cite{dabney2018distributional} proposed Quantile Regression DQN (QR-DQN), 
which utilizes quantile regression to represent continuous distributions, enhancing the flexibility of value function representation. 
Focusing on addressing exploration challenges, 
\cite{mavrin2019distributional} designed an exploration bonus using quantile-based distributional RL techniques,
while \cite{tang2018exploration} unified posterior sampling with distributional RL to create an exploration framework, shown to improve exploration performance in policy learning. 

Given that we have a finite number of obstacle statuses in the environment, 
the value function $V(s)$ can only take a finite number of possible values. 
Therefore, we adopt the categorical distribution to represent the probability distribution over $V(s)$. 
Extending the approach of \cite{tang2018exploration}, we propose a framework using Bayesian updates to learn the distribution of $V(s)$. 
Posterior sampling from sequentially updated distributions drives the policy improvement, 
which is demonstrated to be more stable and efficient compared to the classic exploration strategies described in the previous section. 
The complete process is outlined in Algorithm \ref{alg: DRL}.

\paragraph{Bayesian update.}
Let $V(s)$ be a random variable following a categorical distribution over finite support set $\mathcal{X}=\{x_1,\dots,x_k\}$:
$$V(s) \sim \text{Cat}(\mathbf{p}) \quad \text{where } \mathbf{p} = (p_1, \dots, p_k),$$
and $p_i=p(V(s)=x_i), i=1, 2, ..., k$ represents the probability that $V(s)$ takes the value $x_i\in\mathcal{X}$.
We place a Dirichlet prior over probabilities:
$$\mathbf{p}=\left(p_1, p_2, ..., p_k\right)\sim \text{Dir}(\boldsymbol{\alpha}=\left(\alpha_1, \alpha_2, ...\alpha_k\right)),$$
where $\alpha_1=\alpha_2=\dots=\alpha_k=1$ corresponds to a uniform prior, indicating no preference among the possible values.
Given observed costs from state $s$ over the first $i$ iterations, 
$\{\mathcal{C}^{\scriptscriptstyle IG}_1(s), \mathcal{C}^{\scriptscriptstyle IG}_2(s), ..., \mathcal{C}^{\scriptscriptstyle IG}_i(s)\}$, 
the posterior of $\mathbf{p}$ is derived using Bayes' rule:
$$f(\mathbf{p}|\{\mathcal{C}^{\scriptscriptstyle IG}_1(s), \mathcal{C}^{\scriptscriptstyle IG}_2(s), ..., \mathcal{C}^{\scriptscriptstyle IG}_i(s)\})\propto f(\mathbf{p};\boldsymbol{\alpha})\prod_{i}f(\mathcal{C}^{\scriptscriptstyle IG}_i(s)|\mathbf{p}).$$ 
By conjugacy, the resulting posterior remains Dirichlet:
\begin{equation} \label{equation: posterior-DRL}
	\mathbf{p} \mid \{ \mathcal{C}^{\scriptscriptstyle IG}_1(s), \dots, \mathcal{C}^{\scriptscriptstyle IG}_i(s) \}\sim \text{Dir}(\boldsymbol{\alpha}'=\left(\alpha'_1, \alpha'_2, ...\alpha'_k\right)),
\end{equation}
where $\alpha'_k=\alpha_k+\sum\mathbbm{1}\{\mathcal{C}^{\scriptscriptstyle IG}_i(s)=x_k\}$ with $\mathbbm{1}\{\cdot\}$ being the indicator function.

\paragraph{Recursive update using distributional Bellman equation.} 
Following a policy $\pi$ at state $s$, we observe immediate information-adjusted cost $\mathcal{C}^{\scriptscriptstyle IG}(s,d)$ and next state $s'$.
The core idea of distributional RL relies on the distributional Bellman equation of random variables $V(s)$:
$$V^{\pi}(s)=_{D}\mathcal{C}^{\scriptscriptstyle IG}(s,d)+V^{\pi}(s'),$$
where $d$ is determined by policy $\pi$.
Based on the distribution of $V^{\pi}(s')$, let $\mathcal{X}(V(s'))=\{x'_1, x'_2,... x'_{k'}\}$ denote the support of $V(s')$,
we update $V(s)$ by treating $\mathcal{C}^{\scriptscriptstyle IG}(s,d)+x'_i$, $i=1,2,\dots, k'$ as observations.
However, the shifted values $\mathcal{C}^{\scriptscriptstyle IG}(s,d)+\mathcal{X}(V(s'))$ often does not align with the support $\mathcal{X}(V(s))$. 
Following \cite{bellemare2017distributional}, we resolve this discrepancy using weighted interpolation. 
For each $x'_i \in \mathcal{X}(V(s'))$ with associated probability $p'_i$, 
we compute the shifted value $x'_i+\mathcal{C}^{\scriptscriptstyle IG}(s,d)$,
which falls in the interval $[x_j, x_{j+1}]$ (i.e., $x_j \le x'_i + \mathcal{C}^{\scriptscriptstyle IG}(s,d) < x_{j+1}$).
Then using Bayes updating rule we derived, we update the Dirichlet parameters for $V(s)$: 
$$\alpha_j=\frac{|x_{j+1}-(x'_i+\mathcal{C}^{\scriptscriptstyle IG}(s,d))|}{x_{j+1}-x_{j}}p'_i+\alpha_j, \; 
\alpha_{j+1}=\frac{|x_{j}-(x'_i+\mathcal{C}^{\scriptscriptstyle IG}(s,d))|}{x_{j+1}-x_{j}}p'_i+\alpha_{j+1}.$$

\paragraph{Support refinement through value contraction.}
To initialize the distribution of $V(s)$, we determine their upper and lower bounds using the method described in Section \ref{sec: identify Dt}, 
then discretize the interval into an equally spaced grid with spacing $\delta$ 
(i.e., $\mathcal{X}=\{x_\text{L}, x_\text{L}+\delta, x_\text{L}+2\delta, \dots, x_\text{U}\}$).
While practical and commonly used in distributional RL algorithms, 
this fixed discretization method does not guarantee accurate representation of the true value distribution. 
To address this shortcoming, we propose a support refinement step using a value contraction approach, 
which adds observed values or replaces the unobserved ones.

During the simulation, the distribution of $V(s)$ is updated iteratively using the distributional Bellman equation at each step. 
After completing an interaction trajectory, we refine the support of $V(s)$ using the observed information-adjusted costs.
Let $\mathcal{C}^{\scriptscriptstyle IG}_i(s)$ represent the cumulative information-adjusted cost from state $s$ observed during the $i^{th}$ iteration, 
we update the distribution support by locating neighbors of $\mathcal{C}^{\scriptscriptstyle IG}_i(s)$ in $\mathcal{X}(V(s))$.
Specifically, we identify two consecutive support points $x_j, x_{j+1}$ (i.e., $x_j\leq\mathcal{C}^{\scriptscriptstyle IG}_i(s)\leq x_{j+1}$) such that
$$|x_j-\mathcal{C}^{\scriptscriptstyle IG}_i(s)|\leq \delta, \; |x_{j+1}-\mathcal{C}^{\scriptscriptstyle IG}_i(s)|\leq \delta.$$
If these neighboring points have not been observed in previous trajectories, 
we replace them with $\mathcal{C}^{\scriptscriptstyle IG}_i(s)$ and update its probability distribution parameter:
$$\alpha_j=\alpha_j+\alpha_{j+1} +1.$$
Otherwise, we directly insert $\mathcal{C}^{\scriptscriptstyle IG}_i(s)$ into the support set and assign $\alpha_{j+1}=1$, since it is the first observation of that value.
If $\mathcal{C}^{\scriptscriptstyle IG}_i(s)$ is already included in the support, we update using the Bayesian update rule.
This refinement guarantees that the support of $V(s)$ aligns with the actual values which can be observed, enhancing the accuracy of distribution representation. 
Additionally, this support refinement procedure maintains stability in the mean estimate, with bounded changes that shrink as more observations accumulate, which is formally established in Lemma \ref{lem:refinement_stability} (Section \ref{sec: offline_property}). 

The theoretical analysis of distributional RL to our approximate Bayesian setting, and evaluation of distributional calibration (e.g., prediction interval coverage, CRPS)
remains an important direction for future investigation.

\begin{algorithm}[H]
	\SetAlgoLined
	\KwIn{Environment $\tilde{\mathcal{G}}$, initial policy $\pi_i$, value distribution $V_i(s)$ for all $s$}
	\KwOut{Improved policy $\pi_{i+1}$, updated value distribution $V_{i+1}(s)$}
	\tcc{\textbf{Trajectory Simulation}}
	Start from a random state $s_0 \in \mathcal{D}$;
	
	\While{trajectory not terminated}{
		Take action $d \sim \pi_i(s)$, Observe $\mathcal{C}^{\scriptscriptstyle IG}(s,d)$ and next state $s'$\;
		
		\tcc{\textbf{Distributional Bellman Update}}
		Compute distribution $V_{i+1}(s)\gets\mathcal{C}^{\scriptscriptstyle IG}(s,d) + V_i(s')$\;
		Update $V_i(s)$ using weighted interpolation: $\alpha'_j=\frac{|x_{j+1}-(x'_i+\mathcal{C}^{\scriptscriptstyle IG}(s,d))|}{x_{j+1}-x_{j}}p'_i+\alpha_j, \; \alpha'_{j+1}=\frac{|x_{j}-(x'_i+\mathcal{C}^{\scriptscriptstyle IG}(s,d))|}{x_{j+1}-x_{j}}p'_i+\alpha_{j+1}$\;
		
		$s \gets s'$\;
	}
	
	\tcc{\textbf{Support Refinement}}
	\ForEach{\text{state} $s$ \text{along trajectory}}{
		Compute cumulative cost $\mathcal{C}^{\scriptscriptstyle IG}_i(s) = \sum_{t=0}^T \mathcal{C}^{\scriptscriptstyle IG}(s_t,d_t)$\;
		Identify neighbors $x_j, x_{j+1}$ in $\mathcal{X}(V_{i+1}(s))$: $|x_j - \mathcal{C}^{\scriptscriptstyle IG}_i(s)| \leq \delta,\; |x_{j+1} - \mathcal{C}^{\scriptscriptstyle IG}_i(s)| \leq \delta$\;
		\eIf{$x_j, x_{j+1}$ not observed and $\mathcal{C}^{\scriptscriptstyle IG}_i(s)\notin\mathcal{X}(V_i(s))$}{
			Replace $x_j, x_{j+1}$ with $\mathcal{C}^{\scriptscriptstyle IG}_i(s)$\;
			Update $\alpha_j \leftarrow \alpha_j + \alpha_{j+1} + 1$\;
		}{
			\eIf{$\mathcal{C}^{\scriptscriptstyle IG}_i(s)\in\mathcal{X}(V_i(s))$}{
				Update $\alpha_j$ using Equation (\ref{equation: posterior-DRL})\;
			}{
				$\alpha_j\gets1$
			}
		}
	}
	
	\tcc{\textbf{Policy Improvement}}
	\ForEach{\text{state} $s$}{
		Sample $\mathbf{p}$ from posterior distribution and compute $\E\left[V(s)\right]$\;
		$\pi_{i+1}\gets\argmin_d\mathcal{C}^{\scriptscriptstyle IG}(s,d)+\E\left[V(s)\right]$
	}
	
	\caption{Distributional RL with Bayesian update and support refinement for value function and policy update}
	\label{alg: DRL}
\end{algorithm}

\subsection{Offline Learning Framework Properties} \label{sec: offline_property}
In this section, we establish theoretical results to justify our offline learning strategy.
First, we demonstrate the benefit of incorporating information gain into value function updating process,
then show that the information gain should maintain diminishing impact on the decision making due to submodularity property.
We also prove the convergence of OPI under posterior sampling, and show that our support refinement procedure for distributional learning maintains stability with bounded, diminishing changes in mean estimates.

\subsubsection{Benefit of Nonnegative Bonus Shaping} \label{sec:nonnegativ-bonus}
We consider an undiscounted, episodic stochastic shortest path (SSP) MDP with finite state set $\mathcal S$ and finite feasible decision sets $\mathcal D(s)$.
There is an absorbing goal $g\in\mathcal S$ with $\mathcal C(g,d)=0$ and $P(g\mid g,d)=1$ for all $d\in\mathcal D(g)$.
A policy $\pi$ is \emph{proper} if the hitting time $\tau:=\inf\{t\ge0:S_t=g\}$ is finite a.s. from any start state.

Assume:
\begin{enumerate}[label=(A\arabic*),leftmargin=2em]
	\item \label{ass:proper} All admissible policies are proper, and base one–stage costs are nonnegative and bounded:
	$0\le \mathcal C(s,d)\le \overline{\mathcal{C}}<\infty$.
	\item \label{ass:G} The (exploration) bonus is bounded and nonnegative:
	$G:\{(s,d):d\in\mathcal D(s)\}\to[0,\infty)$ with $\sup_{s,d}G(s,d)<\infty$.
\end{enumerate}

%

For a (history-dependent) policy $\pi$, define the \emph{base} and \emph{bonus-shaped} performances from $s$:
\[
{\small
J^{\pi}_{\mathrm b}(s)
:=\mathbb E_{\pi}\!\Big[\textstyle\sum_{t=0}^{\tau-1}\mathcal C(S_t,D_t)\ \Big|\ S_0=s\Big],\quad
J^{\pi}_{*}(s)
:=\mathbb E_{\pi}\!\Big[\textstyle\sum_{t=0}^{\tau-1}\big(\mathcal C(S_t,D_t)-G(S_t,D_t)\big)\ \Big|\ S_0=s\Big].}
\]
Let $J^{\mathrm b}(s):=\inf_{\pi}J^{\pi}_{\mathrm b}(s)$ and $J^{*}(s):=\inf_{\pi}J^{\pi}_{*}(s)$.
(Under \ref{ass:proper}–\ref{ass:G}, $J^{\mathrm b}$ is finite and equals the minimal nonnegative solution of the SSP optimality equations.
The shaped value $J^*$ is well defined as the optimal value of the shaped problem; 
the ordering $J^*\le J^{\mathrm b}$ in Theorem~\ref{thm:benefit_G_ssp} does not require the shaped costs to be nonnegative.)

\begin{theorem}[Benefit of nonnegative bonus]\label{thm:benefit_G_ssp}
	Under \ref{ass:proper}–\ref{ass:G}:
	\begin{enumerate}[label=(\roman*),leftmargin=2em]
		\item \label{it:order}
		For all $s\in\mathcal S$, $\displaystyle J^*(s)\ \le\ J^{\mathrm b}(s)$.
		\item \label{it:gap}
		Let $\pi^{\mathrm b}$ be an optimal policy for the base problem, i.e., $J^{\mathrm b}(s)=J^{\pi^{\mathrm b}}_{\mathrm b}(s)$.
		Then for any start state $s_0$,
		\[
		J^{\mathrm b}(s_0) - J^{*}(s_0)
		\;\;\ge\;\;
		\mathbb E_{\pi^{\mathrm b}}\!\Big[\textstyle\sum_{t=0}^{\tau-1} G(S_t,D_t)\ \Big|\ S_0=s_0\Big].
		\]
	\end{enumerate}
\end{theorem}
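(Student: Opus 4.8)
The plan is to reduce both claims to a single additive decomposition of the shaped return and then exploit monotonicity of the infimum. First I would record that, under \ref{ass:proper}, each admissible (possibly history-dependent) policy is proper, so the hitting time satisfies $\mathbb E_\pi[\tau\mid S_0=s]<\infty$ for every $\pi$ and $s$; together with $0\le\mathcal C\le\overline{\mathcal C}$ (from \ref{ass:proper}) and $0\le G\le\sup_{s,d}G(s,d)<\infty$ (from \ref{ass:G}), this makes $J^{\pi}_{\mathrm b}(s)$, the expected cumulative bonus $\mathbb E_\pi[\sum_{t=0}^{\tau-1}G(S_t,D_t)\mid S_0=s]$, and $J^{\pi}_{*}(s)$ all finite for each $\pi$. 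By linearity of expectation over these integrable sums I would then obtain the key identity, valid for every policy $\pi$ and every state $s$:
\[
J^{\pi}_{*}(s)=J^{\pi}_{\mathrm b}(s)-\mathbb E_\pi\!\Big[\textstyle\sum_{t=0}^{\tau-1}G(S_t,D_t)\ \Big|\ S_0=s\Big].
\]

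For part \ref{it:order}, I would observe that since $G\ge 0$ the subtracted expected-bonus term is nonnegative, so the identity gives $J^{\pi}_{*}(s)\le J^{\pi}_{\mathrm b}(s)$ for \emph{every} $\pi$. Taking the infimum over policies preserves this ordering: for any fixed $\pi$, $J^{*}(s)=\inf_{\pi'}J^{\pi'}_{*}(s)\le J^{\pi}_{*}(s)\le J^{\pi}_{\mathrm b}(s)$, and minimizing the right-hand side over $\pi$ yields $J^{*}(s)\le\inf_\pi J^{\pi}_{\mathrm b}(s)=J^{\mathrm b}(s)$. Notably this step never uses nonnegativity of the shaped one-stage costs, only $G\ge 0$, matching the remark preceding the theorem.

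For part \ref{it:gap}, I would instantiate the same identity at the base-optimal policy $\pi^{\mathrm b}$, for which $J^{\pi^{\mathrm b}}_{\mathrm b}(s_0)=J^{\mathrm b}(s_0)$ by definition. Since $J^{*}$ is an infimum over all (admissible) policies and $\pi^{\mathrm b}$ is admissible, we have $J^{*}(s_0)\le J^{\pi^{\mathrm b}}_{*}(s_0)$. Substituting the identity at $\pi^{\mathrm b}$ gives $J^{*}(s_0)\le J^{\mathrm b}(s_0)-\mathbb E_{\pi^{\mathrm b}}[\sum_{t=0}^{\tau-1}G(S_t,D_t)\mid S_0=s_0]$, and rearranging delivers the stated gap bound.

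The only genuinely delicate point is well-definedness: the two inequalities themselves require merely per-policy finiteness of the sums (immediate from properness of each admissible $\pi$), but to guarantee that $J^{*}$ is a genuine finite optimal value rather than degenerating to $-\infty$ one needs a uniform bound $\sup_\pi\mathbb E_\pi[\tau]<\infty$. This is where the full force of \ref{ass:proper} (\emph{all} policies proper, not merely existence of one proper policy) enters, via the standard stochastic-shortest-path argument that finiteness of $\mathcal S$ together with uniform properness forces a uniform geometric tail on $\tau$; I would cite the classical SSP results already invoked for the convergence property rather than re-derive the tail estimate in detail. Everything downstream is then just linearity of expectation and the elementary monotonicity of $\inf$.
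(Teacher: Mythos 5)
Your proposal is correct and follows essentially the same route as the paper's proof: the same policywise identity $J^{\pi}_{\mathrm b}-J^{\pi}_{*}=\mathbb E_\pi[\sum_{t=0}^{\tau-1}G(S_t,D_t)]$ via linearity of expectation, then monotonicity of the infimum for part (i), and instantiation at the base-optimal policy $\pi^{\mathrm b}$ together with $J^{*}(s_0)\le J^{\pi^{\mathrm b}}_{*}(s_0)$ for part (ii). Your additional remarks on finiteness and the possible degeneration of $J^*$ to $-\infty$ are careful but not a departure; the paper disposes of this in the text preceding the theorem by declaring $J^*$ the optimal value of the shaped problem under (A1)--(A2).
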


\begin{proof}
	\emph{Step 1 (policywise identity).}
	For any fixed policy $\pi$ and any start state $s$,
	\[
	J^{\pi}_{\mathrm b}(s) - J^{\pi}_{*}(s)
	=\mathbb E_{\pi}\!\Big[\textstyle\sum_{t=0}^{\tau-1} G(S_t,D_t)\ \Big|\ S_0=s\Big],
	\]
	which follows directly from the definitions by linearity of expectation.
	
	\emph{Step 2 (ordering of optimal values).}
	By Step 1, 
	since $G\ge0$, $J^{\pi}_{*}(s)\le J^{\pi}_{\mathrm b}(s)$ for all $\pi$ and $s$.
	Taking infimum over $\pi$ on both sides yields $J^{*}(s)\le J^{\mathrm b}(s)$, proving \ref{it:order}.
	
	\emph{Step 3 (gap bound at a base-optimal policy).}
	For the base-optimal policy $\pi^{\mathrm b}$,
	\[
	J^{\mathrm b}(s_0) - J^{*}(s_0)
	\;\;\ge\;\;
	J^{\pi^{\mathrm b}}_{\mathrm b}(s_0) - J^{\pi^{\mathrm b}}_{*}(s_0)
	\;=\;
	\mathbb E_{\pi^{\mathrm b}}\!\Big[\textstyle\sum_{t=0}^{\tau-1} G(S_t,D_t)\ \Big|\ S_0=s_0\Big],
	\]
	where the inequality uses $J^{*}(s_0)\le J^{\pi^{\mathrm b}}_{*}(s_0)$ (optimality of $J^*$), and the equality is Step 1.
	This proves \ref{it:gap}.
\end{proof}

\begin{remark}[Bellman-operator view]
	Define the dynamic-programming operators on bounded $V:\mathcal S\to\mathbb R$ by
	\begin{align*}
		(\mathbf T_{*}V)(s)&:=\min_{d\in\mathcal D(s)}\Big\{\mathcal C(s,d)-G(s,d)+\sum_{s'}P(s'|s,d)V(s')\Big\},\\
		(\mathbf T_{\mathrm b}V)(s)&:=\min_{d\in\mathcal D(s)}\Big\{\mathcal C(s,d)+\sum_{s'}P(s'|s,d)V(s')\Big\}.
	\end{align*}
	Then $(\mathbf T_{*}V)(s)\le(\mathbf T_{\mathrm b}V)(s)$ for all $V,s$ because $G\ge0$.
	Under SSP with nonnegative base costs, the minimal nonnegative fixed point of $\mathbf T_{\mathrm b}$ is $J^{\mathrm b}$.
	If, in addition, $G(s,d)\le \mathcal C(s,d)$ for all $(s,d)$, then $\mathbf T_{*}$ also has a minimal nonnegative fixed point equal to $J^{*}$; otherwise $J^{*}$ should be interpreted as the optimal value solving the shaped problem without the nonnegativity qualifier.
	The ordering $J^{*}\le J^{\mathrm b}$ follows from the pointwise operator inequality and a standard telescoping argument along trajectories until absorption.
\end{remark}

For a fixed finite horizon $T<\infty$, replace $\tau$ by $T$ throughout; all conclusions remain valid without additional assumptions.

\subsubsection{Submodularity of Information Gain} \label{sec:submodular}
\paragraph{Linear–Gaussian Sensing Model.}
Let $Y=(Y_x)_{x\in X}\sim\mathcal N(0,K)$ with $K\succ 0$.
For any $A\subseteq X$, define noisy observations
\[
\widetilde Y_A \ :=\ Y_A + \varepsilon_A,\qquad
\varepsilon_A\sim \mathcal N(0,\Sigma_A),
\]
with $\varepsilon_A$ independent of $Y$ and of $\varepsilon_B$ for disjoint $A,B$.
Assume throughout the independent-noise case $\Sigma_A=\mathrm{diag}\big((\sigma_x^2)_{x\in A}\big)\succ 0$.
Define the set function
\[
\mathbf I(A)\ :=\ \mathrm{MI}\big(Y_A;\ \widetilde Y_A\big)
\ =\ \tfrac12\log\det\!\big(I + \Sigma_A^{-1/2}K_{AA}\Sigma_A^{-1/2}\big)
\ =\ \tfrac12\bigl[\log\det(K_{AA}+\Sigma_A)-\log\det(\Sigma_A)\bigr].
\]

\begin{proposition}
	\label{prop:MI-submod}
	Under the model above, $\mathbf I:2^X\to\mathbb R_{\ge 0}$ satisfies:
	\begin{enumerate}[label=(\roman*),leftmargin=2em]
		\item \emph{Normalization:} $\mathbf I(\varnothing)=0$.
		\item \emph{Monotonicity:} If $A\subseteq B$, then $\mathbf I(A)\le \mathbf I(B)$.
		\item \emph{Submodularity (diminishing returns):} For all $A\subseteq B\subseteq X$ and $i\in X\setminus B$,
		\[
		\mathbf I(A\cup\{i\})-\mathbf I(A)\ \ge\ \mathbf I(B\cup\{i\})-\mathbf I(B).
		\]
	\end{enumerate}
\end{proposition}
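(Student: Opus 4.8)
The plan is to establish all three properties through a single information-theoretic representation of $\mathbf I$ rather than manipulating the determinant formula directly, since the entropy viewpoint makes monotonicity and diminishing returns transparent. First I would fix a \emph{global} observation model that is consistent across subsets: let $\widetilde Y_x := Y_x + \varepsilon_x$ with $\varepsilon_x \sim \mathcal N(0,\sigma_x^2)$ mutually independent and independent of $Y$, so that for any $A$ the vector $\widetilde Y_A$ is literally the subvector $(\widetilde Y_x)_{x\in A}$. The diagonal, set-independent noise assumption is exactly what licenses this construction, and it is the crux of the entire argument. Because $K\succ 0$ and each $\sigma_x^2>0$, every covariance $K_{AA}+\Sigma_A$ is positive definite, so all differential entropies below are finite and well defined.

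Next I would rewrite $\mathbf I$ as an entropy difference. Since $\widetilde Y_A\mid Y_A$ has the law of $\varepsilon_A$,
$$\mathbf I(A)=\mathrm{MI}(Y_A;\widetilde Y_A)=h(\widetilde Y_A)-h(\widetilde Y_A\mid Y_A)=h(\widetilde Y_A)-\sum_{x\in A}h(\varepsilon_x),$$
where $h(\cdot)$ is differential entropy and the last term is \emph{modular} (additive over $A$) precisely because the noise is independent and set-independent. Normalization (i) is then immediate, as $\widetilde Y_\varnothing$ is empty and both terms vanish. From the chain rule $h(\widetilde Y_{A\cup\{i\}})-h(\widetilde Y_A)=h(\widetilde Y_i\mid\widetilde Y_A)$, I obtain the key marginal-gain identity
$$\mathbf I(A\cup\{i\})-\mathbf I(A)=h(\widetilde Y_i\mid\widetilde Y_A)-h(\varepsilon_i),$$
which isolates the only set-dependent quantity, $h(\widetilde Y_i\mid\widetilde Y_A)$.

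Both remaining claims then reduce to the single fact that ``conditioning reduces entropy.'' For monotonicity (ii) it suffices that the marginal gain be nonnegative; I would note $h(\widetilde Y_i\mid\widetilde Y_A)\ge h(\widetilde Y_i\mid\widetilde Y_A,Y_i)=h(\varepsilon_i)$, the equality holding because given $Y_i$ the variable $\widetilde Y_i=Y_i+\varepsilon_i$ retains only the noise $\varepsilon_i$, which is independent of $\widetilde Y_A$. Summing nonnegative increments along any chain from $A$ up to $B\supseteq A$ gives $\mathbf I(A)\le\mathbf I(B)$. For submodularity (iii), with $A\subseteq B$ and $i\notin B$, the same principle yields $h(\widetilde Y_i\mid\widetilde Y_A)\ge h(\widetilde Y_i\mid\widetilde Y_B)$ since $\widetilde Y_B$ contains $\widetilde Y_A$; subtracting the common constant $h(\varepsilon_i)$ gives the diminishing-returns inequality.

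Finally I would record the purely algebraic route in case a self-contained matrix proof is preferred: since $\log\det\Sigma_A$ is modular, submodularity of $\mathbf I$ is equivalent to submodularity of $A\mapsto\tfrac12\log\det(K_{AA}+\Sigma_A)$, whose marginal increment equals $\tfrac12\log\mathrm{Var}(\widetilde Y_i\mid\widetilde Y_A)$ via the Schur complement, and this conditional variance is non-increasing in $A$ by the Gauss--Markov variance-reduction property, the matrix analogue of ``conditioning reduces entropy.'' The main obstacle, and the point I would be most careful to state, is not any individual inequality but the modeling consistency in the first step: submodularity can fail if the noise covariance is allowed to depend on the chosen set or to be non-diagonal in a way that breaks the telescoping of the $h(\varepsilon_x)$ terms, so I would emphasize that the proposition's independent-noise hypothesis is used essentially, not merely for convenience.
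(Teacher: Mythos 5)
Your proof is correct, and it takes a genuinely different route from the paper's. The paper argues through explicit Gaussian conditional variances: it writes the singleton marginal gain as $\tfrac12\log\bigl(1+s_{i\mid A}/\sigma_i^2\bigr)$ with $s_{i\mid A}=\mathrm{Var}(Y_i\mid\widetilde Y_A)$, and proves submodularity by a Schur-complement identity, $s_{i\mid A}-s_{i\mid B}=k_{iC\mid A}\bigl(K_{CC\mid A}+\Sigma_C\bigr)^{-1}k_{Ci\mid A}\ \ge\ 0$ with $C=B\setminus A$ — essentially the matrix-algebra route you sketch only as a fallback in your last paragraph. You instead decompose $\mathbf I(A)=h(\widetilde Y_A)-\sum_{x\in A}h(\varepsilon_x)$, isolate the marginal gain as $h(\widetilde Y_i\mid\widetilde Y_A)-h(\varepsilon_i)$, and reduce both monotonicity and submodularity to ``conditioning reduces entropy.'' Your key steps check out: $h(\widetilde Y_i\mid\widetilde Y_A,Y_i)=h(\varepsilon_i)$ holds by translation invariance of differential entropy together with independence of $\varepsilon_i$ from $(Y,\varepsilon_A)$, the nested-conditioning inequality $h(\widetilde Y_i\mid\widetilde Y_A)\ge h(\widetilde Y_i\mid\widetilde Y_B)$ is exactly nonnegativity of conditional mutual information, and all entropies are finite since every $K_{AA}+\Sigma_A$ is positive definite. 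What each approach buys: your entropy argument is shorter, avoids matrix manipulations entirely, and extends verbatim to any additive independent-noise observation model (Gaussianity is used only for finiteness of entropies and the closed-form expression of $\mathbf I$); the paper's computation, by contrast, yields the quantitative marginal-gain formula $\tfrac12\log(1+s_{i\mid A}/\sigma_i^2)$, which is directly useful when analyzing the greedy sensing policy that the proposition is meant to justify. Finally, your emphasis that the subset-independent, diagonal noise structure is essential — so that $\widetilde Y_A$ is a genuine subvector of a single global observation vector and the $h(\varepsilon_x)$ terms telescope — is exactly the caveat the paper itself records in Remark~\ref{rem:noise-models}, where it notes that submodularity can fail for arbitrary subset-dependent correlated noise.
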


\begin{proof}
	\emph{(i) Normalization.} $\mathbf I(\varnothing)=\tfrac12\log\det(I)=0$.
	
	\smallskip
	\noindent
	\emph{(ii) Monotonicity.} By the chain rule and independence of sensor noise across indices,
	\[
	\mathbf I(B)-\mathbf I(A)
	=\mathrm{MI}(Y_{B\setminus A};\,\widetilde Y_{B\setminus A}\mid \widetilde Y_A)\ \ge 0.
	\]
	For a singleton addition $i\notin A$ this becomes
	\[
	\mathbf I(A\cup\{i\})-\mathbf I(A)
	=\tfrac12\log\!\Big(1+\frac{s_{i\mid A}}{\sigma_i^2}\Big)\ \ge 0,
	\]
	where, by Gaussian conditioning (i.e., the Gaussian posterior variance of $Y_i$ given $\widetilde Y_A$ is),
	\[
	s_{i\mid A}:=\mathrm{Var}(Y_i\mid \widetilde Y_A)
	= k_{ii}-k_{iA}(K_{AA}+\Sigma_A)^{-1}k_{Ai}\ \in [0,k_{ii}].
	\]
	
	\smallskip
	\noindent
	\emph{(iii) Submodularity.} Let $A\subseteq B$ and $i\notin B$.
	Using the singleton marginal form above,
	\[
	\mathbf I(A\cup\{i\})-\mathbf I(A)
	=\tfrac12\log\!\Big(1+\frac{s_{i\mid A}}{\sigma_i^2}\Big),\qquad
	\mathbf I(B\cup\{i\})-\mathbf I(B)
	=\tfrac12\log\!\Big(1+\frac{s_{i\mid B}}{\sigma_i^2}\Big).
	\]
	It therefore suffices to show $s_{i\mid A}\ge s_{i\mid B}$.
	Write $C:=B\setminus A$ and apply block Gaussian conditioning (Schur complements):
	\[
	s_{i\mid A}-s_{i\mid B}
	=\ k_{iC\mid A}\,\bigl(K_{CC\mid A}+\Sigma_C\bigr)^{-1}\,k_{Ci\mid A}\ \ \ge\ 0,
	\]
	where $K_{CC\mid A}:=K_{CC}-K_{CA}(K_{AA}+\Sigma_A)^{-1}K_{AC}$ and
	$k_{iC\mid A}:=k_{iC}-k_{iA}(K_{AA}+\Sigma_A)^{-1}K_{AC}$.
	Hence $s_{i\mid B}\le s_{i\mid A}$.
	Since $u\mapsto \tfrac12\log(1+u/\sigma_i^2)$ is increasing and concave on $[0,\infty)$, the marginal gain decreases with the context, proving submodularity.
\end{proof}

\begin{remark}[Noise models]\label{rem:noise-models}
	The proposition assumes independent per-index sensor noise (diagonal $\Sigma_A$), which is the sensing model used in the paper.
	If a \emph{fixed}, subset-independent noise covariance $\Sigma_0\succ0$ applies to all measurements simultaneously via a selection operator (observations $S_A Y+\varepsilon$ with $\varepsilon\sim\mathcal N(0,\Sigma_0)$), one can pre-whiten by $\Sigma_0^{-1/2}$ and derive an equivalent form with identity noise. In that case, a submodularity proof can be carried out analogously.
	For \emph{arbitrary} subset-dependent correlated noise blocks $\Sigma_A$ that are not induced by restricting a single global $\Sigma_0$, submodularity of $\mathbf I$ need not hold in general, and extra care is needed.
\end{remark}


Because $\mathbf I$ is normalized, monotone, and submodular, the standard greedy selection achieves a $(1-1/e)$ approximation for maximizing $\mathbf I$ under a cardinality (or budget) constraint; see, e.g., \citep{NemhauserWolseyFisher1978, FisherNemhauserWolsey1978} and subsequent sensor selection results.

\subsubsection{Convergence under Posterior Sampling}
\label{sec:converge_sampling}
We consider a finite MDP with state space $\mathcal S$, feasible decision set $\mathcal D(s)$, and bounded one–stage costs
$\sup_{s,d}|\mathcal C(s,d)|<\infty$.
Let $P_\rho(\cdot\mid s,d)$ denote the transition kernel parameterized by $\rho$.
For a bounded $V:\mathcal S\to\mathbb R$, define the optimal Bellman operator under model $\rho$ by
\[
(\mathbf T^{\rho}V)(s)\;:=\;\min_{d\in\mathcal D(s)}\Big\{\mathcal C(s,d)+\sum_{s'}P_\rho(s'\!\mid s,d)\,V(s')\Big\}.
\]
Assume one of the following settings holds:

\begin{enumerate}[leftmargin=2em,label=(S\arabic*)]
	\item \label{ass:disc}
	\emph{Discounted case:} A discount $\gamma\in(0,1)$ is incorporated in $\mathcal C$ (or in the Bellman operator), so $\mathbf T^{\rho}$ is a $\gamma$–contraction in $\|\cdot\|_\infty$ with a unique fixed point $V^{\!*}(\rho)$.
	\item \label{ass:ssp}
	\emph{Stochastic shortest path (SSP):} There exists an absorbing goal state $g$ with zero cost; all policies considered are \emph{proper} (hit $g$ a.s. from any start), and the standard SSP conditions hold so that $\mathbf T^{\rho}$ has a minimal nonnegative fixed point $V^{\!*}(\rho)$ and value iteration converges to it.
\end{enumerate}

\noindent
Let $\{\mathcal F_i\}$ be the natural filtration generated by all randomness up to iteration $i$.
At iteration $i$ we:
\begin{itemize}
	\item form the posterior and its mean $\bar\rho_i:=\mathbb E[\rho_i\mid \mathcal F_{i-1}]$;
	\item compute a greedy policy $\pi_i(s)\in\arg\min_{d}\{\mathcal C(s,d)+\sum_{s'}P_{\bar\rho_i}(s'\!\mid s,d)\,V_{i-1}(s')\}$;
	\item sample a dynamics parameter $\rho_i$ from the current posterior (posterior sampling);
	\item form a \emph{sampled policy-evaluation target}
	$\widehat{\mathbf T}_i V_{i-1}(s)$, which is an unbiased estimator of $(\mathbf T^{\rho_i}_{\pi_i}V_{i-1})(s)$ (e.g., draw $S'\!\sim P_{\rho_i}(\cdot\mid s,\pi_i(s))$ and use $\mathcal C(s,\pi_i(s)) + V_{i-1}(S')$).
\end{itemize}
We update per state with stepsizes $\alpha_i(s)\in(0,1]$:
\[
V_i(s)\ :=\ (1-\alpha_i(s))\,V_{i-1}(s)\ +\ \alpha_i(s)\,\widehat{\mathbf T}_i V_{i-1}(s).
\]
Assume (Robbins–Monro) for each $s$: $\sum_i \alpha_i(s)=\infty$, $\sum_i \alpha_i^2(s)<\infty$, and every state is updated infinitely often.

Let $\bar\rho_i:=\mathbb E[\rho_i\mid \mathcal F_{i-1}]$ denote the posterior mean at iteration $i$ and $\bar\rho$ a limit parameter (either the fixed mean if the dataset is frozen, or $\bar\rho_i\to\bar\rho$ a.s. if the posterior concentrates).

\begin{theorem}[Convergence under posterior sampling]\label{thm:converge_sampling_rigorous}
	Under \ref{ass:disc} or \ref{ass:ssp}, and the conditions above, suppose in addition that:
	\begin{enumerate}[leftmargin=2em,label=(A\arabic*)]
		\item \label{ass:md}
		(\emph{Martingale-difference noise}) The target noise
		\[
		\xi_i(s)\ :=\ \widehat{\mathbf T}_i V_{i-1}(s)\ -\ (\mathbf T^{\rho_i}V_{i-1})(s)
		\]
		satisfies $\mathbb E[\xi_i(s)\mid \mathcal F_{i-1}]=0$ and $\mathbb E[\xi_i(s)^2\mid \mathcal F_{i-1}] \le C(1+\|V_{i-1}\|_\infty^2)$ a.s. for some $C<\infty$.
		\item \label{ass:ps-fixed}
		(\emph{Posterior sampling unbiasedness for fixed policy}) The model mismatch term
		\[
		\eta_i(s)\ :=\ (\mathbf T^{\rho_i}V_{i-1})(s)\ -\ (\mathbf T^{\bar\rho_i}V_{i-1})(s)
		\]
		satisfies $\mathbb E[\eta_i(s)\mid \mathcal F_{i-1}] = 0$ for all $s$.
		\item \label{ass:meanlimit}
		(\emph{Stable target}) Either $\bar\rho_i\equiv\bar\rho$ for all $i$ (frozen posterior mean) or $\bar\rho_i\to\bar\rho$ a.s.,
		and $\mathbf T^{\bar\rho_i}\to \mathbf T^{\bar\rho}$ uniformly on bounded sets.
	\end{enumerate}
	Then $V_i\to V^{\!*}(\bar\rho)$ almost surely as $i\to\infty$.
\end{theorem}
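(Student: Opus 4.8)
The plan is to recast the per-state update as an asynchronous stochastic approximation (SA) iteration whose mean dynamics are governed by the optimal Bellman operator $\mathbf T^{\bar\rho}$ of the limiting model, and then to invoke the classical convergence theorem for SA under (weighted) sup-norm contractions. First I would decompose the sampled target. Writing $\mathcal G_i:=\sigma(\mathcal F_{i-1},\rho_i)$ and using the tower property, assumptions~\ref{ass:md} and~\ref{ass:ps-fixed} combine to give the key identity
\[
\mathbb E\big[\widehat{\mathbf T}_i V_{i-1}(s)\,\big|\,\mathcal F_{i-1}\big]\;=\;(\mathbf T^{\bar\rho_i}V_{i-1})(s).
\]
The essential structural point is that, because $\pi_i$ is greedy with respect to the posterior-mean model $\bar\rho_i$ and the current estimate $V_{i-1}$, the policy-evaluation backup under the mean model coincides with the optimal backup, $\mathbf T^{\bar\rho_i}_{\pi_i}V_{i-1}=\mathbf T^{\bar\rho_i}V_{i-1}$; unbiasedness of the sampled model $\rho_i$ for $\bar\rho_i$ then transfers this to the sampled target in expectation. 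Thus, even though each iteration performs only a one-step sampled policy evaluation, the expected drift is the full optimal (value-iteration) backup.

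Second, I would write the update in canonical SA form
\[
V_i(s)=(1-\alpha_i(s))V_{i-1}(s)+\alpha_i(s)\big[(\mathbf T^{\bar\rho}V_{i-1})(s)+\beta_i(s)+w_i(s)\big],
\]
where $w_i(s):=\xi_i(s)+\eta_i(s)$ is the total noise and $\beta_i(s):=(\mathbf T^{\bar\rho_i}V_{i-1})(s)-(\mathbf T^{\bar\rho}V_{i-1})(s)$ is a model-drift term. By \ref{ass:md}--\ref{ass:ps-fixed}, $\mathbb E[w_i(s)\mid\mathcal F_{i-1}]=0$, and since $\eta_i$ is bounded in magnitude by a constant multiple of $1+\|V_{i-1}\|_\infty$ (finiteness of the state/action sets and bounded costs), the variance bound in \ref{ass:md} upgrades to $\mathbb E[w_i(s)^2\mid\mathcal F_{i-1}]\le K(1+\|V_{i-1}\|_\infty^2)$. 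Under \ref{ass:meanlimit}, $\beta_i$ is a vanishing perturbation: on any bounded set it tends to $0$ because $\mathbf T^{\bar\rho_i}\to\mathbf T^{\bar\rho}$ uniformly (in the frozen-mean case $\beta_i\equiv0$).

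Third, I would supply the contraction structure that drives convergence. Under \ref{ass:disc}, $\mathbf T^{\bar\rho}$ is a $\gamma$-contraction in $\|\cdot\|_\infty$ with unique fixed point $V^*(\bar\rho)$. Under \ref{ass:ssp}, properness yields a classical weighted sup-norm $\|V\|_w=\max_s|V(s)|/w(s)$ in which $\mathbf T^{\bar\rho}$ is a contraction with fixed point $V^*(\bar\rho)$; here I would invoke the standard SSP result that proper policies induce such a weight vector. With the contraction, the mean-zero noise with at-most-quadratic conditional variance, the Robbins--Monro step sizes, and the infinitely-often update condition all in place, the asynchronous SA convergence theorem (Tsitsiklis; Bertsekas--Tsitsiklis) yields $V_i\to V^*(\bar\rho)$ almost surely, once the perturbation $\beta_i$ is absorbed.

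I expect the main obstacle to be two intertwined technical points. First, boundedness $\sup_i\|V_i\|_\infty<\infty$ a.s. must be established before the uniform-on-bounded-sets convergence in \ref{ass:meanlimit} can be used to kill $\beta_i$; I would handle this with the standard stability argument for contraction-driven SA (the rescaling/ODE-stability argument of Bertsekas--Tsitsiklis), exploiting that the contraction modulus is strictly below one and the conditional variance is at most quadratic. Second, the time-varying drift $\mathbf T^{\bar\rho_i}$, rather than a fixed operator, calls for a perturbed version of the theorem; I would treat $\beta_i$ as an asymptotically negligible adapted bias and verify that $\sum_i\alpha_i(s)\beta_i(s)$ is controlled so the limit is unaffected, which is immediate in the frozen-mean case and follows from $\|\beta_i\|\to0$ under posterior concentration. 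The weighted-norm bookkeeping for the SSP case is then routine once the weight vector is fixed.
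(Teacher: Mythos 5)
Your proposal follows essentially the same route as the paper's proof: both recast the per-state update as asynchronous stochastic approximation with drift operator $\mathbf T^{\bar\rho}$, split the perturbation into a square-integrable martingale-difference noise $\xi_i+\eta_i$ and a vanishing model-drift bias, and conclude via the standard asynchronous SA convergence theorem under the contraction structure of (S1) or the SSP structure of (S2). If anything, your write-up is slightly tighter on two points: you note that the greedy choice of $\pi_i$ with respect to $\bar\rho_i$ and $V_{i-1}$ makes the policy-backup-versus-optimal-backup term exactly zero (the paper only asserts it is nonpositive and vanishes as $V_{i-1}\to V^{*}(\bar\rho)$, which is mildly circular), and you explicitly flag the need to establish a.s.\ boundedness of the iterates before the uniform-on-bounded-sets convergence in (A3) can be applied, a step the paper leaves implicit.
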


\begin{proof}
	Write the update as
	\[
	V_i(s)\ =\ (1-\alpha_i(s))\,V_{i-1}(s)\ +\ \alpha_i(s)\Big\{\ (\mathbf T^{\bar\rho}V_{i-1})(s)\ +\ \epsilon_i(s)\ +\ \zeta_i(s)\ \Big\},
	\]
	where we have decomposed the perturbation into
	\[
	\epsilon_i(s):=\underbrace{(\mathbf T^{\bar\rho_i}V_{i-1})(s)-(\mathbf T^{\bar\rho}V_{i-1})(s)}_{\to 0}
	\ +\ \underbrace{\big[(\mathbf T^{\bar\rho_i}_{\pi_i}V_{i-1})(s)-(\mathbf T^{\bar\rho_i}V_{i-1})(s)\big]}_{\le 0},
	\qquad
	\zeta_i(s):=\eta_i(s)+\xi_i(s).
	\]
	By \ref{ass:md}–\ref{ass:ps-fixed}, $\{\zeta_i(s),\mathcal F_i\}$ is a square-integrable martingale-difference sequence.
	By \ref{ass:meanlimit}, $\|\epsilon_i\|_\infty\to 0$ a.s. (the bracketed term is nonpositive and vanishes as $V_{i-1}\to V^{\!*}(\bar\rho)$).
	Under \ref{ass:disc}, $\mathbf T^{\bar\rho}$ is a contraction; under \ref{ass:ssp}, the standard monotone SSP convergence applies.
	Stochastic approximation for asynchronous value iteration with martingale noise then yields $V_i\to V^{\!*}(\bar\rho)$ a.s.
\end{proof}

\ref{ass:md} holds for unbiased Monte Carlo targets with bounded second moments.
\ref{ass:ps-fixed} holds because, conditional on $\mathcal F_{i-1}$, $\rho_i$ is sampled from a posterior with mean $\bar\rho_i$ and $\mathbf T^{\rho}_{\pi}$ is \emph{linear in $P_\rho$} for fixed $\pi$.
If the posterior is frozen, \ref{ass:meanlimit} is trivial; if it concentrates, it holds when $P_{\bar\rho_i}\to P_{\bar\rho}$.
If one updates only a subset of states per iteration, require that every state is visited infinitely often and apply the same argument componentwise (asynchronous SA).
If an information–gain shaped cost $\mathcal C^{\scriptscriptstyle IG}$ is used, replace $\mathcal C$ by $\mathcal C^{\scriptscriptstyle IG}$ throughout.

\subsubsection{Stability of Support Refinement}
The distribution of value function at state $s$, $V(s)$, is categorical with support $\mathcal{X}(V(s))=\{x_i\}_{i=1}^k$, 
where the probabilities have a Dirichlet distribution with parameters $\{\alpha_i\}_{i=1}^k$. 
The mean is
$
\mu = \E\left[V(s)\right] = \frac{\sum_{i=1}^K \alpha_i x_i}{\sum_{i=1}^k \alpha_i}.
$

Given a new observed cumulative cost $\mathcal{C}^{\scriptscriptstyle IG}(s)$ which is not in support and $\delta>0$ defining a neighborhood,
we implement support refinement strategy through value contract.
Specifically,
we locate consecutive support points $x_j, x_{j+1}$ within $\delta$ of $\mathcal{C}^{\scriptscriptstyle IG}_i(s)$. 
If these points have not been observed in previous trajectories, 
we replace them with $\mathcal{C}^{\scriptscriptstyle IG}_i(s)$ and merge their Dirichlet parameters. 
If no neighbors exist within $\delta$, we insert $\mathcal{C}^{\scriptscriptstyle IG}_i(s)$ as a new support point with parameter $\alpha=1$.

\begin{lemma}\label{lem:refinement_stability}
	Given a new observed cumulative cost $\mathcal{C}^{\scriptscriptstyle IG}(s)$ and $\delta>0$ defining a neighborhood,
	the support refinement through value contraction leads to a bounded change in the mean that shrinks as the total count grows, 
	introducing no consistent upward or downward bias.
\end{lemma}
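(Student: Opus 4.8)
The plan is to track the posterior mean $\mu=S/A$, where $S:=\sum_i\alpha_i x_i$ and $A:=\sum_i\alpha_i$ is the total Dirichlet concentration (the ``total count''), and to compute the \emph{exact} change $\Delta\mu:=\mu_{\mathrm{new}}-\mu_{\mathrm{old}}$ produced by each of the two refinement branches. Writing $c:=\mathcal{C}^{\scriptscriptstyle IG}_i(s)$ for the new observation and $R:=x_{\mathrm U}-x_{\mathrm L}$ for the length of the support interval (so that $\mu_{\mathrm{old}},c\in[x_{\mathrm L},x_{\mathrm U}]$ and hence $|c-\mu_{\mathrm{old}}|\le R$), I would first dispose of the simpler \emph{insertion} branch. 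There one appends $c$ with pseudocount $1$, so $S\mapsto S+c$ and $A\mapsto A+1$; a one-line computation gives the incremental-mean identity $\Delta\mu=(c-\mu_{\mathrm{old}})/(A+1)$, whence $|\Delta\mu|\le R/(A+1)\to 0$.

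For the \emph{replacement} branch I would set $w:=\alpha_j+\alpha_{j+1}$ and $m:=(\alpha_j x_j+\alpha_{j+1}x_{j+1})/w$, the weighted center of the two merged grid points, and substitute the update $S\mapsto S-wm+(w+1)c$, $A\mapsto A+1$ (since $\alpha_j'=\alpha_j+\alpha_{j+1}+1=w+1$). After clearing the common denominator $A(A+1)$ and using $S=A\,\mu_{\mathrm{old}}$, the numerator collapses to $A\big[w(c-m)+(c-\mu_{\mathrm{old}})\big]$, yielding the clean decomposition
\[
\Delta\mu \;=\; \frac{c-\mu_{\mathrm{old}}}{A+1}\;+\;\frac{w\,(c-m)}{A+1}.
\]
Because the refinement selects consecutive points with $x_j\le c\le x_{j+1}$ and $|x_j-c|,|x_{j+1}-c|\le\delta$, the convex combination $m$ lies in $[x_j,x_{j+1}]$, so $|c-m|\le\delta$; combined with $w\le A$ this gives the uniform bound
\[
|\Delta\mu|\;\le\;\frac{w\,\delta+R}{A+1}\;\le\;\delta+\frac{R}{A+1}.
\]
This establishes boundedness (by $\delta+R/(A+1)$) and count-shrinkage of the excess over $\delta$; when the merged points carry only prior mass so that $w=O(1)$, the \emph{entire} change shrinks at rate $O(1/A)$.

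For the ``no consistent bias'' claim I would argue directly from this decomposition. In both branches the dominant term equals the standard running-average update $(c-\mu_{\mathrm{old}})/(A+1)$, which moves the estimate \emph{toward} the freshly observed cost rather than in a fixed direction, so if the sampled cumulative costs are conditionally unbiased for the value being estimated, the refinement carries no systematic additive drift. The only extra term, $w(c-m)/(A+1)$, is a resolution-limited correction of magnitude at most $w\delta/(A+1)$ whose sign is governed solely by whether the prior grid center $m$ sits above or below $c$; since $c$ is sandwiched between the two removed points, this correction can point either way and vanishes as $\delta\to 0$, so it injects no directional bias. I expect the main obstacle to be pinning the bias statement down rigorously: the cleanest route is to phrase ``no bias'' as the structural observation that the update is the mean-reverting incremental form plus an $O(\delta)$ sign-indefinite remainder, rather than attempting a full distributional unbiasedness claim, which would require modeling how the observed costs $c$ relate to the latent value distribution and lies outside the scope of this lemma.
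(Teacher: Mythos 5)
Your proof is correct and takes essentially the same route as the paper's: the same two-case split (replacement vs.\ insertion), the identical exact decomposition $\Delta\mu = \frac{c-\mu_{\mathrm{old}}}{A+1} + \frac{w\,(c-m)}{A+1}$ (the paper writes $m$ as $\bar{x}_{j,j+1}$ and $A$ as $\alpha_{\mathrm{total}}$), and the same bound on the second term via $|c-m|\le\delta$. Your added observation that the \emph{entire} change is $O(1/A)$ only when the merged points carry $O(1)$ mass---which holds here because replacement is triggered only for support points never observed in previous trajectories, hence carrying prior-level counts---is in fact a slightly sharper and more honest reading than the paper's blanket claim that the change decreases with the total count.
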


\begin{proof}
	We consider two cases based on the neighborhood structure.
	
	(i) \emph{Two neighbors within $\delta$ distance.}
	After replacing $x_j,x_{j+1}$ with $\mathcal{C}^{\scriptscriptstyle IG}(s)$,
	the corresponding Dirichlet parameter is updated as $\alpha_j+\alpha_{j+1}+1$. 
	Define $\alpha_{\mathrm{total}} = \sum_{i=1}^{k} \alpha_i$ and 
	$\bar{x}_{j,j+1} = \frac{\alpha_j x_j + \alpha_{j+1} x_{j+1}}{\alpha_j + \alpha_{j+1}}$.
	The new mean becomes
	\[
	\mu' = \frac{\alpha_{\mathrm{total}} \mu - (\alpha_j x_j + \alpha_{j+1} x_{j+1}) 
		+ (\alpha_j + \alpha_{j+1} + 1)\mathcal{C}^{\scriptscriptstyle IG}(s)}
	{\alpha_{\mathrm{total}} + 1}.
	\]
	The difference between $\mu$ and $\mu'$ is
	\begin{align*}
		\mu' - \mu
		&= \frac{\mathcal{C}^{\scriptscriptstyle IG}(s) - \mu}{\alpha_{\mathrm{total}} + 1}
		+ \frac{(\alpha_j + \alpha_{j+1})(\mathcal{C}^{\scriptscriptstyle IG}(s) - \bar{x}_{j,j+1})}{\alpha_{\mathrm{total}} + 1}.
	\end{align*}
	The first term shrinks as the total count grows.
	For the second term, since $x_j$ and $x_{j+1}$ are within $\delta$ of $\mathcal{C}^{\scriptscriptstyle IG}(s)$,
	we have $|\mathcal{C}^{\scriptscriptstyle IG}(s) - \bar{x}_{j,j+1}| \leq \delta$.
	Therefore,
	\[
	|\mu' - \mu| \leq \frac{|\mathcal{C}^{\scriptscriptstyle IG}(s) - \mu| + (\alpha_j + \alpha_{j+1})\delta}{\alpha_{\mathrm{total}} + 1},
	\]
	leading to a bounded change that decreases with increasing $\alpha_{\mathrm{total}}$.
	
	(ii) \emph{No neighbors within $\delta$ distance.}
	A new support point is added with $\alpha = 1$, giving
	$
	\mu' = \frac{\alpha_{\mathrm{total}} \mu + \mathcal{C}^{\scriptscriptstyle IG}(s)}{\alpha_{\mathrm{total}} + 1},
	$
	and thus
	\[
	\mu' - \mu = \frac{\mathcal{C}^{\scriptscriptstyle IG}(s) - \mu}{\alpha_{\mathrm{total}} + 1},
	\]
	which also shrinks as $\alpha_{\mathrm{total}}$ increases, introducing no consistent upward or downward bias.
\end{proof}

\section{Online Rollout with Base Policy Update}\label{sec:online}
Building on the high-quality base policy learned offline, 
our online phase implements rollout policy for making real-time decisions.
Rollout policies are theoretically guaranteed to perform no worse than their underlying base policy \citep{bertsekas2021rollout}, 
making the quality of the base policy crucial for overall performance.
Formally, the rollout policy makes decisions following the rule:
$$d_t^{\text{roll}}=
\argmin_{d_t \in \mathcal{D}_t(S_t)}\left(\mathcal{C}(S_t, d_t)+
\E_{W_{t+1} | S_t, d_t}\left[V^{\text{base}}(S_{t+1})\right]\right),$$
where $V^{\text{base}}(S_{t+1})=\min\E\left[\sum_{t'=t+1}^{T}\mathcal{C}(S_{t'},d_{t'}^{\text{base}})|S_{t+1}\right]$ 
is the estimated optimal state value function following the base policy.

At $t=0$, the decision can be made directly using the pre-computed $V^{\text{base}}$ from offline stage.
From $t=1$ onward, simulation is required to incorporate the updated belief as new information is collected.
Given the performance guarantee of rollout policies, 
maintaining base policy quality becomes critical as belief information evolves.

\paragraph{Base Policy Update Mechanism}
A unique challenge in the SCOS setting is that the belief over obstacle blockage statuses is not static.
Instead, it evolves as new information is collected through sensor readings and disambiguations.
If left unadjusted, $\pi^{\text{base}}$ may become misaligned with the improved belief,
decreasing the effectiveness of the rollout policy.
To address this challenge, we update $\pi^{\text{base}}$ after each rollout step,
ensuring that our two-stage framework remains robust and adaptive throughout the traversal process.

The updating process varies depending on the value function approximation method used in the offline phase.
Under point estimation approach, we update $V^{\text{base}}$ using the incremental learning rule from Equation (\ref{equation: MC update}).
Under the distributional RL approach, we update the value distribution using the support refinement process outlined in Section \ref{sec:DRL}.

\section{Monte Carlo Experiments and Comparison} \label{sec: simulations}
We evaluate policies using 8-adjacency grid graphs that
align with the discretized setting of the real-world COBRA minefield dataset,
which is commonly used in path planning studies (\cite{fishkind2005}, \cite{ye2011sensor}, \cite{aksakalli2016based}, \cite{aslan2019any}).

Formally, the traversal region is a two-dimensional plane, $\Omega=[0,I]\times[0,J]$, 
represented by an undirected 8-adjacency integer lattice graph $\mathcal{G}$. 
$\mathcal{V(G)}$ is the set of all vertices corresponding to grid intersections, 
and $\mathcal{E(G)}$ is the set of edges connecting each vertex to its horizontal, vertical and diagonal neighbors, 
corresponding to the sides and diagonals of the grid squares.
Each vertex $v\in \mathcal{V(G)}$ has a pair of integer coordinates $(i,j)$, where $i=1, 2, ..., I$ and $j=1, 2, ..., J$.
Each edge $e\in \mathcal{E(G)}$ is defined between vertices of the following forms: $(i,j)$ and $(i+1,j)$, $(i,j)$ and $(i, j+1)$, $(i,j)$ and $(i+1, j+1)$.
We set the start vertex at $s=(\lfloor I/2 \rfloor,J)$ and the goal vertex as $g=(\lfloor I/2 \rfloor,1)$,
encouraging strategic navigation around obstacles rather than detouring along an unnecessarily long obstacle-free path.
Grid sizes range from compact (i.e, 50×25) to larger area operations (i.e, 100×50),
reflecting different problem scales.

Disk-shaped obstacles of fixed radius are randomly positioned within a subregion of $\Omega$ to ensure
that at least one long traversable path exits between $s$ and $g$,
with obstacle density in the traversal region span from lightly defended (i.e, 20-30 obstacles) to heavily filled (i.e, 40-60 obstacles).
To reflect realistic minefield characteristics,
we impose a spatial correlation pattern between obstacles based on two practical defensive strategies:
(i) obstacles closer to the goal has higher log-odds of being blocked (i.e., true threats),
simulating defensive strategy for protecting objectives, and
(ii) isolated obstacles (with fewer neighboring obstacles) have higher log-odds compared to clustered ones,
reflecting surveillance patterns where decoy clusters hide real threat locations.
The resulting setup ensures obstacles with similar tactical importance exhibit correlated log-odds values,
with additional noise generated according to multivariate normal distribution for capturing environmental uncertainties.  

The sensor readings are generated from $Beta(\alpha, \beta)$, 
where parameters vary depending on sensor noise and the actual obstacle status:
\begin{align*}
	\alpha_O=4+\lambda, \; \beta_O=4-\lambda, \; \text{if } z_i = 1 \\
	\alpha_F=4-\lambda, \; \beta_F=4+\lambda, \; \text{if } z_i = 0
\end{align*}
$\lambda\in(0,4)$ controls the level of sensor noise,
with lower values (e.g., $\lambda=0.35, 0.75$) simulating basic detectors with high false alarm rates,
and higher values (e.g., $\lambda=1.5, 2.5$) representing advanced radar with improved discrimination capabilities.
Sensor ranges vary from restricted capabilities (i.e., 10-15 grid units) to extended detection ranges (i.e., 20-30 grid units),
reflecting different operational constraints.

\begin{table}[H]
	\centering
	\resizebox{0.9\textwidth}{!}{
		\begin{tabular}{@{}c|c|c|c|c@{}}
			\toprule
			Graph Size                     & Number of Obstacles ($N$) & \begin{tabular}[c]{@{}c@{}}Disambiguation Cost\\ (= Radius)\end{tabular} & \begin{tabular}[c]{@{}c@{}}Sensor Range ($R$)\\ (to obstacle center)\end{tabular} & Sensor Parameter ($\lambda$)          \\ \midrule
			\multirow{2}{*}{$50\times25$}  & $20$                & $3.5$                                                                      & \multirow{2}{*}{$10, 12.5, 15$}                                            & \multirow{2}{*}{$0.35, 0.75, 1.5, 2.5$} \\
			& $40$                & $3$                                                                    &                                                                             &                                   \\ \midrule
			\multirow{2}{*}{$100\times50$} & $30$                & $5.5$                                                                      & \multirow{2}{*}{$20, 25, 30$}                                               & \multirow{2}{*}{$0.35, 0.75, 1.5, 2.5$} \\
			& $60$                & $5$                                                                    & \multicolumn{1}{l|}{}                                                       & \multicolumn{1}{l}{}              \\ \bottomrule
		\end{tabular}
	}
	\caption{Parameters of Simulation Setting}
	\label{tab: envir_parameter}
\end{table}

The complete simulation parameter combinations are summarized in Table \ref{tab: envir_parameter}.
For each combination, we generate 50 distinct environments, 
each includes 10 random replicates according to the probability information,
yielding 500 total simulation runs per parameter setting.
Example traversal regions of different obstacle density levels are presented in Figure \ref{fig:Example Environment}.

\begin{figure}[H]
	\centering
	\begin{tabular}{cc}
		\begin{subfigure}[b]{0.45\textwidth}
			\centering
			\includegraphics[width=\textwidth]{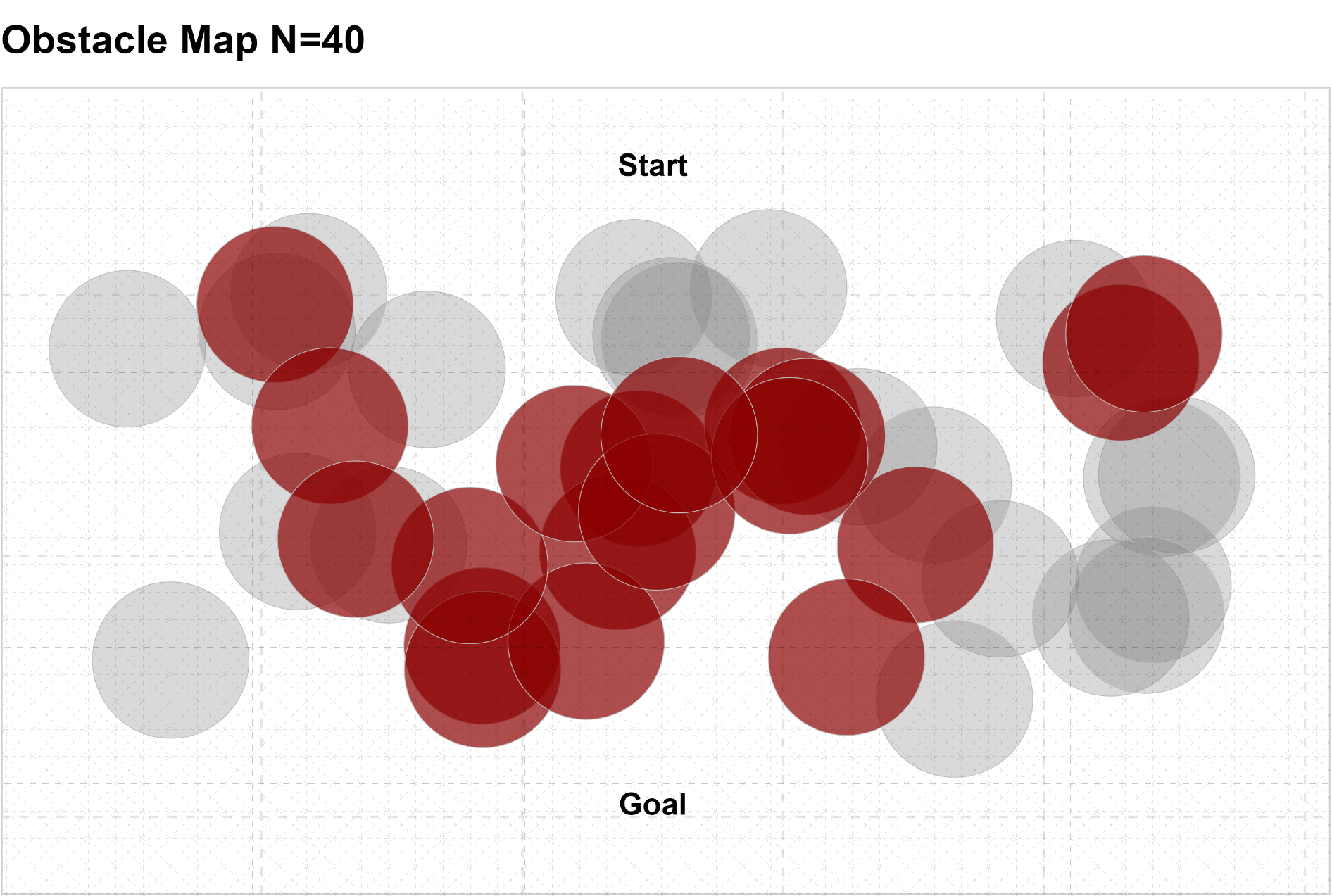}
		\end{subfigure} &
		\begin{subfigure}[b]{0.45\textwidth}
			\centering
			\includegraphics[width=\textwidth]{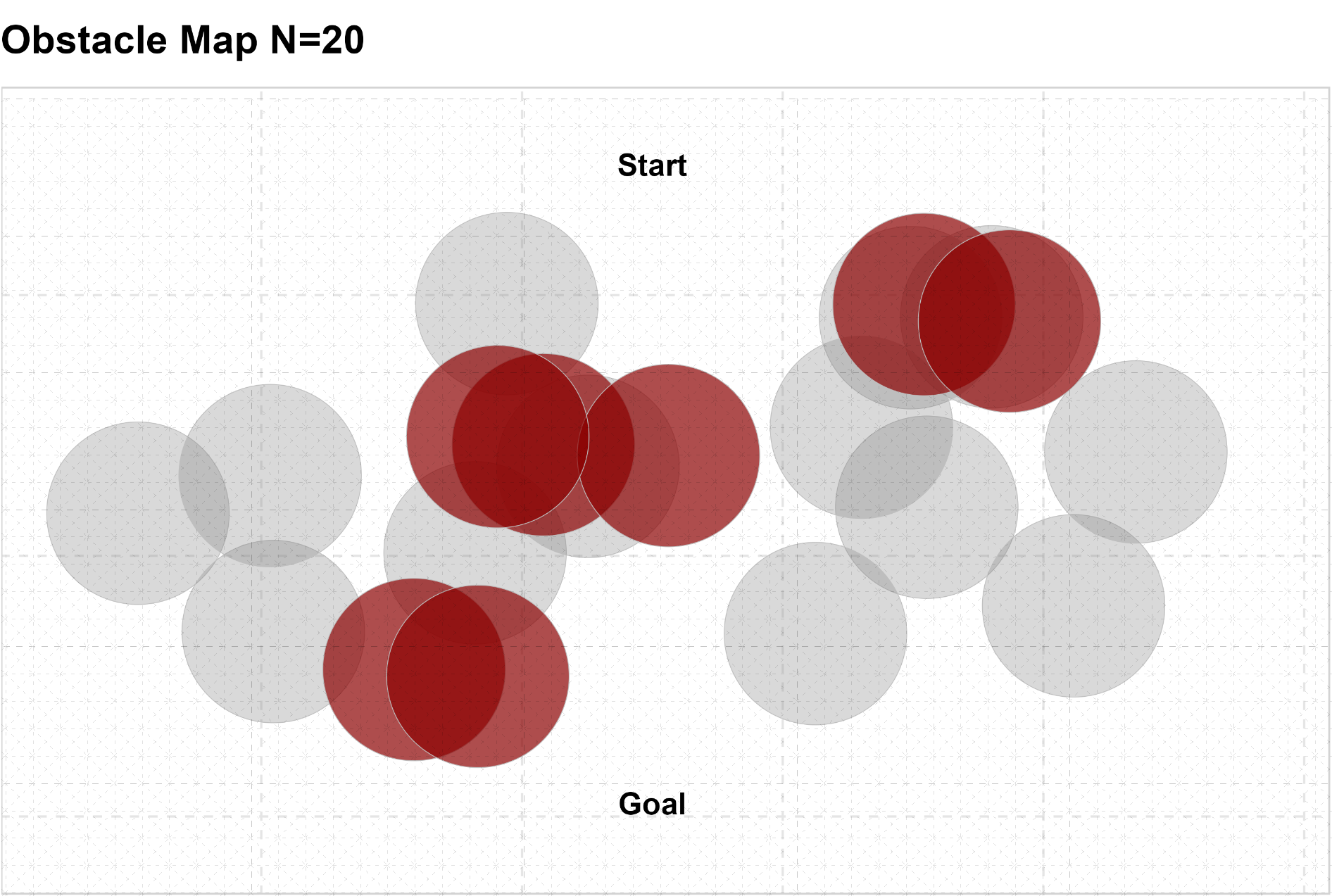}
		\end{subfigure} 
	\end{tabular}
	\caption{Example traversal environments containing $N=40$ and $N=20$ obstacles (red, gray disks present obstacles that are actual threats and false alarms, respectively)}
	\label{fig:Example Environment}
\end{figure}

\subsection{Evaluation Metrics} \label{sec: comparison_evaluation}
To comprehensively compare policy performance, we use the following evaluation metrics: 

\paragraph{Average performance.} 
We compute the mean and median traversal costs across all simulation environments,
providing direct comparison of policy efficiency and typical performance.

\paragraph{Optimality gap.}
We assess each policy's deviation from optimality by calculating the difference between achieved costs and an offline-optimal benchmark: $\mathcal{C}-\mathcal{C}_{\text{optimal}}$,
quantifying how closely each policy approaches the optimal solution.
This offline benchmark assumes perfect knowledge of obstacle actual status, which is an unrealistic assumption in practice.

\paragraph{Consistency and robustness.}
We evaluate (i) within-environment consistency by calculating the standard deviation of traversal costs across replicates within each environment, 
and (ii) cross-environment robustness by calculating the standard deviation of mean traversal costs across environments,
assessing policy reliability under stochastic conditions.

\paragraph{Convergence speed.}
We record the average offline and online simulation time per complete traversal,
indicating the policy appropriateness for real-time decision making. 

\subsection{Competing Policies} \label{sec: comparison_policy}
We consider four competing baseline approaches which represent diverse strategies for solving navigation problems,
ranging from computationally efficient heuristics to approaches effectively incorporating probabilistic information,
leading to a balanced evaluation of our proposed framework.

\paragraph{Penalty-based policies.}
These approaches assign a deterministic value to each path by penalizing high-risk ones in addition to the actual traversal length,
then apply classic shortest path algorithms (e.g., Dijkstra's algorithm) with replanning when encountering an ambiguous obstacle.
We consider two variants \citep{sahin2015comparison, alkaya2021heuristics}:
(i) RD policy penalizes cost of path using $\tilde{\mathcal{C}}_p=\ell_p+\sum_{x: x\cap p \neq\emptyset}\frac{c_x}{1-\rho_x}$,
and (ii) DT policy incorporates distance-to-goal term $d(x,g)$, shown to have comparable performance to UCT-based methods: $\tilde{\mathcal{C}}_p=\ell_p+\sum_{x: x\cap p \neq\emptyset}\left[c_x+\left(\frac{d(x,g)}{1-\rho_x}\right)^{-\log(1-\rho_x)}\right]$.

\paragraph{Rollout-based policies.}
These approaches use simulation to evaluate future trajectories,
but differ in the base policy strategies and assumptions compared to our two-stage learning framework.
While we learn high-quality base policy offline and continuously adapt them to evolving beliefs, 
these baselines rely on simple heuristic assumptions that remain static throughout traversal.
We consider two approaches \citep{eyerich2010high, hou2022dynamic}:
(i) hindsight policy using rollout with a base policy that assumes perfect information about sampled environment during simulation,
serving as a powerful benchmark for comparison in the literature, and 
(ii) optimistic rollout policy assuming all ambiguous obstacles in sampled environment are traversable (i.e., optimistic assumption) during simulation,
encouraging exploration in uncertain regions.

\subsection{Illustrative Example} \label{sec: toy-example}
We use one example traversal region including 40 potentially blocked disks to
illustrate how correlation modeling and the information gain bonus impact decisions and traversal costs.

Figure \ref{fig:example_all}(a) presents the traversal from our proposed two-stage policy learning framework.
The path takes a short, direct route after the information collection through sensing and disambiguation in the region near the starting vertex,
reaching the goal with total cost 46.21.
\ref{fig:example_all}(b) shows the result when the information gain bonus is removed from our two-stage learning framework.
This leads to a more conservative decision since the policy does not consider
the potential value of new information that might benefit future decisions,
instead directly choosing an obstacle-free path of cost 57.63.
Figure \ref{fig:example_all}(c)-(d) include the traversal results using the baseline policy, DT policy,
under two modeling assumptions, correlation-aware and independent.
Compared with our two-stage strategy,
the penalty policy is myopic even with correlation.
It enters the central clustered region without adequate strategic planning, 
and must detour around the obstacles when additional information reveals the risk,
resulting in a route with cost 90.36.
The performance is worse when correlation is ignored.
The policy cannot refine beliefs on statuses of nearby obstacles,
repeatedly disambiguates locally with additional costs,
increasing the cost to 120.43.

\begin{figure}[t]
	\centering
	\begin{tabular}{cc}
		\includegraphics[width=0.4\textwidth]{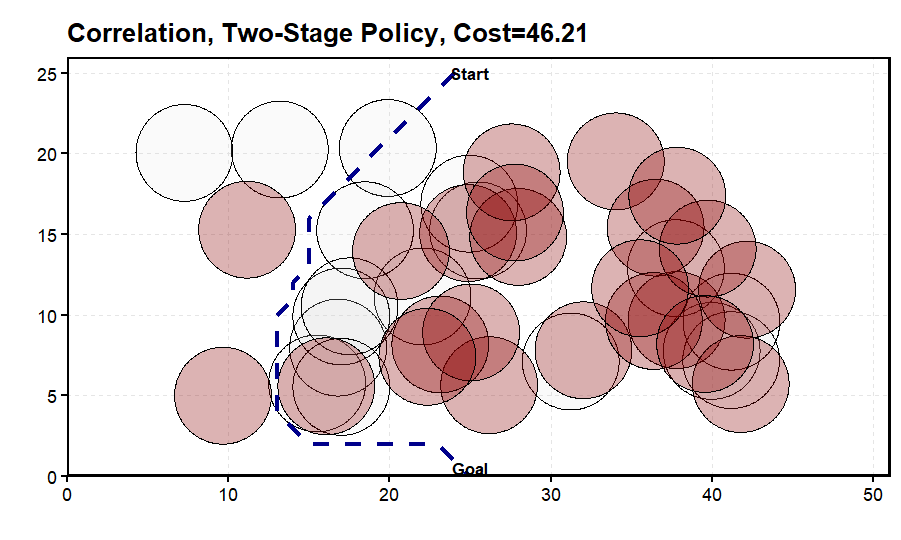} &
		\includegraphics[width=0.4\textwidth]{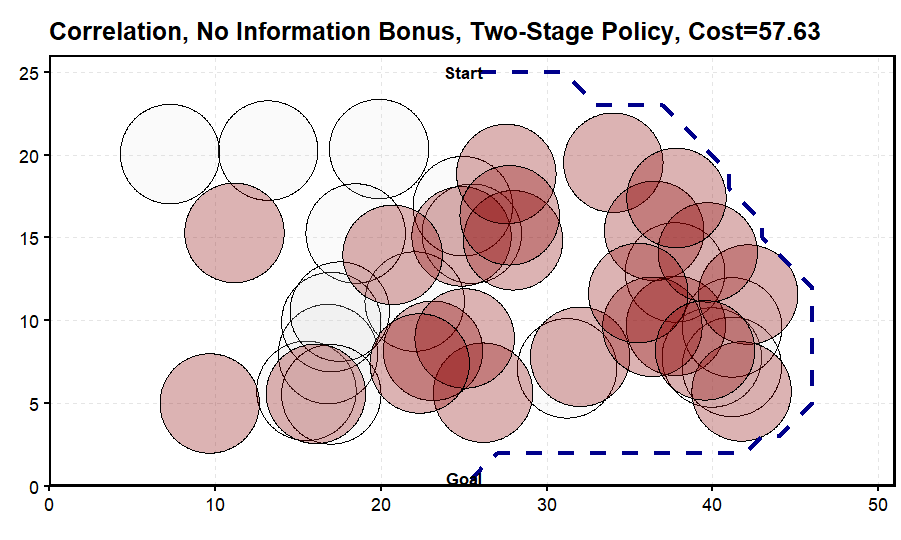} \\
		\small (a) Two-Stage Policy Learning &
		\small (b) Two-Stage Policy (no information gain bonus) \\[0.8cm]
		
		\includegraphics[width=0.4\textwidth]{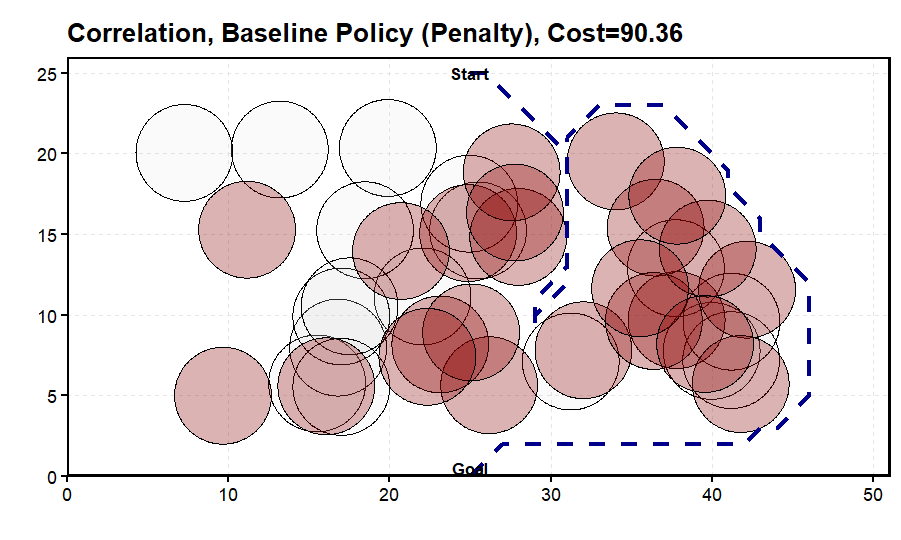} &
		\includegraphics[width=0.4\textwidth]{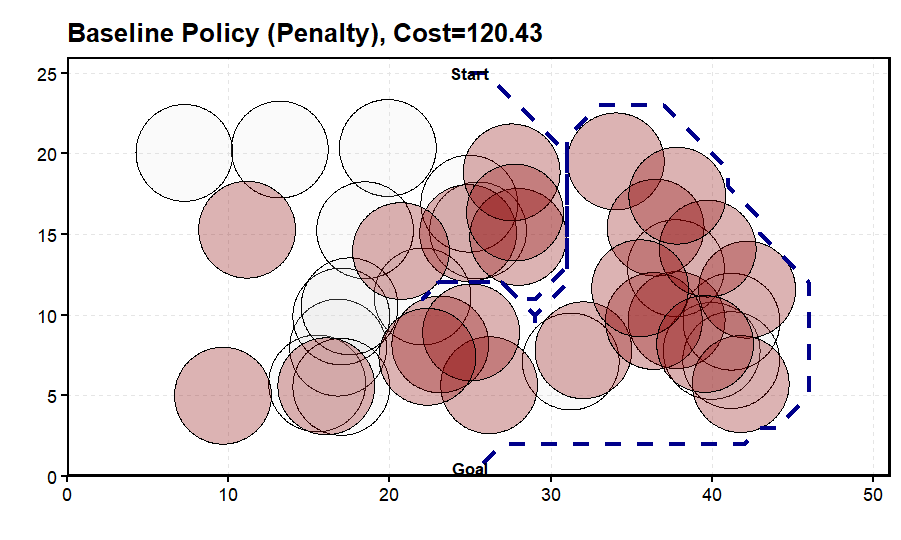} \\
		\small (c) Penalty Policy (correlation) &
		\small (d) Penalty Policy (independent) \\
	\end{tabular}
	\caption{Illustration of different policy approaches for a region with 40 potentially blocked disks under various correlation assumptions (disk background color shows the ground truth, red and gray disks present the actual threats and false alarms, respectively).}
	\label{fig:example_all}
\end{figure}

\subsection{Empirical Results}\label{sec: simulation results}
We present key numerical results and insights in this section,
with additional results for different combinations of environmental parameters included in the Appendix.

Figure \ref{fig: avg cost_lambda} displays the mean traversal costs 
with 95\% confidence intervals across environments. 
In general, traversal costs decrease monotonically as $\lambda$ increases (i.e., sensor precision increases), 
graph size reduces and obstacle density decreases,
confirming the expected relationship between environmental complexity and path planning difficulty. 
Across policies,
our proposed two-stage policy framework consistently yields lower mean traversal costs and 
tighter confidence interval than rollout and penalty baselines,
especially in more challenging environments with high noise and dense obstacles,
showing the sensitivity of baselines to environmental factors.
Exceptions are observed when sensors are highly accurate,
where the baselines appear to be comparably effective. 
These exceptions likely occur because high sensor accuracy condition creates a relatively straightforward setting 
that does not require sophisticated uncertainty handling techniques.
In such simple scenarios, iterative learning approaches tend to be conservative,
but performance can be improved via tuning exploration parameter (e.g., weight of information gain). 

Within the two-stage framework,
the distributional RL (DRL) base exhibits the lowest traversal costs in most cases,
with particularly strong performance in challenging environments due to its superior uncertainty quantification.
Due to better exploration mechanisms,
the Monte Carlo bases using softmax and $\epsilon$-greedy exploration strategies, due to better exploration mechanisms
demonstrate competitive performance and establish advantage over the greedy base.
Among baseline policies,
the optimistic rollout policy appears as the strongest competitor,
while the hindsight policy performs surprisingly poorly.
Comparing two penalty policies,
the RD policy consistently exhibits high traversal costs, particularly under conditions of high obstacle density and high noise, 
while DT policy shows better performance across most settings.

Median cost comparisons (see Figure \ref{fig: median_cost_lambda}) show similar performance ranking pattern,
with more pronounced advantages of using the proposed policy learning framework,
further validates its benefits.

\begin{figure}[H]
	\centering
	\begin{tabular}{cc}
		\includegraphics[width=0.44\textwidth]{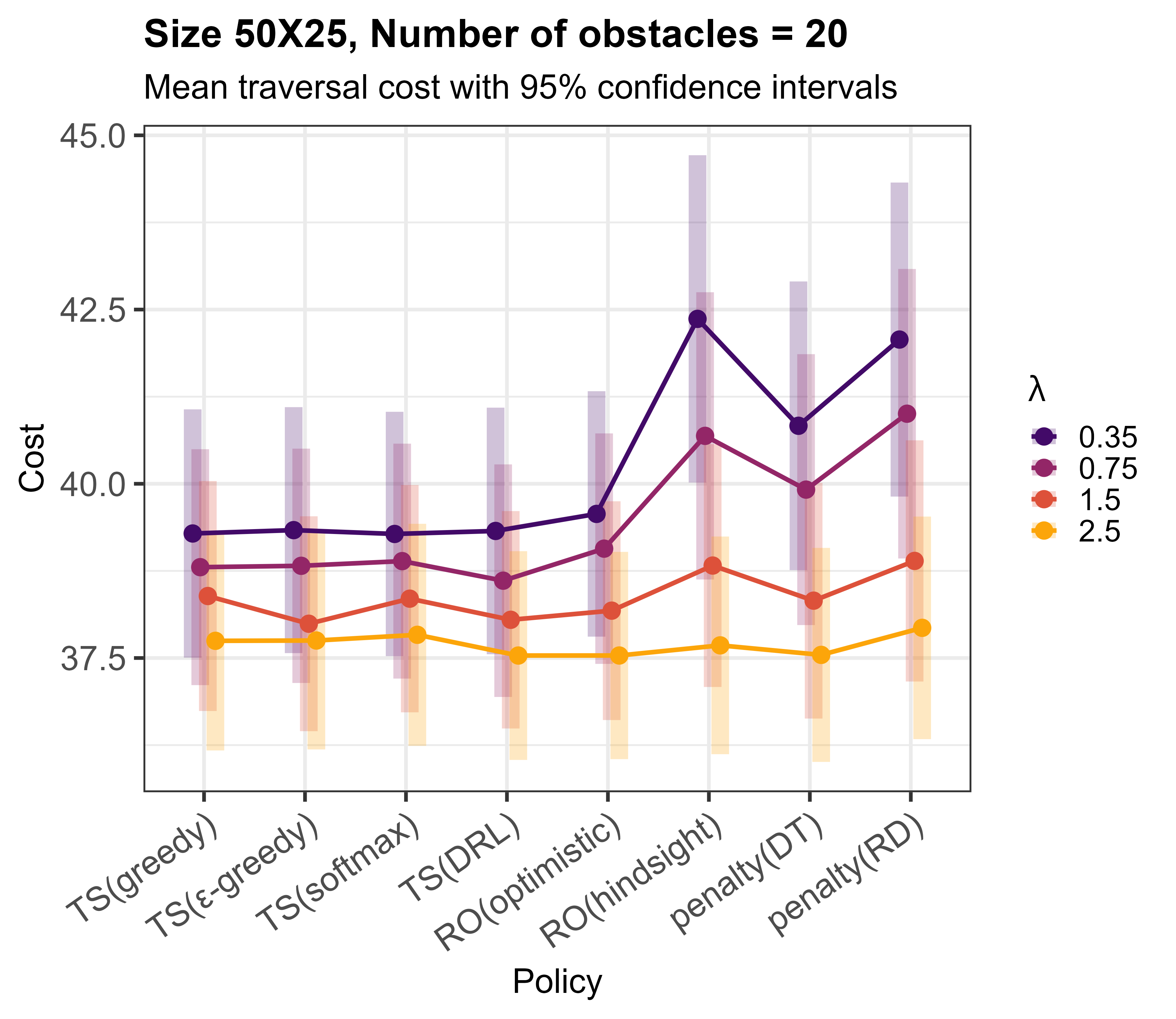} &
		\includegraphics[width=0.44\textwidth]{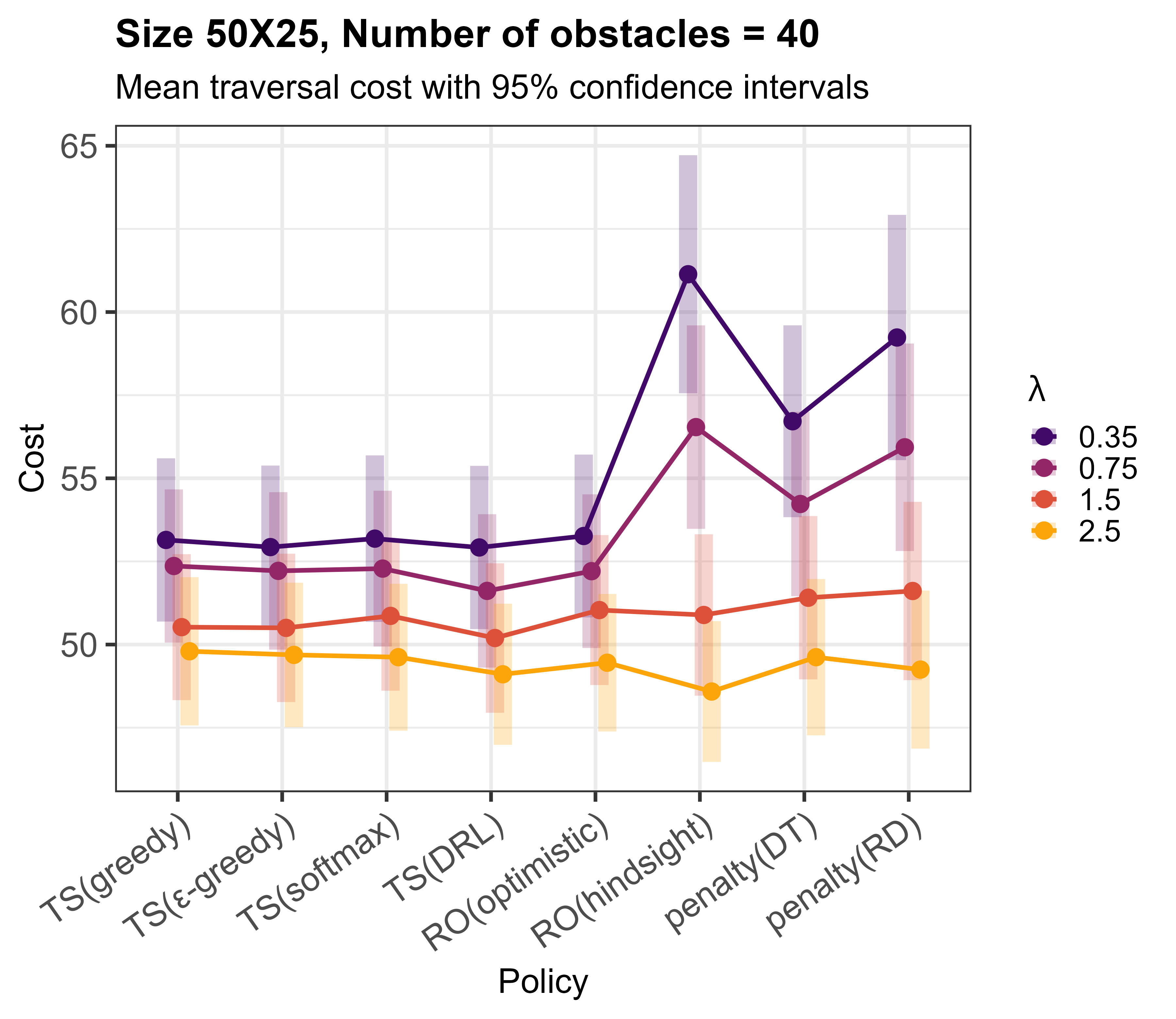} \\
		\includegraphics[width=0.44\textwidth]{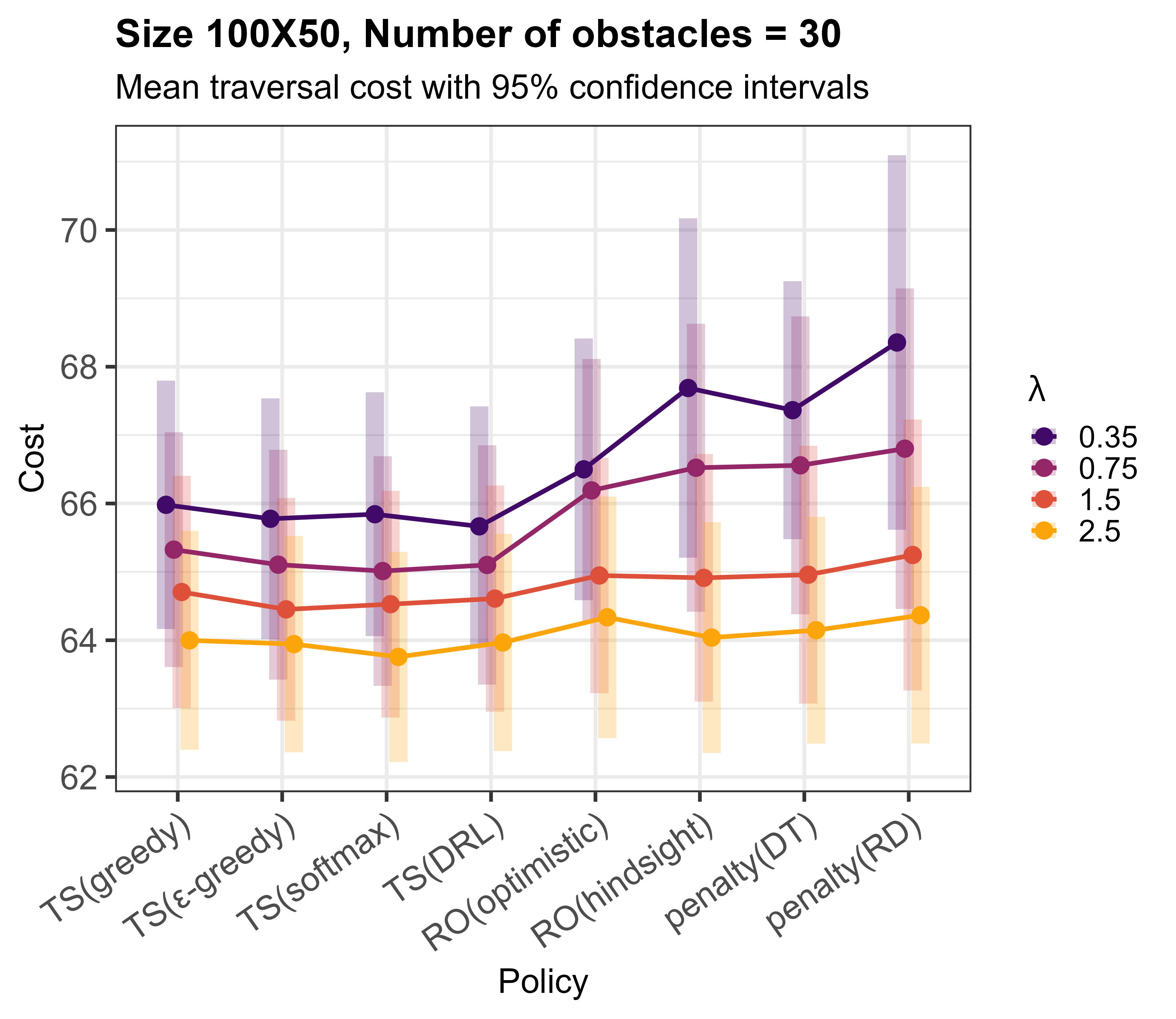} &
		\includegraphics[width=0.44\textwidth]{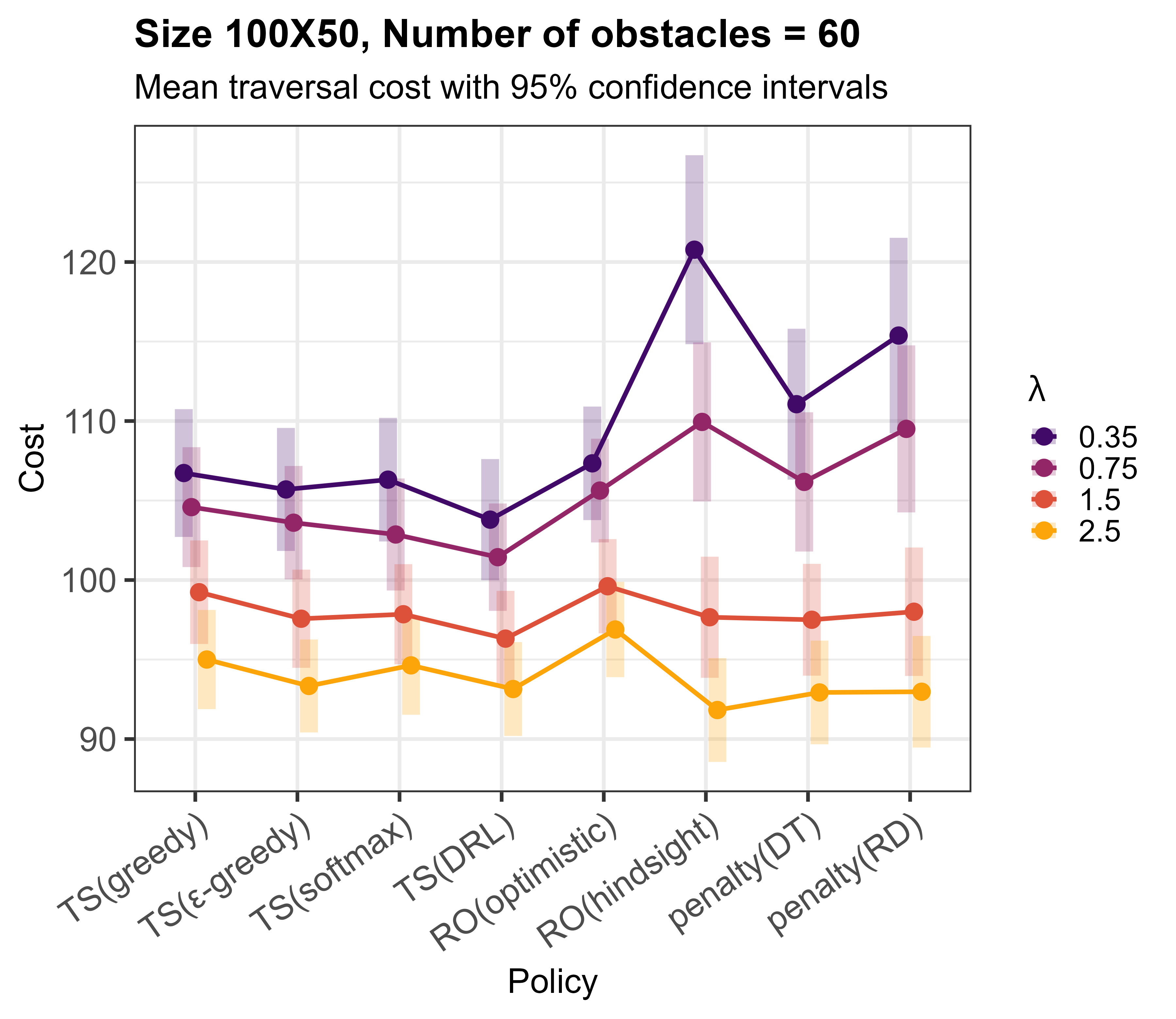}
	\end{tabular}
	\caption{The mean traversal cost with 95\% confidence intervals for proposed policy variants and baselines}
	\label{fig: avg cost_lambda}
\end{figure}

Figures \ref{fig: avg cost_range} and \ref{fig: median_cost_range} show the mean and median traversal costs by sensor range.
The traversal cost decreases as sensor range increases, with reduction magnitude growing with sensor range.
This validates the effectiveness of our Bayesian updating framework,
it converts additional observations into better decisions, 
and the gains are further amplified by incorporating correlation information. 

Beyond mean and median cost, we assess policy performance relative to the offline-optimal benchmark that assumes perfect knowledge of obstacle status.
Figure \ref{fig: avg deviation} presents the average optimality gap (i.e., $\mathcal{C}-\mathcal{C}_\text{optimal}$) across environments,
revealing consistent performance rankings.
The two-stage framework with DRL base achieves the smallest optimality gap across most environments,
with this advantage most pronounced in larger, denser and more noised environments where uncertainty management is harder.
The Monte Carlo bases, decaying $\epsilon$-greedy and softmax approaches, rank following the DRL base,
maintaining competitive performance.
The optimistic rollout policy shows comparable performance in smaller settings,
and other baselines show comparable performance only when sensor accuracy is highest.
Results aggregated by sensor range 
shows similar pattern and are presented in the Appendix (Figure \ref{fig: avg deviation_range}).

\begin{figure}[H]
	\centering
	\begin{tabular}{cc}
		\includegraphics[width=0.44\textwidth]{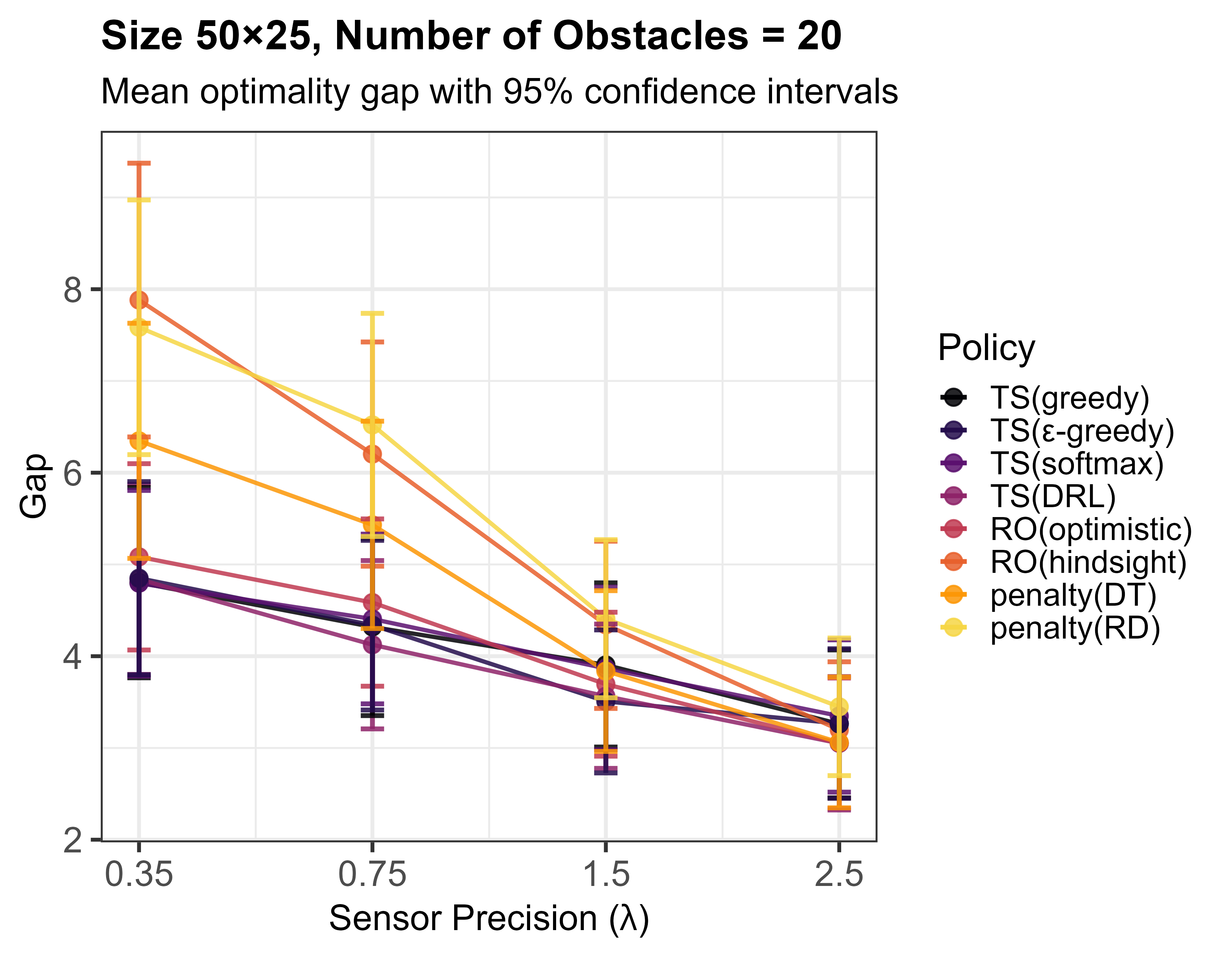} &
		\includegraphics[width=0.44\textwidth]{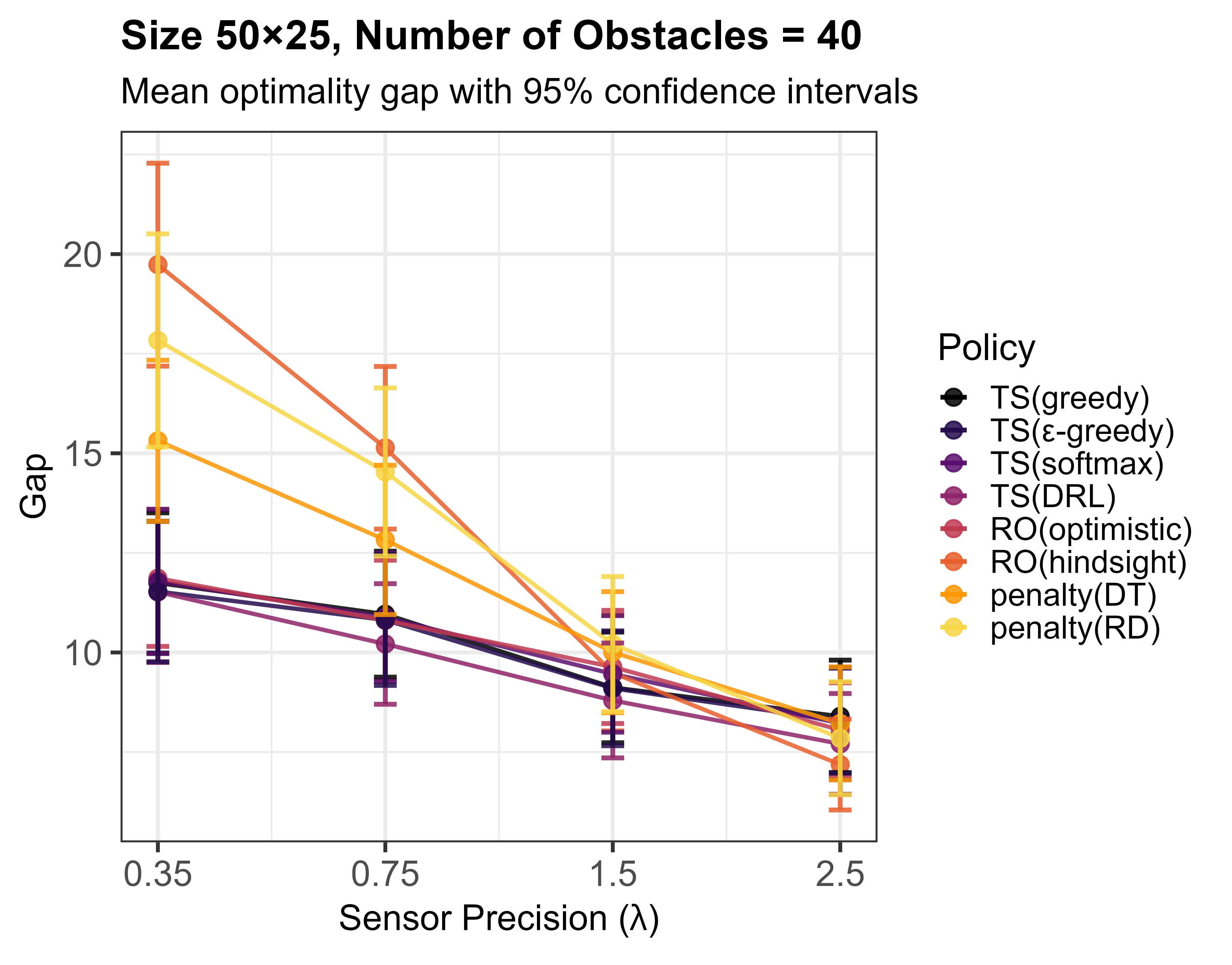}\\
		\includegraphics[width=0.44\textwidth]{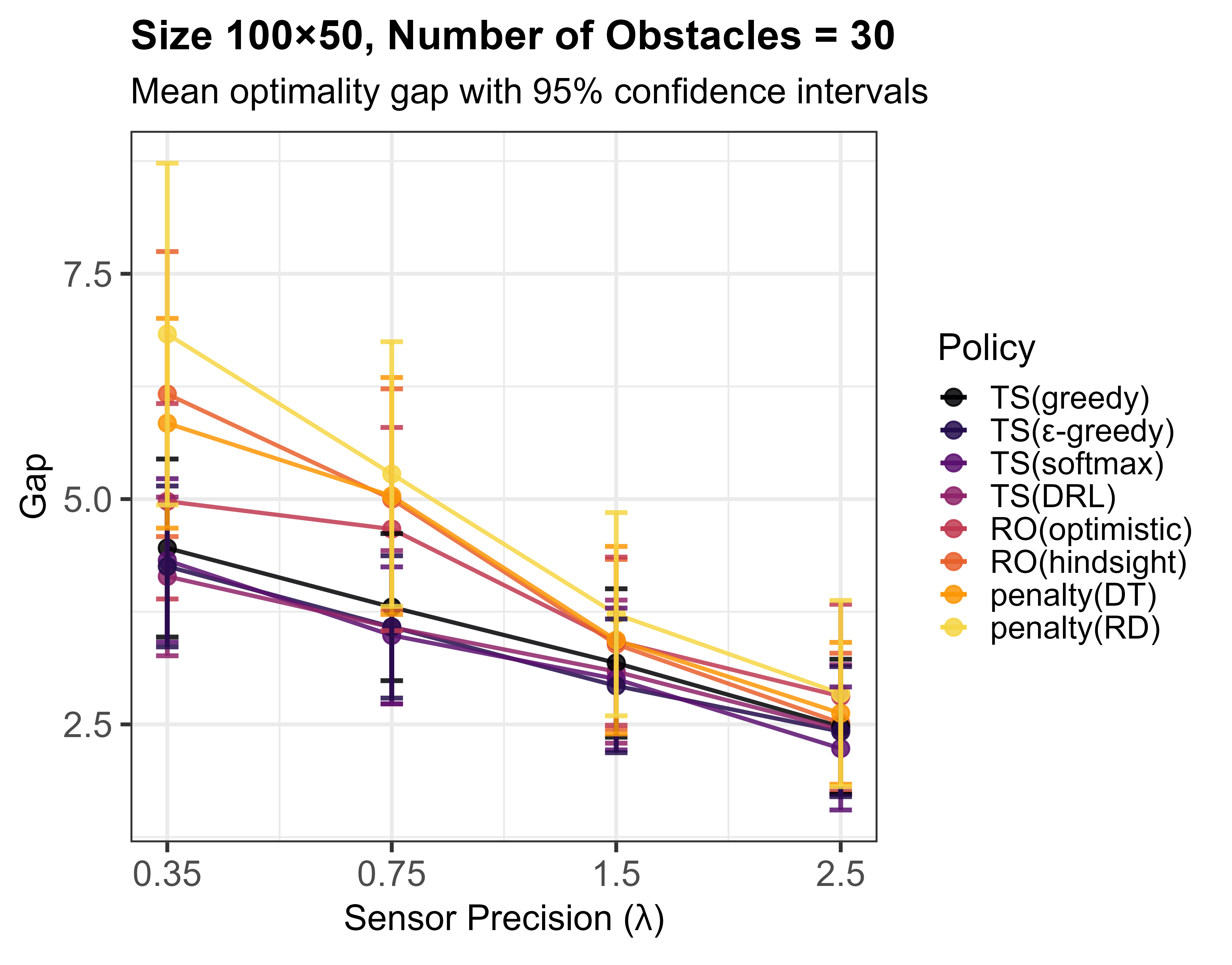} &
		\includegraphics[width=0.44\textwidth]{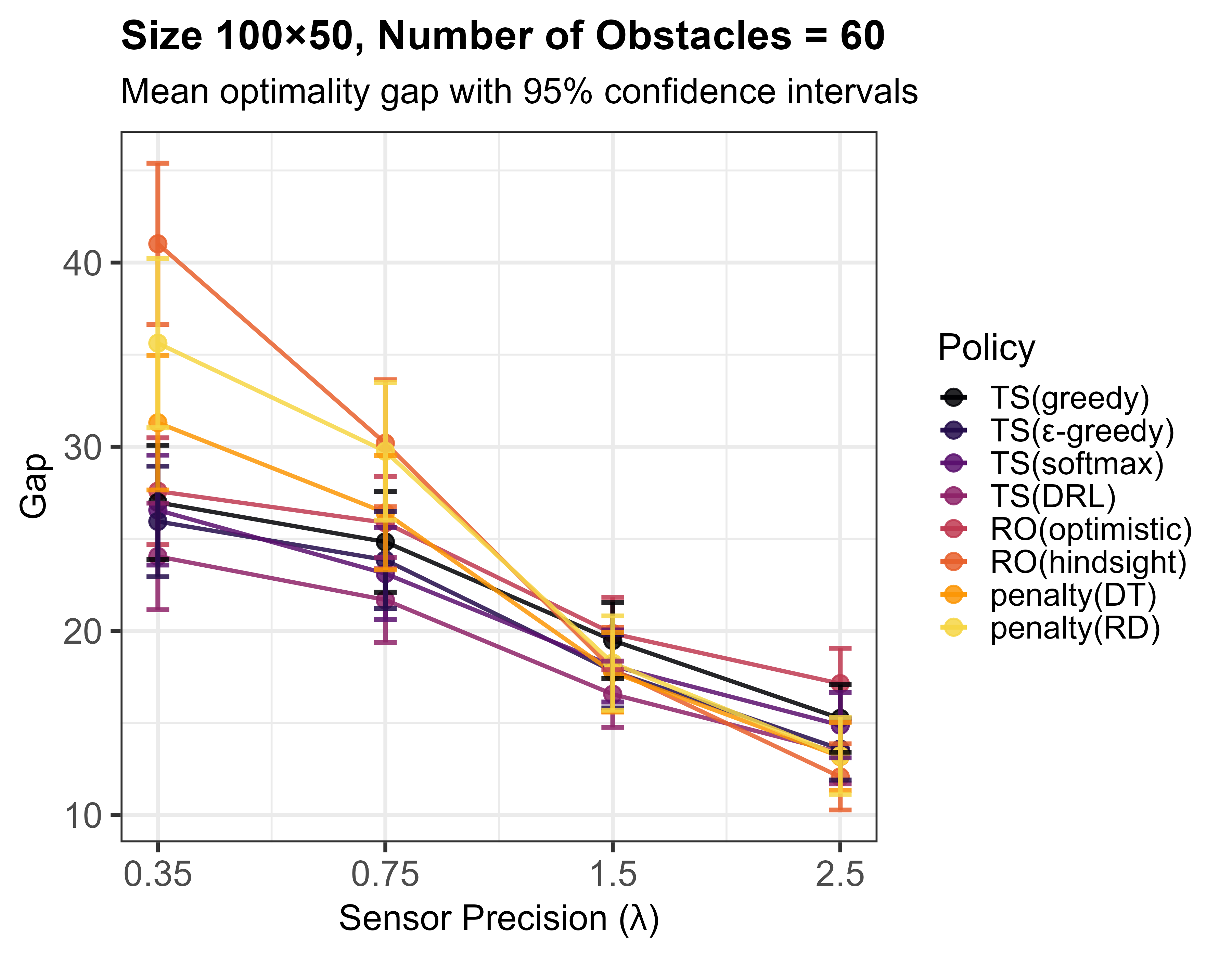}\\
	\end{tabular}
	\caption{The average deviation from optimal solutions with 95\% confidence intervals for proposed policy variants and baselines}
	\label{fig: avg deviation}
\end{figure}

Figure \ref{fig:sd by env} presents the standard deviation of traversal costs 
across replicates within each environment and across environments,
providing insights about policy's robustness.
DRL demonstrates higher consistency with lower variance,
which is particularly crucial in applications requiring predictable policy behavior.
All Monte Carlo bases show comparable robustness with slightly higher variance.
Among baselines,
only the optimistic rollout policy maintains reasonable consistency, 
while other three baselines exhibit substantially higher variance, which is getting worse in more challenging environments.

\begin{figure}[t]
	\centering
	\begin{tabular}{cc}
		\includegraphics[width=0.49\textwidth]{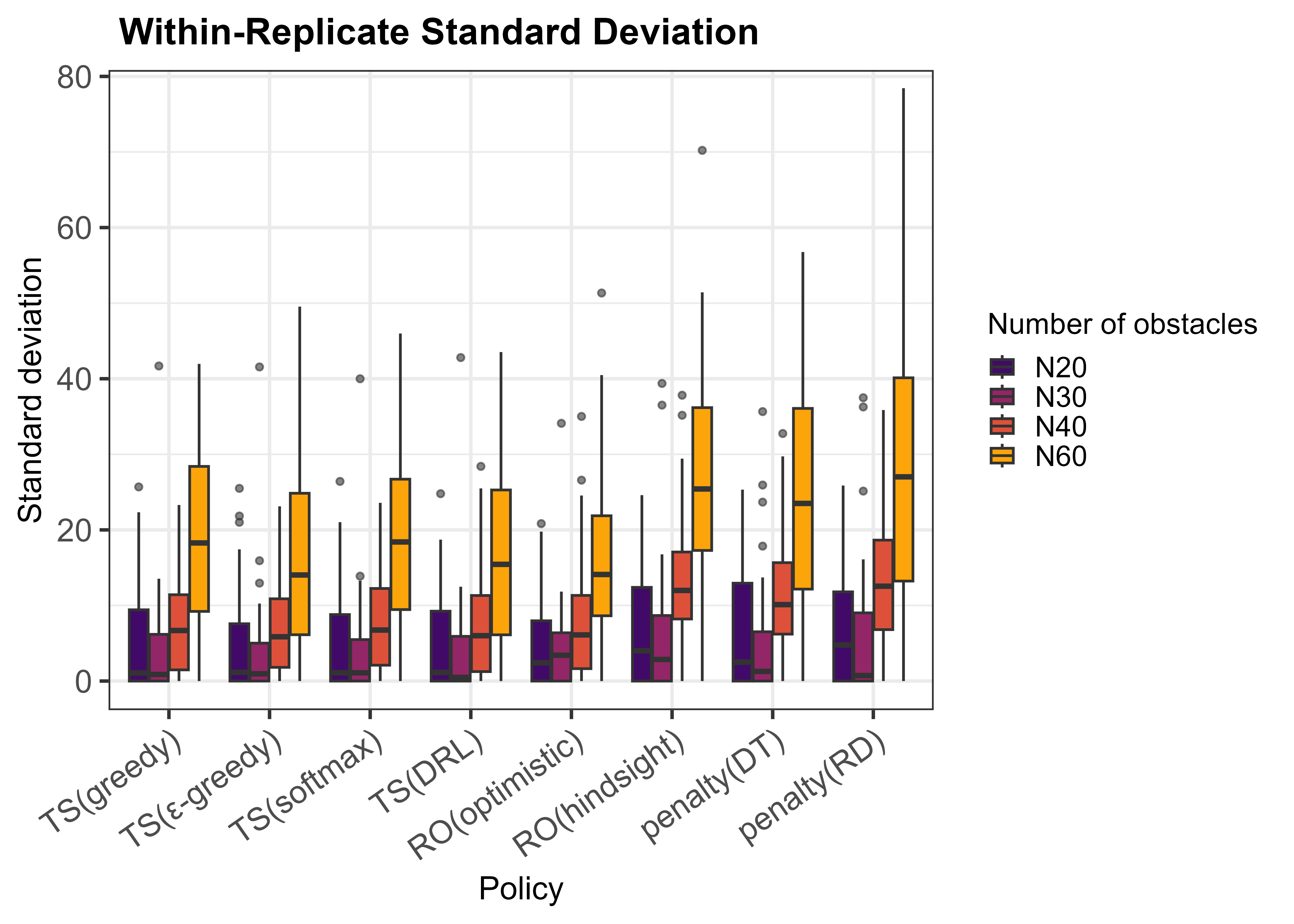} &
		\includegraphics[width=0.49\textwidth]{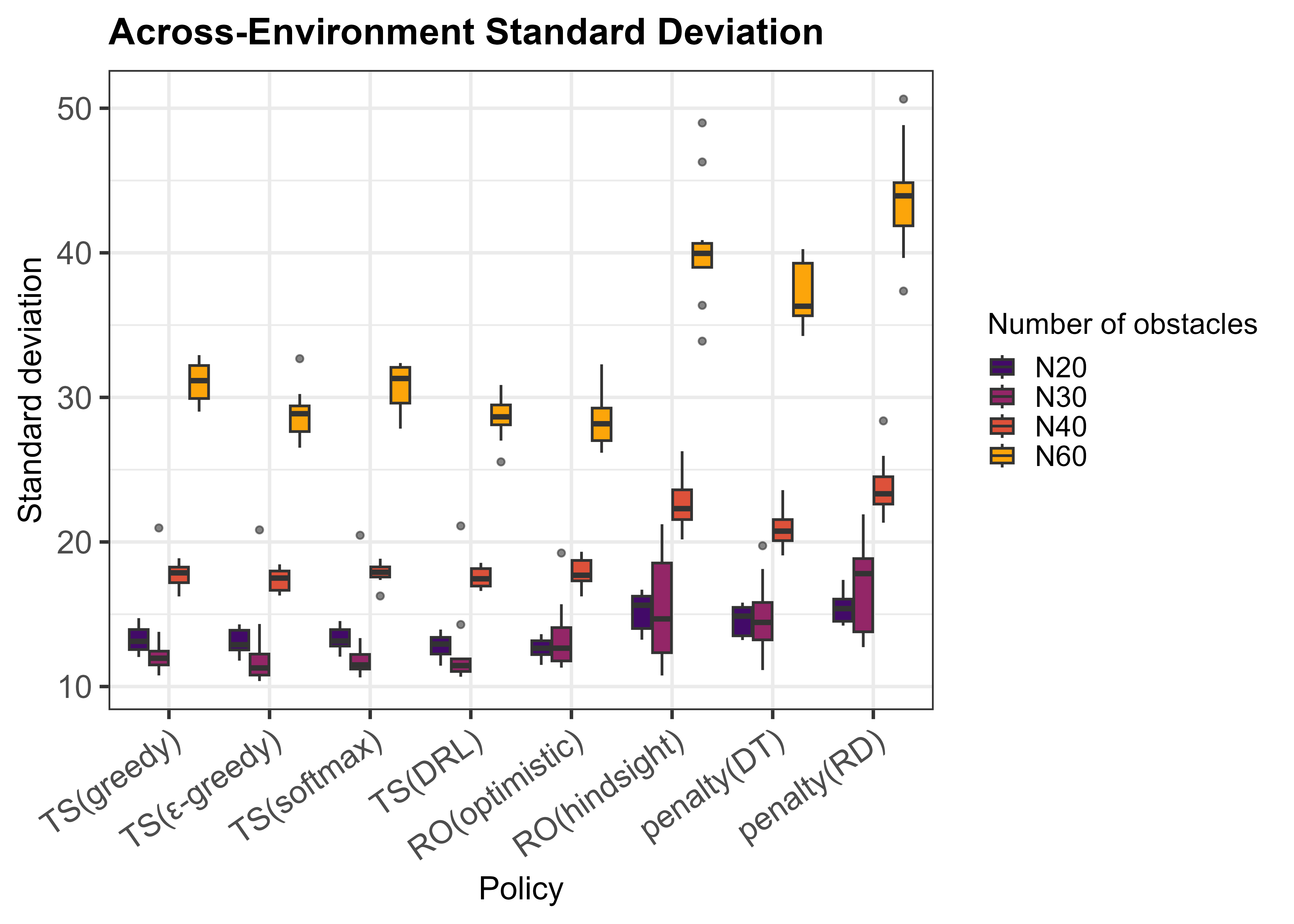}
	\end{tabular}
	\caption{The standard deviation of traversal cost across replicates and environments for proposed policy variants and baselines}
	\label{fig:sd by env}
\end{figure}

Table \ref{tab: simulation_time} summarizes the average simulation time per complete traversal.
Both online simulation time and offline training time grow with obstacle density and traversal region size for all policies.
Within the two-stage policy learning framework,
the DRL base takes longer computation time than Monte Carlo bases,
reflecting the additional computational cost associated with distributional updates,
with the smaller gap in small instances.
By contrast, the decay $\epsilon$-greedy and softmax bases achieve comparable traversal performance at substantially lower computation time.
Their online time is often comparable to, or shorter than, rollout baselines due to the offline training effort.
Rollout baselines require online simulation time on the similar scale (or slightly faster),
but come with higher traversal costs and variability.
Penalty policies, which use direct cost approximation without iterative learning,
run extremely fast but associate with much worse and less stable performance.

\begin{table}[H]
	\centering
	\caption{Average simulation time (in seconds) for one complete traversal}
	\label{tab: simulation_time}
	
	\renewcommand{\arraystretch}{0.95}   
	\setlength{\tabcolsep}{3pt}          
	
	\begin{subtable}{\linewidth}
		\centering
		\caption{Online}
		\footnotesize
		\resizebox{\linewidth}{!}{%
			\begin{tabular}{lcccccccc}
				\toprule
				$N$ & TS(greedy) & TS($\varepsilon$-greedy) & TS(softmax) & TS(DRL) &
				RO(optimism) & RO(hindsight) & penalty(RD) & penalty(DT) \\
				\midrule
				20 & 9.22 (0.65)  & 7.92 (0.56)  & 9.56 (0.67)   & 18.09 (1.26) & 13.28 (0.77) & 16.49 (1.03) & 0.05 (0.01) & 0.04 (0.01) \\
				30 & 20.02 (2.29) & 17.13 (2.13) & 18.86 (2.13)  & 43.04 (4.92) & 40.24 (5.25) & 17.81 (1.72) & 0.59 (0.03) & 0.48 (0.04) \\
				40 & 71.16 (5.70) & 57.16 (5.66) & 72.48 (5.12)  & 208.67 (15.49) & 74.29 (3.28) & 70.61 (4.85) & 0.36 (0.02) & 0.22 (0.01) \\
				60 & 316.57 (17.02) & 195.13 (15.27) & 311.72 (15.38) & 557.36 (30.04) &
				194.80 (10.32) & 135.31 (5.05) & 0.43 (0.01) & 0.26 (0.01) \\
				\bottomrule
		\end{tabular}}
	\end{subtable}
	
	\vspace{0.6em}
	
	\begin{subtable}{0.52\linewidth}
		\centering
		\caption{Offline}
		\footnotesize
		\resizebox{\linewidth}{!}{%
			\begin{tabular}{lcccc}
				\toprule
				$N$ & TS(greedy) & TS($\varepsilon$-greedy) & TS(softmax) & TS(DRL) \\
				\midrule
				20 & 19.03 (0.93) & 19.75 (0.95) & 19.77 (0.96) & 28.97 (1.35) \\
				30 & 56.49 (4.38) & 61.40 (4.75) & 57.10 (4.41) & 99.58 (7.38) \\
				40 & 89.02 (3.49) & 85.15 (3.37) & 92.87 (3.53) & 252.08 (7.75) \\
				60 & 347.95 (12.86) & 282.38 (11.18) & 343.15 (12.49) & 477.80 (17.97) \\
				\bottomrule
		\end{tabular}}
	\end{subtable}
\end{table}

\subsection{Summary of Key Empirical Findings}
\paragraph{Bayesian updating framework.}
The proposed Bayesian updating process demonstrates effectiveness in using sensor readings to improve decisions,
with greater benefits under challenging conditions where
uncertainty handling and information efficiency is crucial.
Consistent with Theorem \ref{thm:coarsening-dominance} and Corollary \ref{cor:monotone},
which show that added observations reduce the expected cost
and correlation-aware updating process amplify these gains.

\paragraph{Two-stage policy learning framework.}
Across environments, the two-stage framework outperforms rollout and penalty baselines on mean, median cost, optimality gap, and standard deviations,
with benefits most pronounced under more challenging conditions (high uncertainty and noise, dense obstacles).
This aligned with our theoretical analysis in Section \ref{sec: offline_property}, 
where we show the convergence property and the exploration benefit by incorporating information gain.

\paragraph{Two-stage framework with distributional RL base.}
DRL base appears to be the best performer, achieving the lowest mean traversal costs,
smallest optimality gaps and great consistency across majority of environmental settings.
This shows that its ability to handle uncertainty through distribution learning is particularly beneficial in high-noise environments where sensor information is unreliable.
Its computational requirement is justified by the robust gains in accuracy and stability.

\paragraph{Two-stage framework with Monte Carlo bases.}
While showing slightly worse performance compared to DRL,
Monte Carlo bases provide a balance between performance and computational efficiency,
making them suitable for settings where computational constraints are tight.
With the flexibility offered by exploration parameters in decaying $\epsilon$-greedy and softmax approaches,
they have the potential to achieve performance improvement through strategic tuning,
which is a promising direction for future research.  

\paragraph{Comparison baselines.}
The optimistic rollout policy shows promising performance in simpler cases but shows higher costs and variability in challenging settings,
limiting its practical applicability.
The penalty policies provide great implementation simplicity but underperform other iterative learning approaches,
making them suitable only when the computation cost is the dominant constraint or the environment is simple with accurate sensors.
The hindsight policy consistently shows poor performance across all metrics and is not ideal in practical applications.

\section{Discussion and Conclusions}\label{sec: conclusion}
We address a path-planning problem in complex environments of limited and uncertain information. 
We introduce the Stochastic Correlated Obstacle Scene (SCOS) problem,
extending the Stochastic Obstacle Scene (SOS) problem by incorporating realistic obstacle spatial patterns and practical sensor constraints. 
To overcome limitations in existing planning policies, 
we propose a two-stage policy learning framework that integrates an offline training phase guided by information gain and an online decision phase.
In the offline phase, 
we learn a robust base policy via optimistic policy iteration augmented with information bonus to encourage exploration in uncertainty regions.
while efficient online rollout policy is applied for real-time decision making, followed by base policy adjustment.
This framework systematically balances exploration-exploitation trade-offs and is supported by theoretical analysis.

Our contributions can be summarized across three key aspects.
First, we formulate the SCOS problem as a more applicable framework for path-planing problems involving adversarial interruption and information uncertainty.
Second, we develop an novel two-stage policy learning framework:
offline learning enhanced with information bonus built upon mutual information for better exploration, 
with online rollout with periodic base updates for new information adaptation.
This strategy yields robust policies with theoretical guarantees,
supporting both Monte Carlo estimates and distributional RL for full distribution approximation.
Third, using Gaussian random field model, 
we incorporate a Bayesian updating framework for information refinement,
which not only enhances the decision-making,
but also supports the search space reduction step to improve the computational efficiency.

Comprehensive empirical results demonstrate substantial performance improvements over existing baseline policies.
In terms of solution quality,
the proposed two-stage strategy achieves lower traversal costs and smaller optimality gaps across environments.
The distributional RL base shows strongest performance in challenging environments with high noise and more obstacles,
and its advantage grows as environmental complexity (i.e., noise and obstacle dense level) increases.
The Monte Carlo bases are often close to DRL in traversal performance.
Regarding computation time,
empirical results show a clear trade-off between solution quality and speed.
While distribution RL approach requires longer computation time, it offers consistency and robustness of performance,
whereas Monte Carlo approaches provides alternatives for applications with computational constraints.

Despite these improvements,
we have various directions for future research.
(i) \emph{Computational scalability:}
the computational requirements of distribution RL approach may still limit its applicability in time critical scenarios,
a more efficient distribution approximation techniques can be explored to enhance scalability.
(ii) \emph{Exploration parameter tuning:}
the incorporation of information gain $G$ in our framework requires careful tuning of its weight in decision making to 
avoid being conservative and missing beneficial but uncertain paths.
(iii) \emph{Dynamic environment extension:}
the current framework assumes static obstacle locations and status,
while real-world applications often involve dynamic environments where obstacles status or locations evolve over time, 
posing additional planning challenge.
(iv) \emph{Multi-agent extension:} 
extending to multi-agent scenarios can greatly increase its applicability 
but which would require effective coordination or competing strategies between agents.

\bibliographystyle{apalike}
\bibliography{LZbib}

\clearpage
\appendix
\section*{Appendix}
\addcontentsline{toc}{section}{Appendix}
\setcounter{figure}{0}
\setcounter{table}{0}
\renewcommand{\thefigure}{A\arabic{figure}}
\renewcommand{\thetable}{A\arabic{table}}

\begin{table}[h]
	\centering
	\caption{Description of notations used in the manuscript}
	\begin{tabular}{|l|l|}
		\hline
		\textbf{Notation} & \textbf{Description} \\
		\hline
		\multicolumn{2}{|c|}{\textbf{Problem Setup}} \\
		\hline
		$\Omega$ & Two-dimensional traversal region \\
		$X$ & Set of all obstacle locations \\
		$X^F$ & Set of false obstacle locations \\
		$X^O$ & Set of true obstacle locations \\
		$X^U$ & Set of ambiguous/uncertain obstacle locations \\
		$\text{radius}(x)$ & Radius of obstacle at location $x$ \\
		$\mathcal{G}$ & Undirected graph imposed over $\Omega$ \\
		$\mathcal{V}(\mathcal{G})$ & Set of vertices (navigation locations) \\
		$\mathcal{E}(\mathcal{G})$ & Set of feasible edges \\
		$s, g$ & Starting and goal vertices \\
		\hline
		\multicolumn{2}{|c|}{\textbf{Sensor and Beliefs}} \\
		\hline
		$z_i\in\{0,1\}$ & Latent true status of obstacle $x_i$ ($1=$ true/blocked, $0=$ false/free)\\
		$\rho_i$ & Posterior blockage probability estimate for obstacle $x_i$ \\
		$\rho^*_i$ & True underlying blockage probability for obstacle $x_i$ \\
		$y_i=\log\frac{\rho_i^*}{1-\rho_i^*}$ & log-odds for obstacle $x_i$\\
		$\tilde{\rho}_i$ (or $\tilde{y}_i$) & Noisy observation for obstacle $x_i$\\
		$R$ & Sensor range \\
		$c(x)$ & Disambiguation cost for obstacle $x$ \\
		\hline
		\multicolumn{2}{|c|}{\textbf{Sequential Decision Formulation}} \\
		\hline
		$S_t = \{V_t, B_t\}$ & State variables at time $t$ \\
		$V_t$ & Agent's physical location at time $t$ \\
		$B_t$ & Belief state at time $t$ \\
		$d_t$ & Decision at time $t$ \\
		$W_t$ & Exogenous information at time $t$ \\
		$\mathcal{P}_{sg}$ & Set of all paths from $s$ to $g$ \\
		$L_p$ & Random variable for traversal length of path $p$ \\
		$C_p$ & Random variable for disambiguation costs of path $p$ \\
		$\pi^*$ & Optimal policy \\
		$T$ & Random arrival time at goal \\
		\hline
	\end{tabular}
    \label{tab:notations}
\end{table}

\begin{figure}[H]
	\centering
	\begin{tabular}{cc}
		\includegraphics[width=0.35\textwidth]{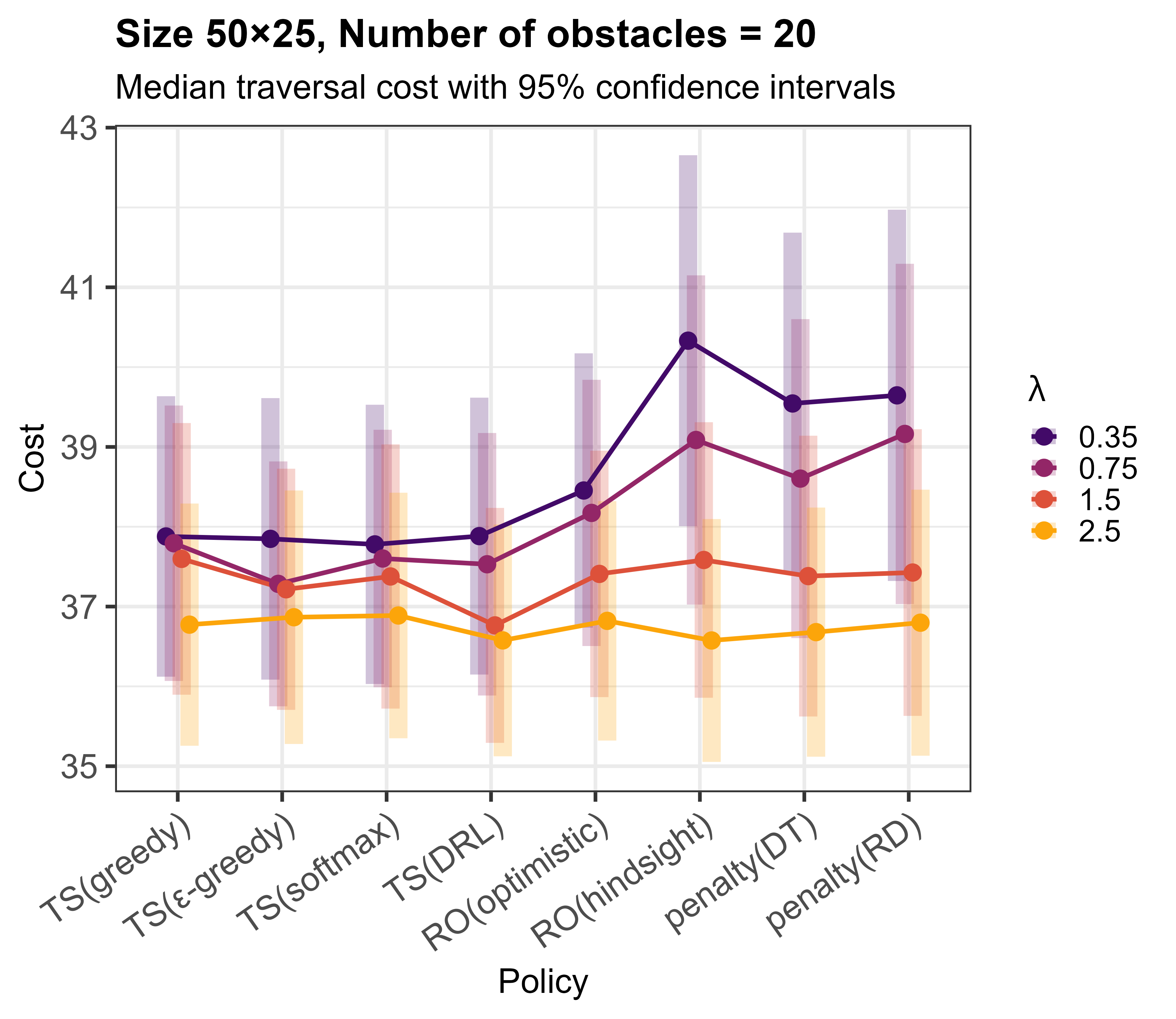} &
		\includegraphics[width=0.35\textwidth]{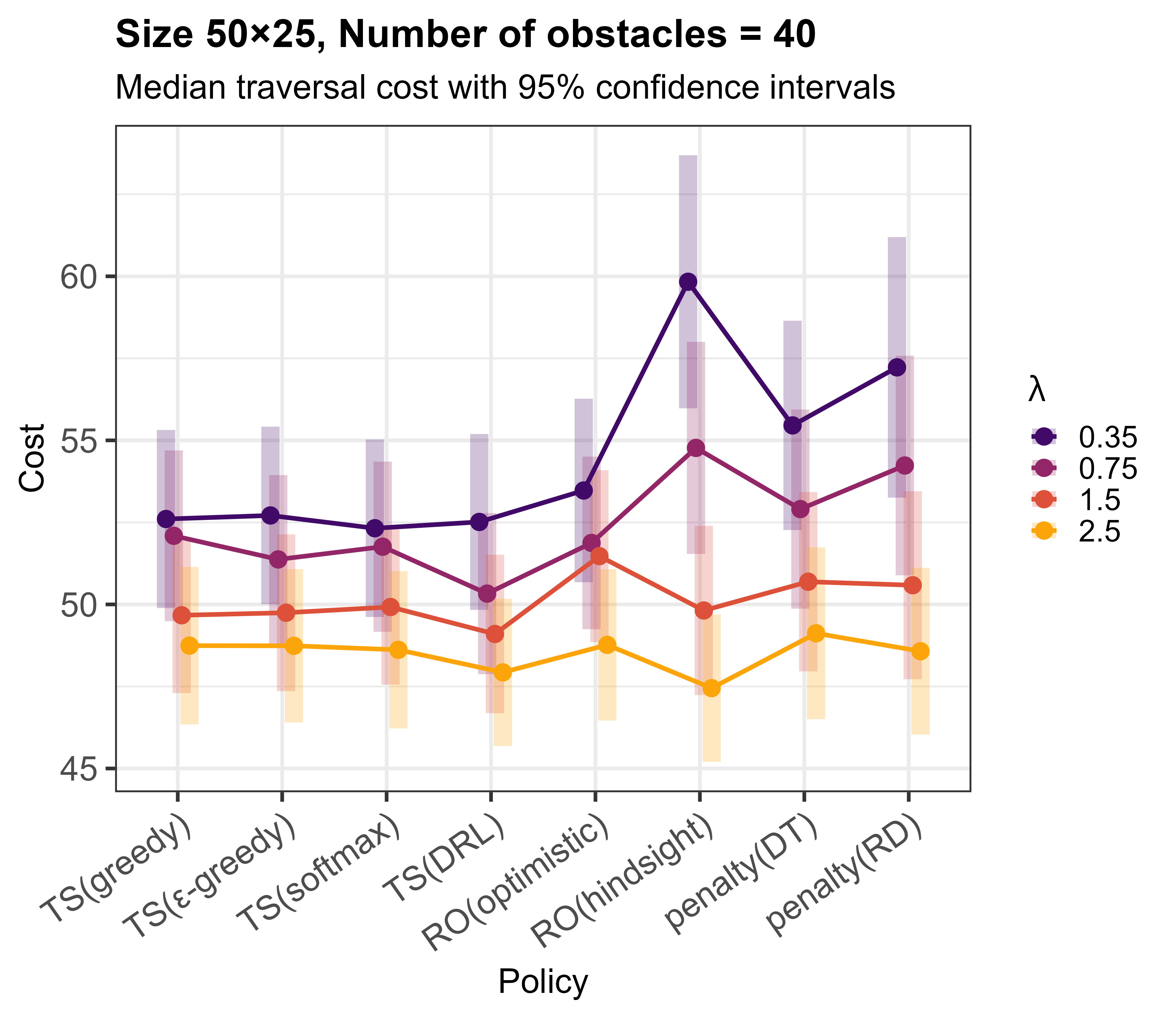} \\
		\includegraphics[width=0.35\textwidth]{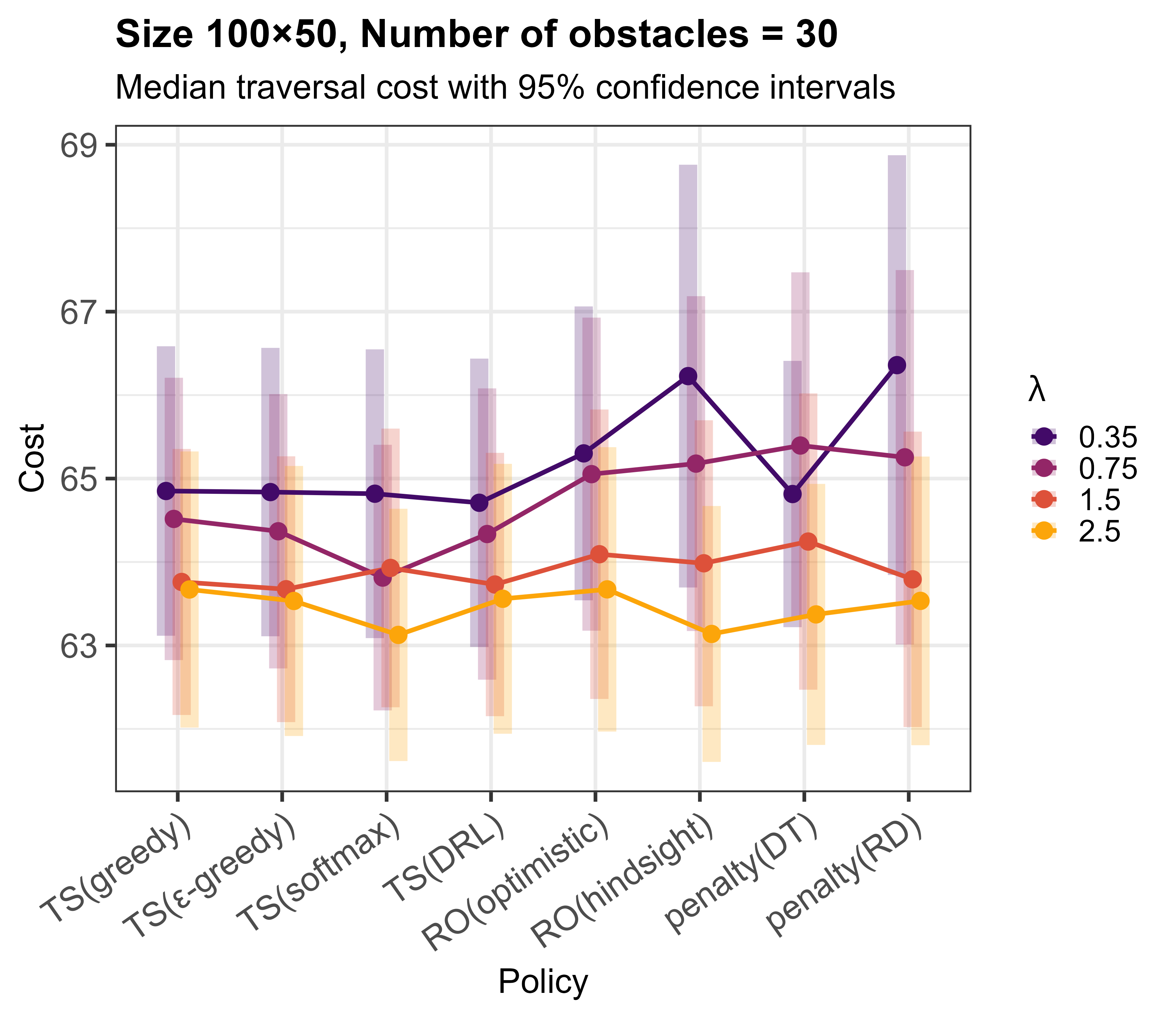} &
		\includegraphics[width=0.35\textwidth]{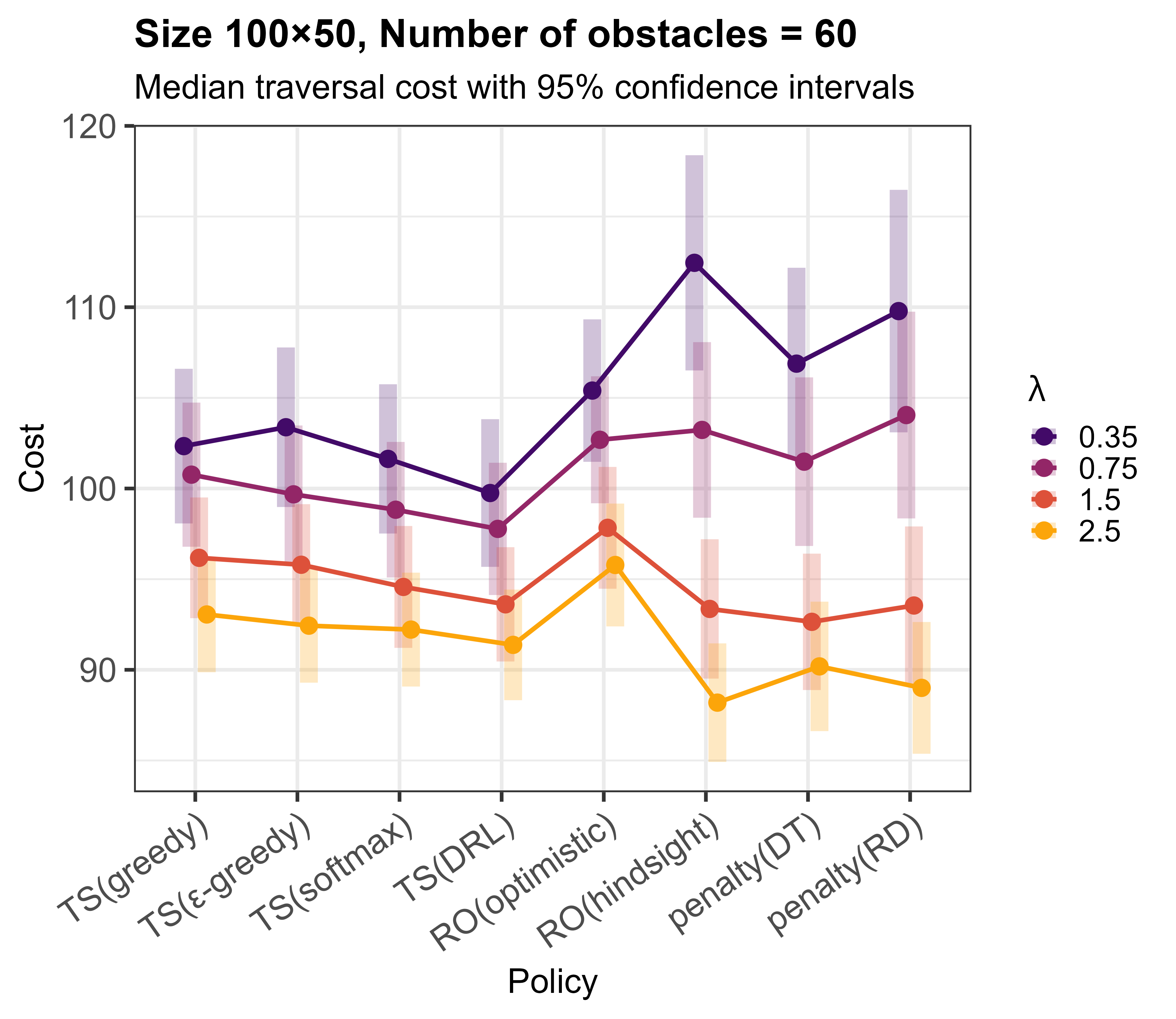}
	\end{tabular}
	\caption{The median traversal cost with 95\% confidence intervals for proposed policy variants and baselines by sensing precision}
	\label{fig: median_cost_lambda}
\end{figure}

\begin{figure}[H]
	\centering
	\begin{tabular}{cc}
		\includegraphics[width=0.35\textwidth]{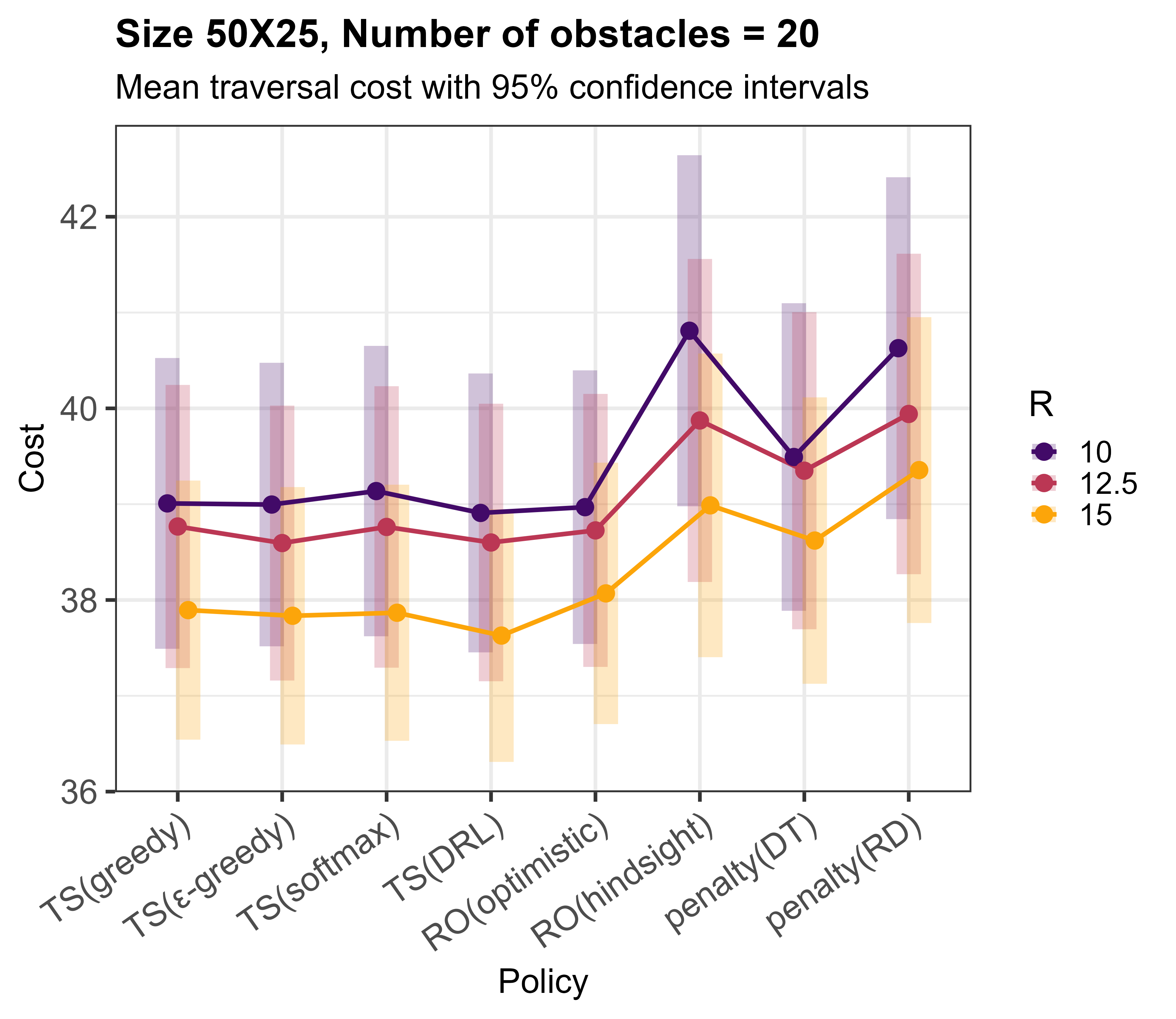} &
		\includegraphics[width=0.35\textwidth]{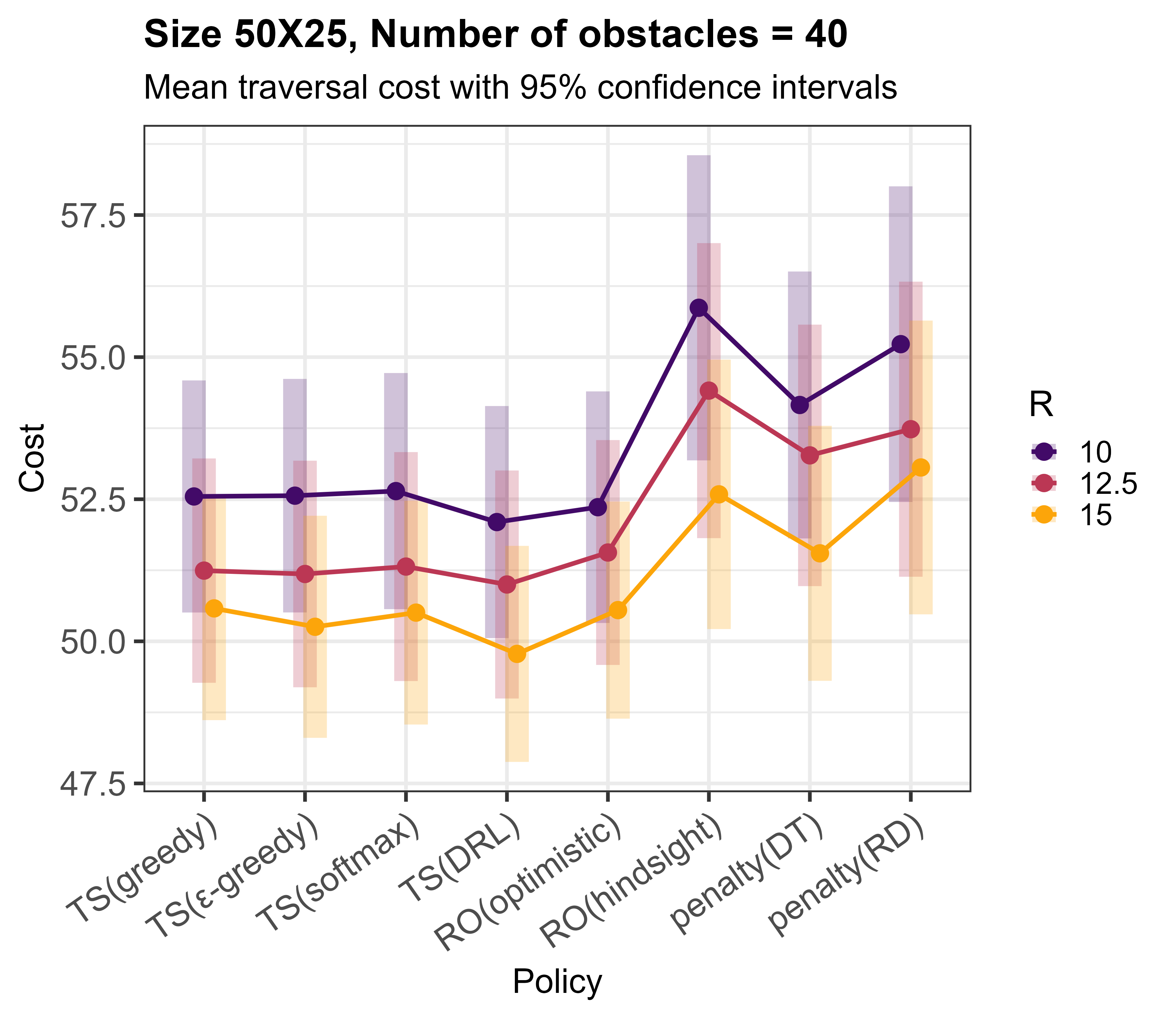} \\
		\includegraphics[width=0.35\textwidth]{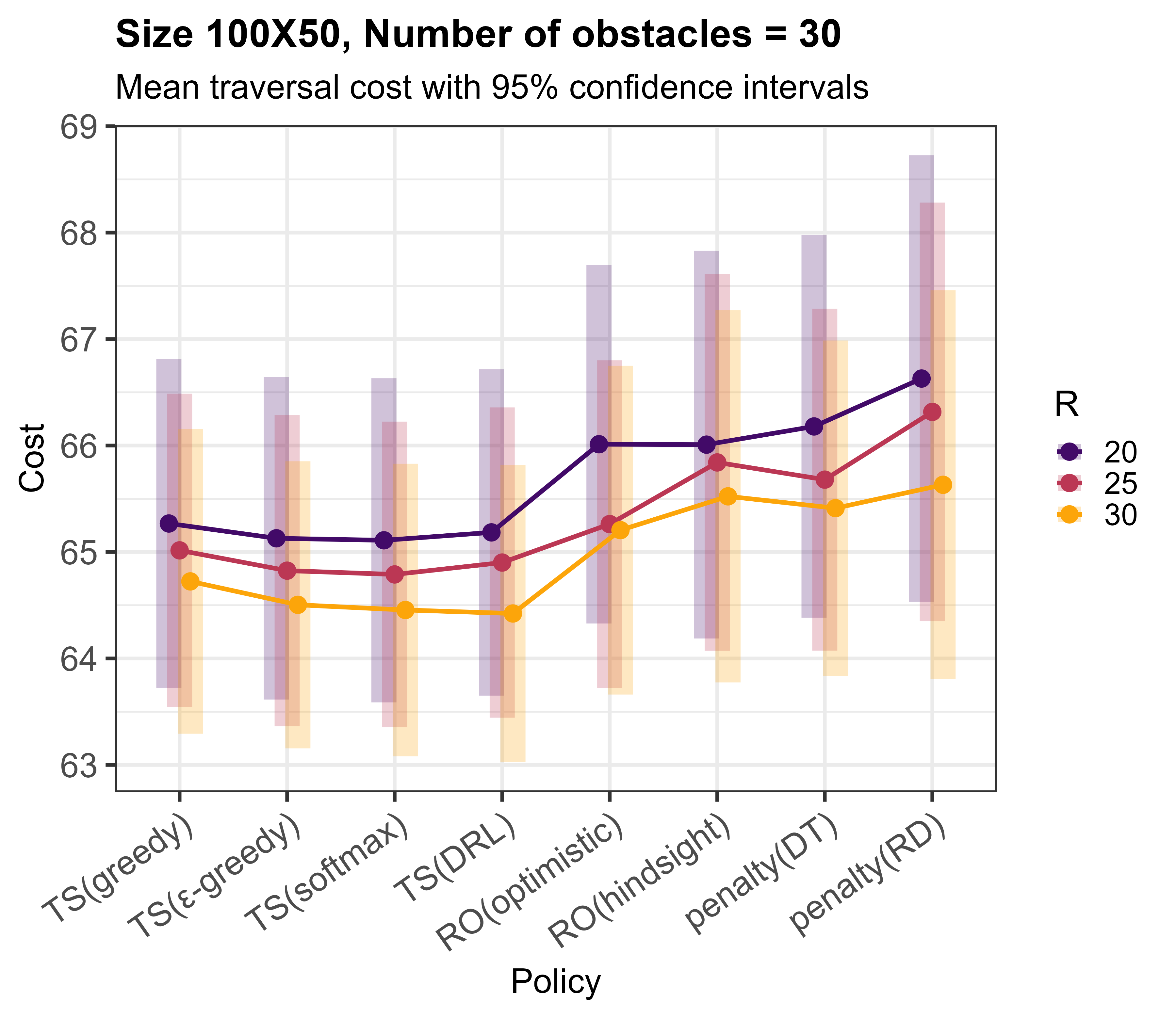} &
		\includegraphics[width=0.35\textwidth]{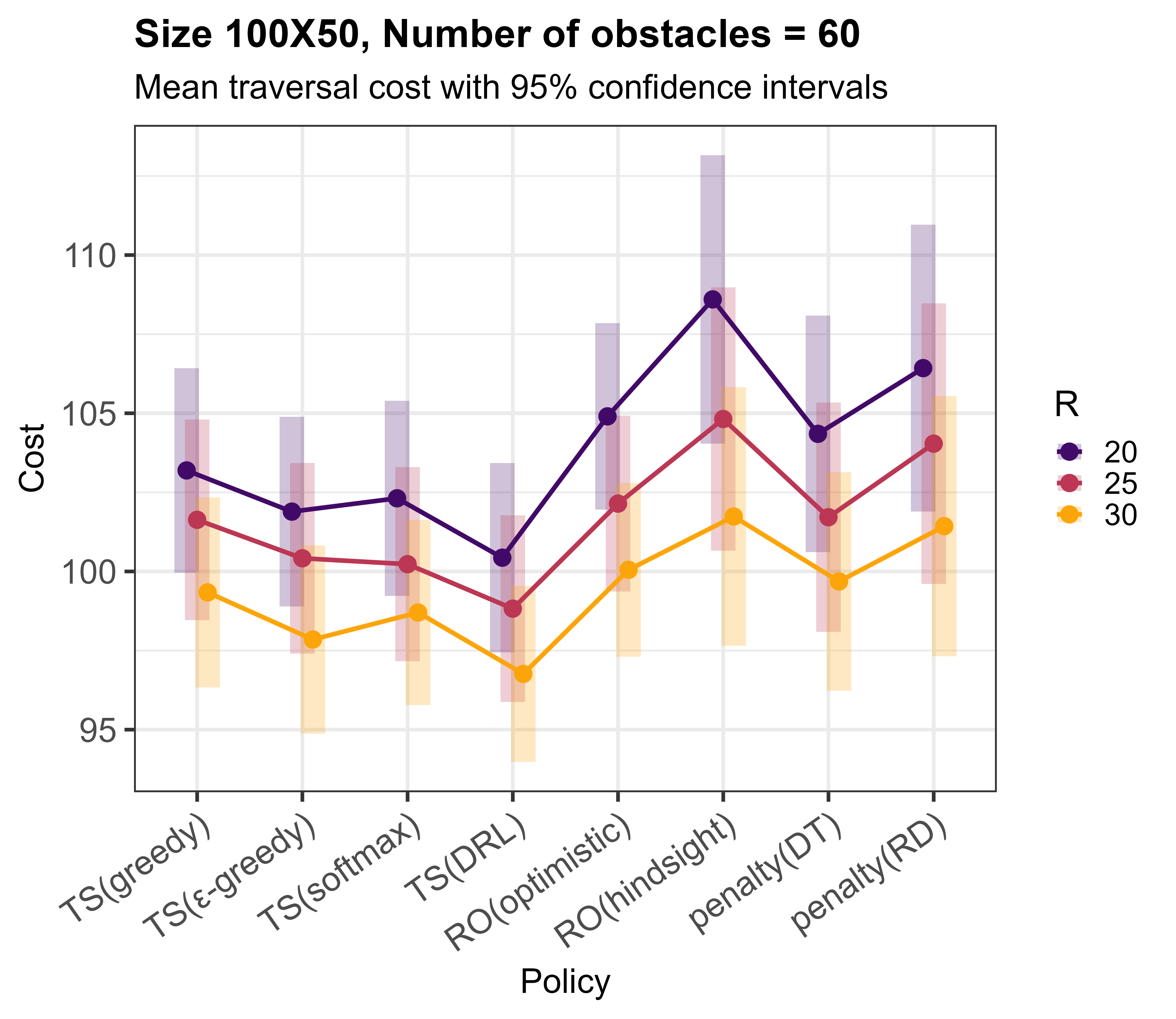}
	\end{tabular}
	\caption{The mean traversal cost with 95\% confidence intervals for proposed policy variants and baselines by sensing ranges}
	\label{fig: avg cost_range}
\end{figure}

\begin{figure}[H]
	\centering
	\begin{tabular}{cc}
		\includegraphics[width=0.35\textwidth]{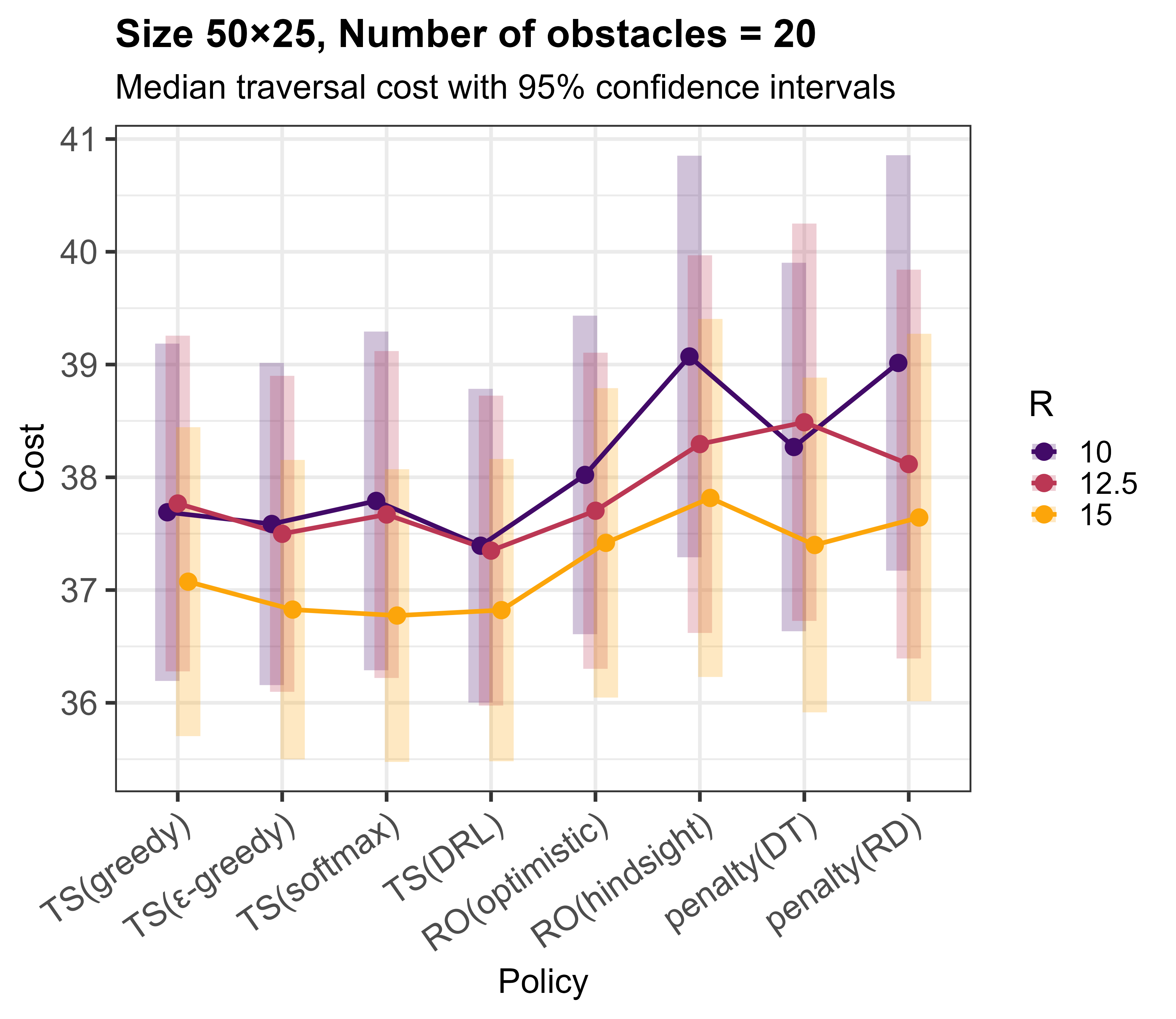} &
		\includegraphics[width=0.35\textwidth]{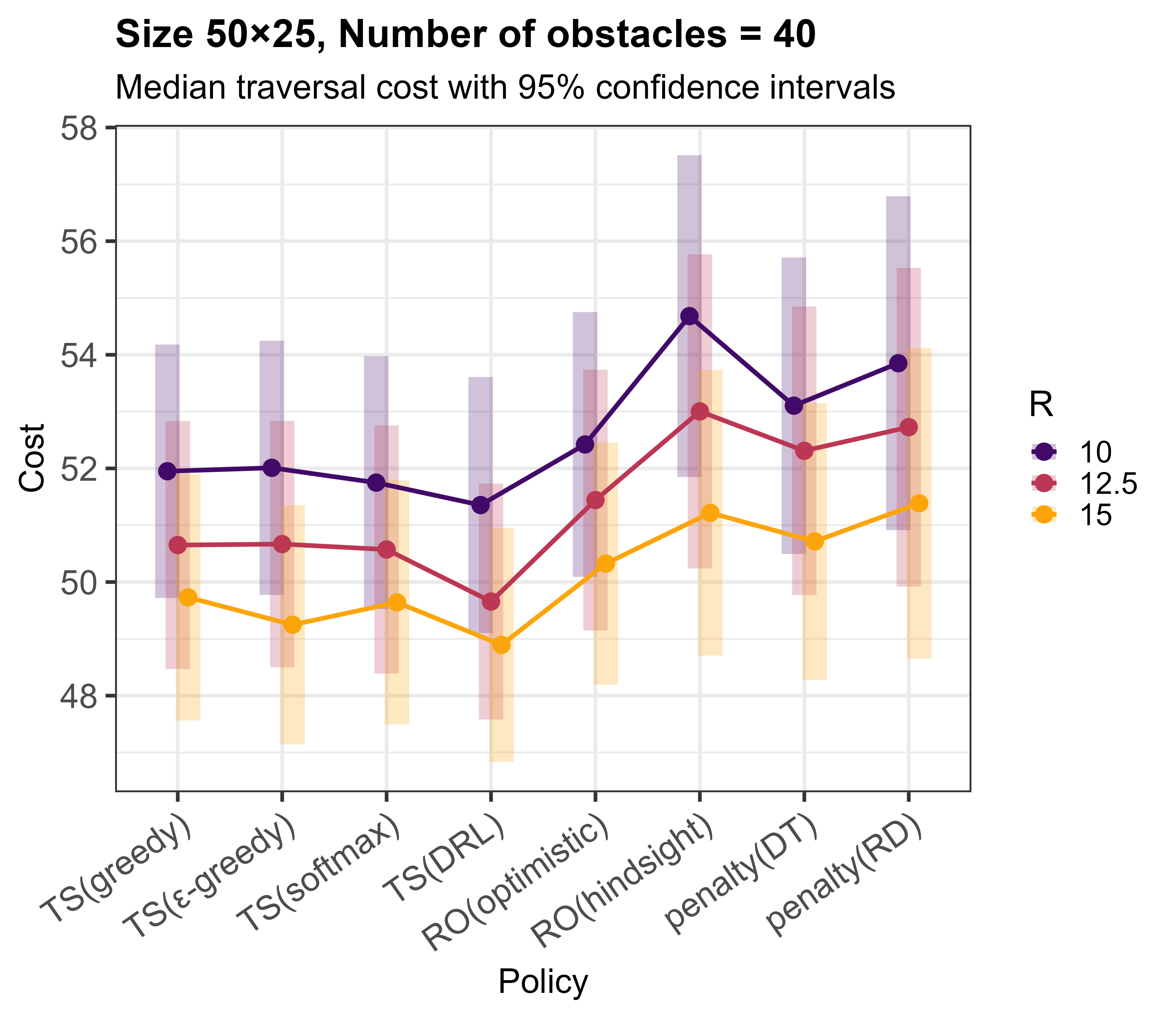} \\
		\includegraphics[width=0.35\textwidth]{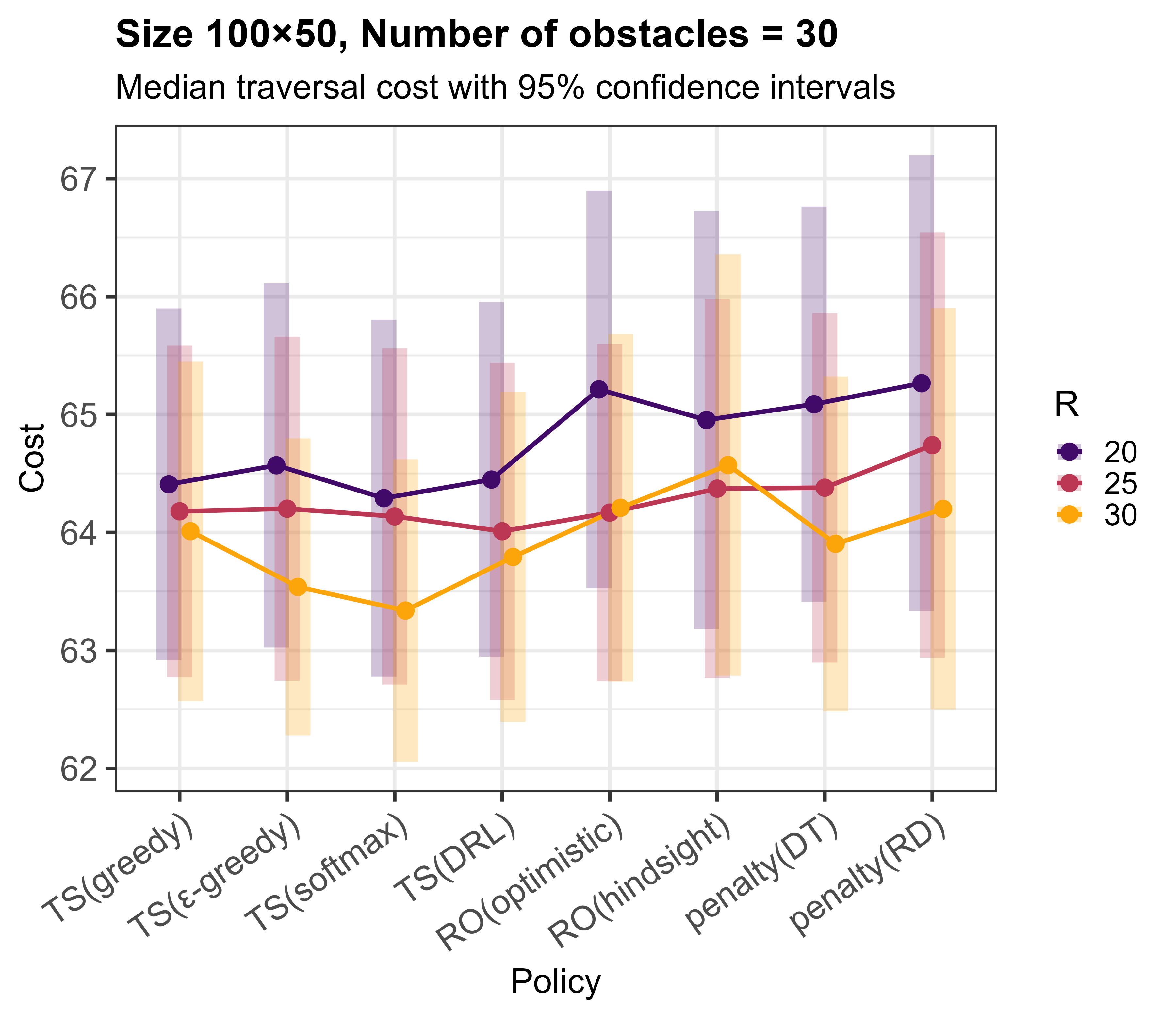} &
		\includegraphics[width=0.35\textwidth]{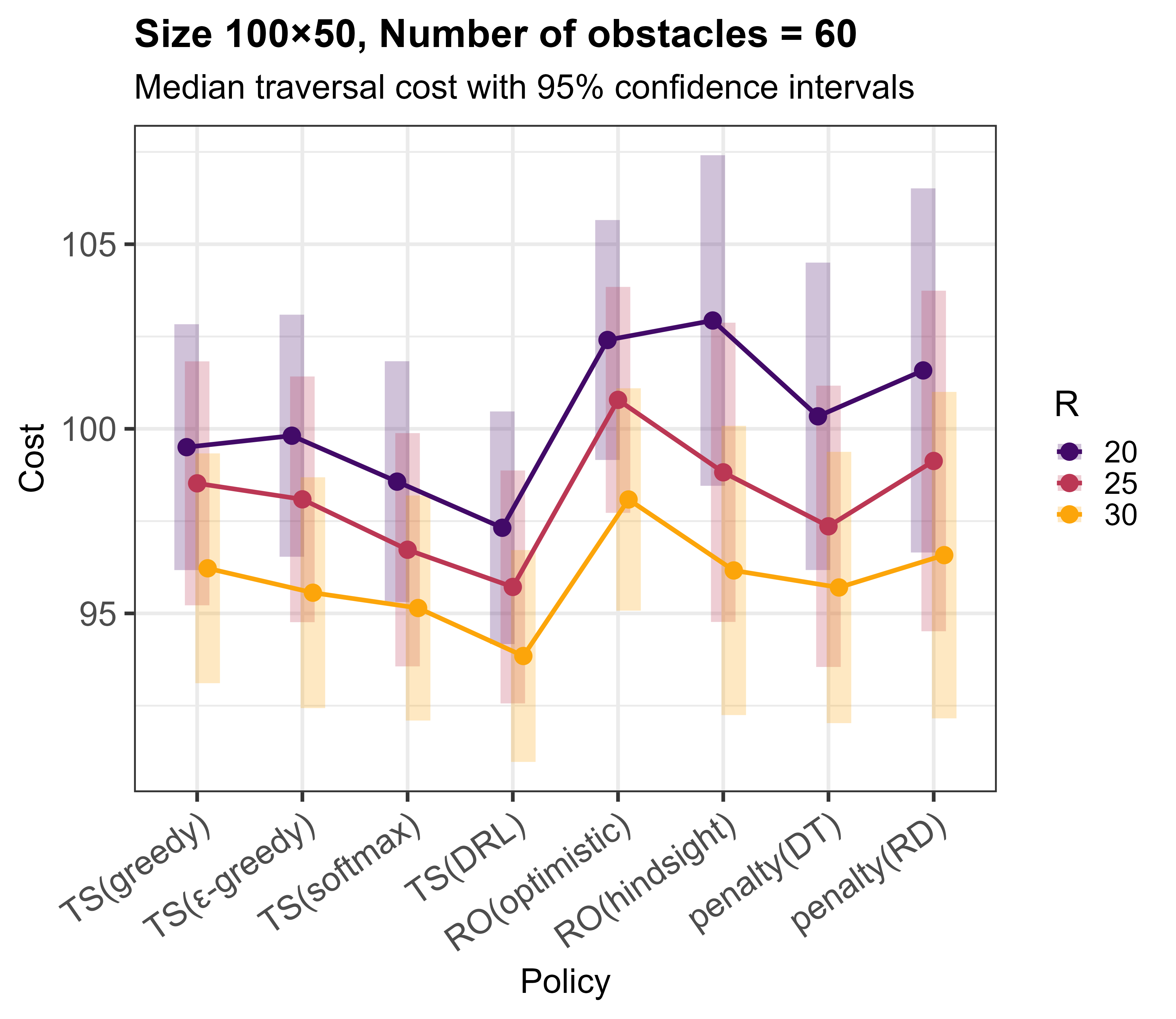}
	\end{tabular}
	\caption{The median traversal cost with 95\% confidence intervals for proposed policy variants and baselines by sensing ranges}
	\label{fig: median_cost_range}
\end{figure}

\begin{figure}[H]
	\centering
	\begin{tabular}{cc}
		\includegraphics[width=0.35\textwidth]{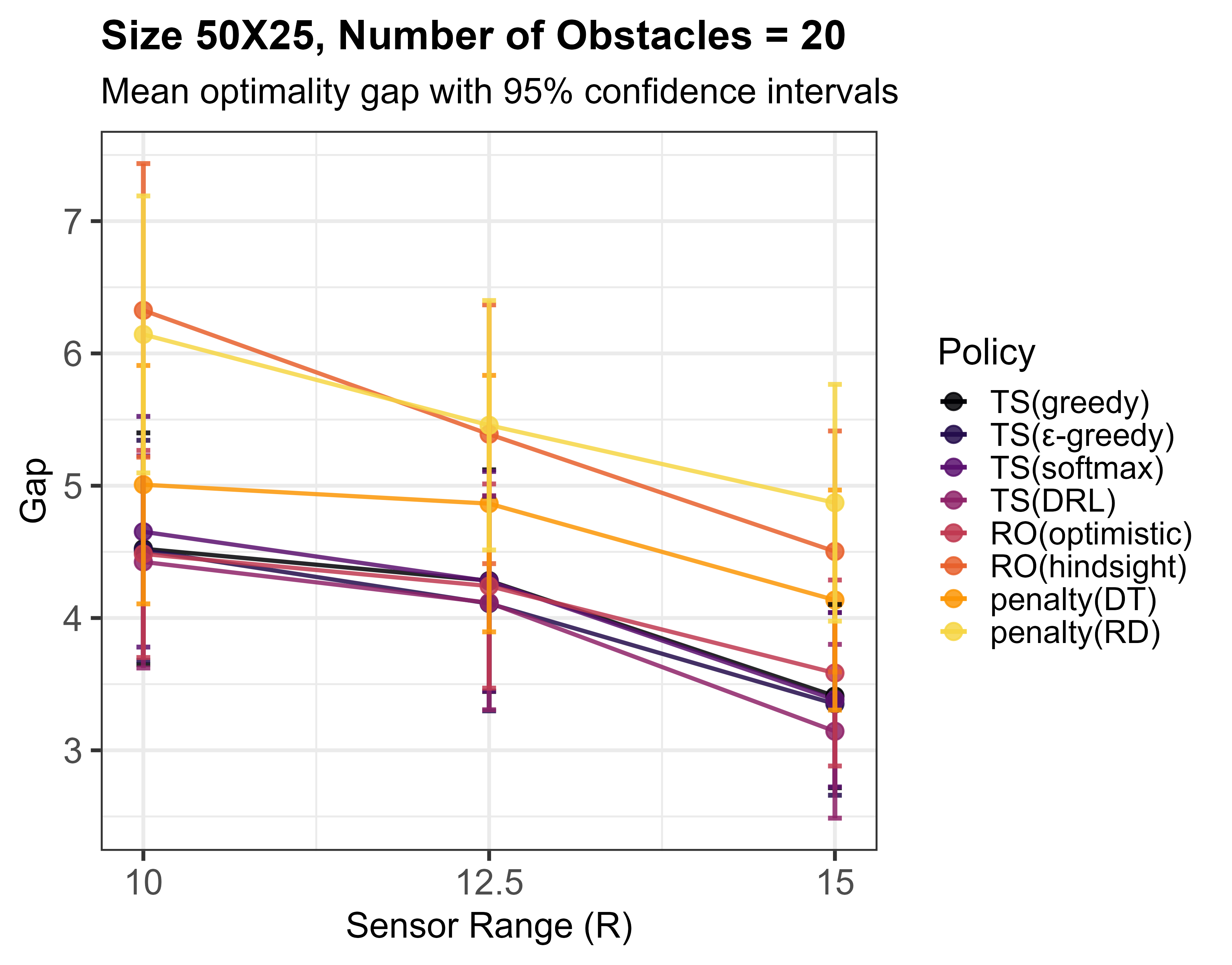} &
		\includegraphics[width=0.35\textwidth]{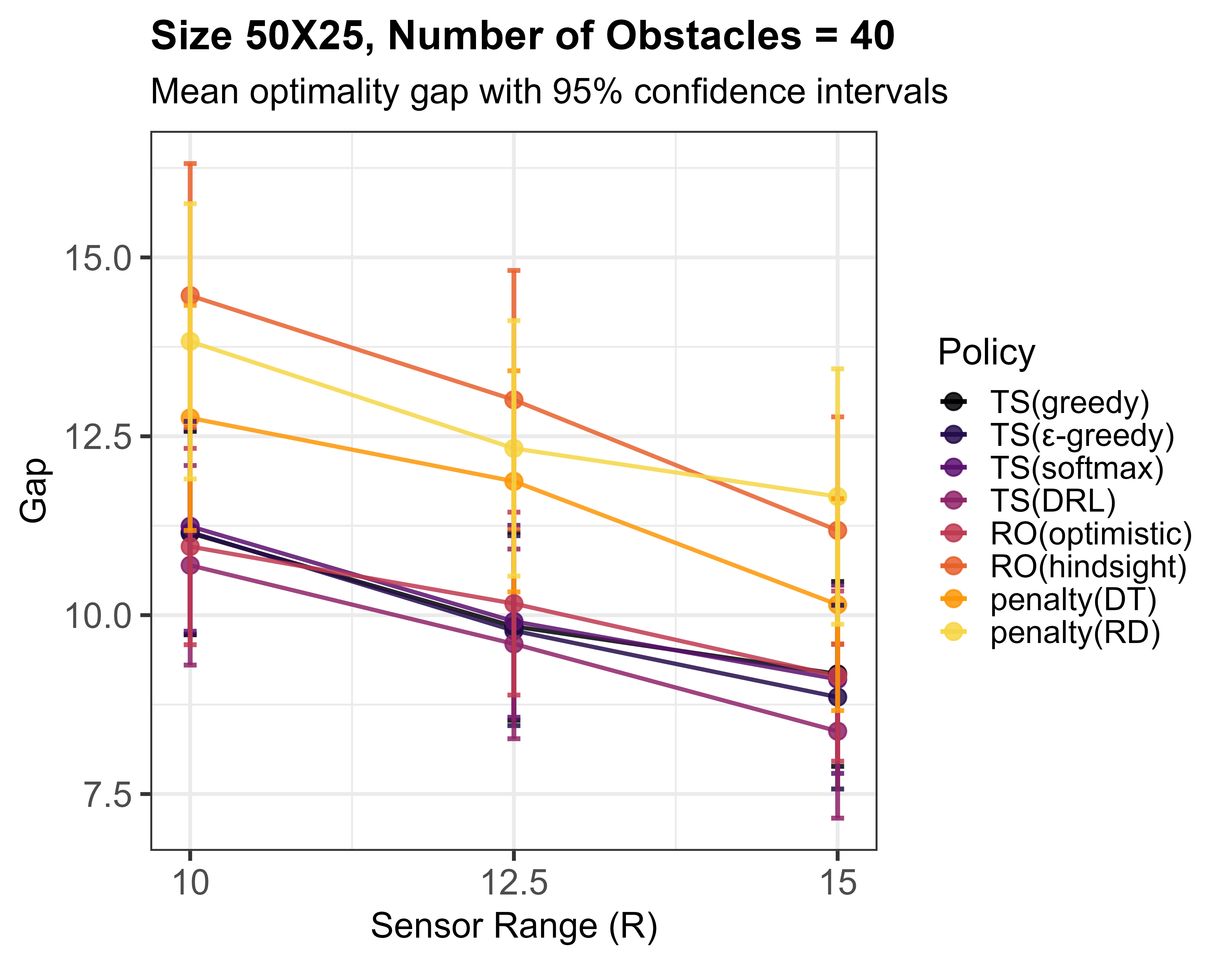}\\
		\includegraphics[width=0.35\textwidth]{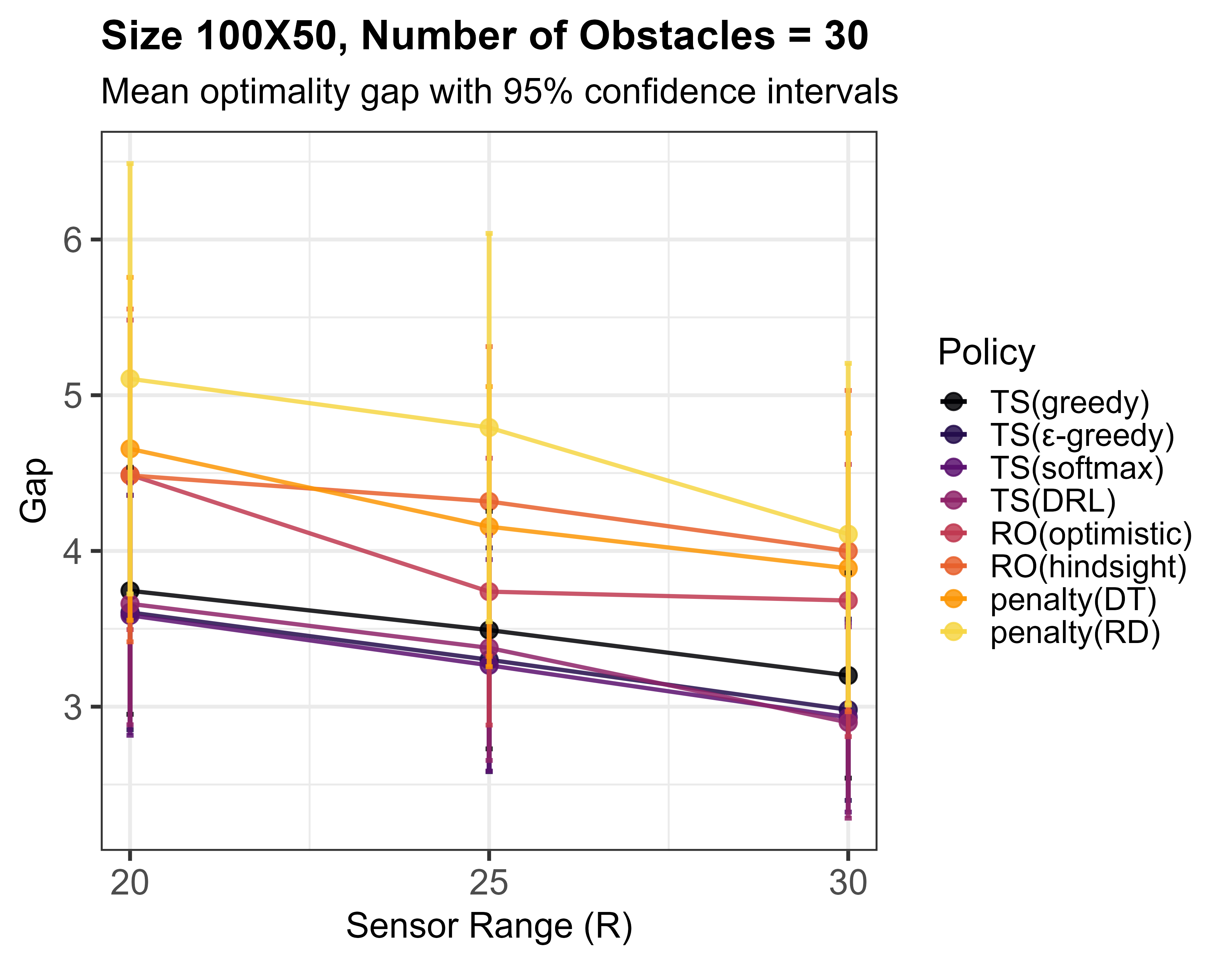} &
		\includegraphics[width=0.35\textwidth]{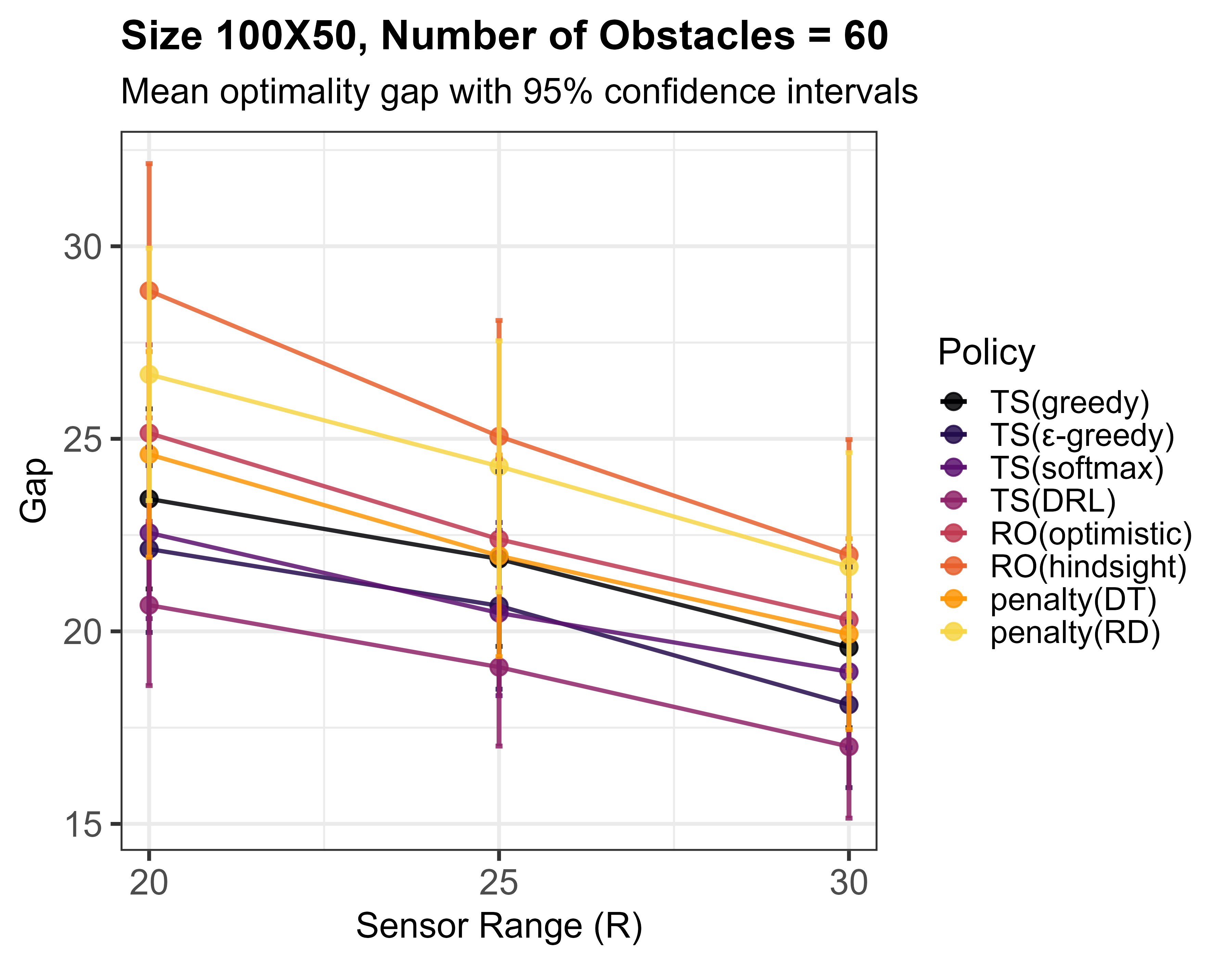}
	\end{tabular}
	\caption{The average deviation from optimal solutions with 95\% confidence intervals for proposed policy variants and baselines by sensing ranges}
	\label{fig: avg deviation_range}
\end{figure}

\clearpage

\end{document}